\documentclass[12pt]{article}

\usepackage{natbib}

\usepackage{etoolbox}
\newtoggle{colt}
\togglefalse{colt}

\usepackage{natbib}
\usepackage{boxedminipage}
\usepackage{multirow,nicefrac}
\usepackage{makecell,upgreek}
\usepackage{footnote}
\usepackage{longtable}
\usepackage{tablefootnote}
\usepackage[T1]{fontenc}
\usepackage{verbatim} 
\usepackage[utf8]{inputenc}

\usepackage{booktabs}

\iftoggle{colt}{}{
	\usepackage[margin=0.75in]{geometry}
\usepackage{subfigure}
\usepackage{algorithm2e}
}

\iftoggle{colt}{
	\usepackage{xcolor}    %
}{
\usepackage[dvipsnames]{xcolor}
}

\usepackage{colortbl}
\definecolor{lightgray}{gray}{0.9}  %

\iftoggle{colt}{}{
\usepackage{amsmath}
\usepackage{amsthm}

\usepackage[breaklinks=true]{hyperref}
\hypersetup{
	colorlinks=true,
	linkcolor=blue,
	citecolor=blue,
	urlcolor=blue
}
}

\iftoggle{colt}{
	
}{

\usepackage[capitalise,nameinlink]{cleveref}
\usepackage{amsfonts,amssymb,mathrsfs}
\usepackage{mathtools}

\usepackage{appendix}

\theoremstyle{plain}
	\newtheorem{theorem}{Theorem}

	\newtheorem{lemma}{Lemma}
	
	\newtheorem{corollary}{Corollary}
	\newtheorem{proposition}{Proposition}

	\theoremstyle{definition}

	\newtheorem{definition}{Definition}

}

\iftoggle{colt}{}{

}

\usepackage{autonum}

\newcommand{\norm}[1]{\left\|#1\right\|}
\newcommand{\abs}[1]{\left|#1\right|}
\newcommand{\inprod}[2]{\left\langle #1, #2 \right\rangle}

\newcommand{\rr}{\mathbb{R}}
\newcommand{\ee}{\mathbb{E}}
\newcommand{\pp}{\mathbb{P}}

\DeclareMathOperator*{\argmin}{argmin}

\newcommand{\poly}{\mathrm{poly}}

\DeclareMathOperator{\reg}{Reg}

\newcommand{\tstar}{t^\star}
\newcommand{\fstar}{f^\star}
\newcommand{\tstarp}{t^{\star'}}
\newcommand{\fstarp}{f^{\star'}}
\newcommand{\Lstar}{L^\star}

\newcommand{\pfref}[1]{Proof of \Cref{#1}}
\newcommand{\bigO}{\mathcal{O}}

\newcommand{\bigOtil}{\widetilde{\bigO}}

\newcommand{\wt}[1]{\widetilde{#1}}

\newcommand{\Omegatil}{\wt{\Omega}}

\newcommand{\Ktil}{\wt{K}}
\newcommand{\ttilstar}{\wt{t}^\star}
\newcommand{\ttilstarp}{\wt{t}^{\star'}}
\newcommand{\fhat}{\widehat{f}}
\newcommand{\yhat}{\widehat{y}}
\newcommand{\fat}{\mathrm{fat}}

\renewcommand{\epsilon}{\varepsilon}

\def\ddefloop#1{\ifx\ddefloop#1\else\ddef{#1}\expandafter\ddefloop\fi}
\def\ddef#1{\expandafter\def\csname bb#1\endcsname{\ensuremath{\mathbb{#1}}}}
\ddefloop ABCDEFGHIJKLMNOPQRSTUVWXYZ\ddefloop

\def\ddefloop#1{\ifx\ddefloop#1\else\ddef{#1}\expandafter\ddefloop\fi}
\def\ddef#1{\expandafter\def\csname frak#1\endcsname{\ensuremath{\mathfrak{#1}}}}
\ddefloop ABCDEFGHIJKLMNOPQRSTUVWXYZ\ddefloop

\def\ddefloop#1{\ifx\ddefloop#1\else\ddef{#1}\expandafter\ddefloop\fi}
\def\ddef#1{\expandafter\def\csname fr#1\endcsname{\ensuremath{\mathfrak{#1}}}}
\ddefloop ABCDEFGHIJKLMNOPQRSTUVWXYZ\ddefloop

\def\ddefloop#1{\ifx\ddefloop#1\else\ddef{#1}\expandafter\ddefloop\fi}
\def\ddef#1{\expandafter\def\csname eul#1\endcsname{\ensuremath{\EuScript{#1}}}}
\ddefloop ABCDEFGHIJKLMNOPQRSTUVWXYZ\ddefloop

\def\ddefloop#1{\ifx\ddefloop#1\else\ddef{#1}\expandafter\ddefloop\fi}
\def\ddef#1{\expandafter\def\csname scr#1\endcsname{\ensuremath{\mathscr{#1}}}}
\ddefloop ABCDEFGHIJKLMNOPQRSTUVWXYZ\ddefloop

\def\ddefloop#1{\ifx\ddefloop#1\else\ddef{#1}\expandafter\ddefloop\fi}
\def\ddef#1{\expandafter\def\csname b#1\endcsname{\ensuremath{\mathbf{#1}}}}
\ddefloop ABCDEGHIJKLMNOPQRSTUVWXYZ\ddefloop

\def\ddefloop#1{\ifx\ddefloop#1\else\ddef{#1}\expandafter\ddefloop\fi}
\def\ddef#1{\expandafter\def\csname bhat#1\endcsname{\ensuremath{\hat{\mathbf{#1}}}}}
\ddefloop ABCDEFGHIJKLMNOPQRSTUVWXYZ\ddefloop

\def\ddefloop#1{\ifx\ddefloop#1\else\ddef{#1}\expandafter\ddefloop\fi}
\def\ddef#1{\expandafter\def\csname btil#1\endcsname{\ensuremath{\tilde{\mathbf{#1}}}}}
\ddefloop ABCDEFGHIJKLMNOPQRSTUVWXYZ\ddefloop

\def\ddefloop#1{\ifx\ddefloop#1\else\ddef{#1}\expandafter\ddefloop\fi}
\def\ddef#1{\expandafter\def\csname bst#1\endcsname{\ensuremath{\mathbf{#1}^\star}}}
\ddefloop ABCDEFGHIJKLMNOPQRSTUVWXYZ\ddefloop

\def\ddefloop#1{\ifx\ddefloop#1\else\ddef{#1}\expandafter\ddefloop\fi}
\def\ddef#1{\expandafter\def\csname bst#1\endcsname{\ensuremath{\mathbf{#1}^\star}}}
\ddefloop abcdeghijklmnopqrstuvwxyz\ddefloop

\def\ddefloop#1{\ifx\ddefloop#1\else\ddef{#1}\expandafter\ddefloop\fi}
\def\ddef#1{\expandafter\def\csname bhat#1\endcsname{\ensuremath{\hat{\mathbf{#1}}}}}
\ddefloop abcdefghijklmnopqrstuvwxyz\ddefloop

\def\ddefloop#1{\ifx\ddefloop#1\else\ddef{#1}\expandafter\ddefloop\fi}
\def\ddef#1{\expandafter\def\csname b#1\endcsname{\ensuremath{\mathbf{#1}}}}
\ddefloop abcdeghijklnopqrstuvwxyz\ddefloop

\def\ddefloop#1{\ifx\ddefloop#1\else\ddef{#1}\expandafter\ddefloop\fi}
\def\ddef#1{\expandafter\def\csname barb#1\endcsname{\ensuremath{\bar{\mathbf{#1}}}}}
\ddefloop abcdefghijklmnopqrstuvwxyz\ddefloop

\def\ddef#1{\expandafter\def\csname c#1\endcsname{\ensuremath{\mathcal{#1}}}}
\ddefloop ABCDEFGHIJKLMNOPQRSTUVWXYZ\ddefloop
\def\ddef#1{\expandafter\def\csname h#1\endcsname{\ensuremath{\widehat{#1}}}}
\ddefloop ABCDEFGHIJKLMNOPQRSTUVWXYZ\ddefloop
\def\ddef#1{\expandafter\def\csname hc#1\endcsname{\ensuremath{\widehat{\mathcal{#1}}}}}
\ddefloop ABCDEFGHIJKLMNOPQRSTUVWXYZ\ddefloop
\def\ddef#1{\expandafter\def\csname t#1\endcsname{\ensuremath{\widetilde{#1}}}}
\ddefloop ABCDEFGHIJKLMNOPQRSTUVWXYZ\ddefloop
\def\ddef#1{\expandafter\def\csname tc#1\endcsname{\ensuremath{\widetilde{\mathcal{#1}}}}}
\ddefloop ABCDEFGHIJKLMNOPQRSTUVWXYZ\ddefloop

\usepackage[suppress]{color-edits}
\addauthor{ab}{red}
\addauthor{as}{cyan}

\title{Small Loss Bounds for Online Learning Separated Function Classes: A Gaussian Process Perspective}
\usepackage{times}

\author{Adam Block \\ Microsoft Research NYC \and Abhishek Shetty \\ MIT}

\date{}

\begin{document}

\maketitle

\begin{abstract}%
    
In order to develop practical and efficient algorithms while circumventing overly pessimistic computational lower bounds, recent work has been interested in developing \emph{oracle-efficient} algorithms in a variety of learning settings.  Two such settings of particular interest are online and differentially private learning.
While seemingly different, these two fields are fundamentally connected by the requirement that successful algorithms in each case satisfy stability guarantees; in particular, recent work has demonstrated that algorithms for online learning whose performance adapts to beneficial problem instances, attaining the so-called \emph{small-loss bounds}, require a form of stability similar to that of differential privacy.  In this work, we identify the crucial role that \emph{separation} plays in allowing oracle-efficient algorithms to achieve this strong stability.  Our notion, which we term $\rho$-separation, generalizes and unifies several previous approaches to enforcing this strong stability, including the existence of small-separator sets and the recent notion of $\gamma$-approximability.  We present an oracle-efficient algorithm that is capable of achieving small-loss bounds with improved rates in greater generality than previous work, as well as a variant for differentially private learning that attains optimal rates, again under our separation condition.  In so doing, we prove a new stability result for minimizers of a Gaussian process that strengthens and generalizes previous work. \looseness=-1

\end{abstract}

\section{Introduction}\label{sec:intro}

\abcomment{
    TODOs:
    \begin{enumerate}
        \item Move algs to top of page and put them in main results section.
        \item Write abstract.
        \item Polish body.
        \item Add related work
        \item @Abhishek Add example of separated class that we may care about (signals processing/tight frames/spectral condition/etc?)
    \end{enumerate}
}

In order to satisfy modern resource constraints it is important to develop computationally efficient algorithms for learning.  Unfortunately, many function classes of interest are computationally hard to learn, which prompts the search for algorithms that make efficient use of computational primitives, or oracle, that have heuristically motivated and practical implementation in settings of interest.  To wit, a common such primitive in learning is the notion of an Empirical Risk Minimization (ERM) Oracle, which grants one the ability to efficiently find a function that minimizes the empirical loss of some loss function on a given data set.  This oracle is motivated both by it sufficiency for classical learning \citep[Chapter 4]{shai} and its practical instantiation in a variety of settings of interest, such as deep neural networks \citep{lecun2015deep}.  
While a single call to an ERM oracle on independent data suffices for classical learning, there are many alternative settings where the extent to which this oracle can aid learning is less well understood; two examples of such are online and differentially private learning.

In online learning, a learner receives adversarially generated data points one in a series of $T$ rounds and, at each time step $t$, makes a prediction; the learner attempts to minimize his regret to the best function in a given class $\cF$ viewed in hindsight \citep{cesa2006prediction}.  
In addition to being an important learning setting in its own right, online learning algorithms are an important primitive in a number of other applications including sequential decision-making \citep{Lattimore_Szepesvári_2020}, algorithm design \citep{MWU}, learning in games \citep{cesa2006prediction}, and fairness \citep{haghtalab2023unifying}. 
While online learning is separated from classical learning by removing assumptions on the data, differentially private learning \citep{DworkMNS06} allows independent data but forces the algorithm to satisfy strong stability constraints that ensure that release of a trained model does not leak private information about the data on which it is trained.  Due to the increasing deployment of machine learning models, often trained on sensitive customer data, privacy has seen wide application in many fields \citep{dwork2014algorithmic} and has even been adopted by the US Census \citep{abowdcensus}.  While a priori these two settings may seem quite different, in fact they are intimately related because successful algorithms in both settings require some degree of \emph{stability}.  Indeed, a formal example of this connection is the fact that the combinatorial notion of complexity known as the \emph{Littlestone dimension} \citep{littlestone1988learning,ben2009agnostic} characterizes learnability in both settings \citep{AlonLMM19,bun2020equivalence}.

Another way in which online and differentially private learning are connected is in the study of small loss (or $\Lstar$) bounds in online learning \citep[Section 2.4]{cesa2006prediction}.
Typically, one expects regret for online learning to scale like $\bigO(\sqrt{T})$ in the worst case.  Often, however, one may expect $\cF$ to be chosen so that the $f \in \cF$ performing best in hindsight has total loss $\Lstar = o(T)$; in such cases, one may hope for algorithms  whose regret scales with $\Lstar$ instead of with $T$, automatically adapting to the beneficial problem instance.  Prior work \citep{hutter2004prediction,abernethy2019online,wang2022adaptive} has identified differential privacy as an essential ingredient to proving $\Lstar$ bounds for online learning for a family of algorithms conducive to the application of an ERM oracle, but the generality in which one can efficiently achieve such guarantees are unknown.  \looseness-1

In this work, we identify a new condition on function classes, $\rho$\emph{-separation} (\Cref{def:rho_separating}), that suffices to ensure oracle-efficient online and differentially private learning.  Our condition unifies and generalizes several earlier approaches including small separator sets \cite{dudik2020oracle,syrgkanis2016efficient} and $\gamma$-approximability \citep{wang2022adaptive}.  The notion of $\rho$-separation requires that functions in $\cF$ are sufficiently distinguishable in the sense that distinct functions are not too close to each other in $L^2(\mu)$ for some measure $\mu$; intuitively, this separation helps `boost' stability in an $L^2$ sense to differential privacy.  Using $\rho$-separation, we demonstrate the following:
\begin{itemize}
    \item We present an oracle-efficient algorithm for online learning (\Cref{thm:lstar_online}) that attains a small-loss ($\Lstar$) bounds whenever the function class is $\rho$-separated.  We compare our regret to that of alternative oracle-efficient algorithms and find that our approach generalizes and, in many settings, improves upon existing methods.
    \item We show that a variant of the algorithm used for online learning is capable of differentially private PAC-learning with optimal rates whenever $\cF$ is $\rho$-separated, improving on existing results in many cases.
\end{itemize}
In the course of our analysis, we prove a new stability bound for minimizers of Gaussian Processes (\Cref{lem:gp_stability_cor}) that stengthens prior such bounds both in the precise notion of stability controlled and the generality of the result itself.  In \Cref{sec:prelims} we state a number of prerequisite notions from learning theory as well as provide a formal definition of $\rho$-separation.  In \Cref{sec:main}, we provide rigorous statements of our main results as well as the oracle-efficient algorithms achieving the stated bounds; we conclude in \Cref{sec:gp_stability} by providing the main ideas behind our proofs.  We defer an extended discussion of related work to \Cref{app:related_work}.

\section{Prequisite Notions and $\rho$-Separation}\label{sec:prelims}

We now recall prerequisite notions, define the main problems of study, and introduce the fundamental assumption underlying our work: $\rho$-separation.

\subsection{Notions from Learning Theory}\label{ssec:prelims_learning}
Recall that the fundamental premise of learning theory is to assume the learner has access to some function class $\cF: \cX \to \cY$ mapping contexts $\cX$ to labels $\cY$, where we always consider $\cY \subset [-1,1]$ in this work.  Given a loss function $\ell: \cY \times \cY \to \rr_{\geq 0}$, the goal of the learner is to produce labels with minimal loss relative to the benchmark function class $\cF$.  A standard formal instantiation of this idea is the PAC-learning framework \cite{vapnik1964class,valiant1984theory}. 

\begin{definition}[PAC Learning]\label{def:pac}
    Given n independent and identically distributed data points $(X_i, Y_i)$ and a loss function $\ell$, an algorithm $\fhat$ depending on these data is said to $(\alpha, \beta)$-learn if for independent $(X, Y)$ it holds that $\pp\left( \ell(\fhat(X), Y) \geq \alpha + \inf_{f \in \cF} \ell(f(X), Y) \right) \leq \beta$.
    The number of samples required to $(\alpha, \beta)$-learn is referred to as the sample complexity.
\end{definition}

There are a wide variety of complexity measures of the function class, that dictate the sample  considered. 
In this work, we will work with the Gaussian complexity. 
\begin{definition}[Gaussian Complexity]
    Let \(\mathcal{F}\) be a class of functions from \(\mathcal{X}\) to \(\mathbb{R}\). For a fixed sample \(S = \{x_1, x_2, \ldots, x_n\} \subseteq \mathcal{X}\), the \emph{empirical Gaussian complexity} of \(\mathcal{F}\) is defined as \iftoggle{colt}{$\mathcal{G}_S(\mathcal{F}) = \mathbb{E}_{\xi}\left[ \sup_{f \in \mathcal{F}} \frac{1}{ \sqrt{n}} \sum_{i=1}^n \xi_i\, f(x_i) \right]$,}{
\begin{align}
    \mathcal{G}_S(\mathcal{F}) = \mathbb{E}_{\xi}\left[ \sup_{f \in \mathcal{F}} \frac{1}{ \sqrt{n}} \sum_{i=1}^n \xi_i\, f(x_i) \right],
\end{align}
    }
    where \(\gamma_1, \ldots, \gamma_n \sim \mathcal{N}(0,1)\) are independent standard Gaussians. 
\end{definition}
The Gaussian complexity of a function class is related to a number of natural notions of function class size.  
For example, if $\cF$ is finite, then $\cG_S(\cF) \lesssim \sqrt{\log(\abs{\cF})}$ \citep[Theorem 2.5]{boucheron2013concentration}. 
Furthermore, the Gaussian complexity is bounded by the square root of the \emph{VC dimension} \citep[Theorem 6]{bartlett2002rademacher}, which is the canonical measure of complexity in binary classification and is defined as follows \citep{shai}. 

\begin{definition}[VC Dimension] 
    Let $\mathcal{F}$ be a class of functions from a domain $\mathcal{X}$ to $\{0, 1\}$. 
    We say that $\mathcal{F}$ shatters a set $X \subseteq \mathcal{X}$ if for all $Y \subseteq X$, there exists a function $f \in \mathcal{F}$ such that $f(x) = 1$ if $x \in Y$ and $f(x) = 0$ otherwise.
    The VC dimension of $\mathcal{F}$ is the size of the largest set $X \subseteq \mathcal{X}$ such that there exists a function $f \in \mathcal{F}$ that shatters $X$.
\end{definition}

\paragraph{Online Learning.}
In contradistinction to PAC learning where data are iid and come all at once, in the online learning framework, a learner interacts with an adversarial environment over a sequence of rounds. At round \(t\), the adversary selects a context $x_t$ and a label $y_t$, while the leaner selects a hypothesis \(f_t\) from  some \(\mathcal{F}\); the context and label are then revealed to the learner, who suffers loss $\ell_t(f_t) = \ell(f(x_t), y_t)$, with the goal of minimizing \emph{regret} over $T$ rounds, defined to be
\begin{align}\label{eq:regret_def}
    \reg_T = \sum_{t=1}^T \ell_t(f_t) - \inf_{f \in \mathcal{F}} \sum_{t=1}^T \ell_t(f).
\end{align}
Rates in online learning have been classically established in a number of settings \citep{ben2009agnostic,rakhlin2011online,rakhlin2015sequential} and the minimax rate for simple function classes typically scales like $O(\sqrt{T})$ along with some complexity parameter of the class $\cF$.  Unfortunately, fully adversarial online learning is known to be strictly harder than PAC learning and its difficulty is characterized by the \emph{Littlestone dimension} \citep{littlestone1988learning,ben2009agnostic} as opposed to the Gaussian complexity or VC dimension above.   On the other hand, in some settings the minimax $\sqrt{T}$ regret scaling may be overly pessimistic.  For example, if the best hypothesis suffers a small loss $\Lstar = \inf_f \sum_{t = 1}^T \ell_t(f)$, e.g. in the realizable setting where $\Lstar=0$, we may hope to get algorithms whose regret scales only \emph{logarithmically} in $T$ and polynomially in $\Lstar$, which is a vast improvement over the minimax framework.  One of the main contributions of this work is to introduce an \emph{efficient} algorithm capable of achieving this rate whenever $\cF$ satisfies a separation condition discussed in the sequel.  We now discuss the other primary learning setting considered here.

\paragraph{Differential Privacy.}  While online learning makes success more difficult by removing assumptions on the data, in differential privacy, we place additional requirements on the algorithm.  
Motivated by data-privacy concerns, we consider the following notion of privacy \citep{DworkMNS06}. 
\begin{definition}[Differential Privacy]
    An algorithm $\mathcal{A}$ is said to satisfy $(\epsilon, \delta)$ (approximate) differential privacy  if for all datasets $S_1, S_2$ differing in a single element, and all measurable sets $A$ in the output space, we have
    \begin{align}
        \pp\left(\mathcal{A}(S_1) \in A\right) \leq e^\epsilon \pp\left(\mathcal{A}(S_2) \in A\right) + \delta.
    \end{align}
    If $\delta = 0$, we say that $\mathcal{A}$ satisfies $\epsilon$ (pure) differential privacy.
\end{definition}
The goal of private learning is to design an algorithm that is both an $(\alpha, \beta)$-learner and is $(\epsilon, \delta)$-differentially private with the number of samples scaling as $\poly\left( 1/\alpha, 1/\epsilon, \log(1/\beta), \log(1/\delta) \right)$. 
Recent work has demonstrated that differentially private learning is intimately connected with online learning in that the ability to succeed in either framework is characterized not by the Littlestone dimension mentioned above \citep{AlonLMM19} and $\Lstar$ bounds in online learning arise from private online algorithms \citep{hutter2004prediction,abernethy2019online,wang2022adaptive}.

\paragraph{Oracle Efficiency.}
    In both the online and private learning frameworks, due to practical resource constraints, we are interested in computationally efficient algorithms; unfortunately, for most cases of interest, learning is computationally intractable even for simple concept class such as halfspaces (see \citep{tiegel2022hardness} and references therein).
    To circumvent pessimistic fact and  develop theory that incorporate the success of practical heuristics, we consider the notion of oracle efficiency \citep{kalai2005efficient}.  In aprticular, we suppose that the learner has access to an \emph{Empirical Risk Minimization Oracle} for the class $\cF$ such that, given loss functions $\ell_1, \dots, \ell_k$ and data points $(x_1, y_1), \dots, (x_k, y_k)$ can efficiently return the minimizer of the cumulative loss:
    \begin{align}
        \fhat \in \argmin_{f \in \cF} \sum_{ i = 1}^k \ell_i(f(x_i), y_i).
    \end{align}
    The time complexity of a given algorithm is the number of oracle calls the algorithm makes multiplied by the number of inputs to each oracle call.  Our algorithms consider the special case where $k = n + m$, the $\ell_1 = \cdots = \ell_n = \ell$ and for $n < i \leq n + m$, $\ell_i(f(x_i), y_i) = \xi_i f(x_i)$ for $\xi_i$ a  gaussian. \looseness-1

\paragraph{Additional Notation.}  We denote by $\norm{\cdot}_m$ the empirical $L^2$ norm on $m$ points, assumed to be given and $\norm{\cdot}_1$, $\norm{\cdot}_2$ the (unnomralized) $\ell^1$ and Euclidean norms on $\rr^m$.  We use $f = \bigO(g)$ to say that $f/g$ is bounded and $f = \bigOtil(g) $ to mean that $f/g$ is bounded by polylogarithmic factors of the inputs to $f,g$; we use $f \lesssim g$ if $f = \bigO(g)$.

\subsection{Key Assumption: $\rho$-Separation}

    In this section, we will introduce the key assumption underlying our work: $\rho$-separation.

    \begin{definition}\label{def:rho_separating}
        Let $\cF: \cX \to \cY$ be a function class.  We call a measure $\mu$ on $\cX$ a $\rho$\emph{-separating measure} for $\cF$ if for all $f, g \in \cF$ such that $f \neq g$, we have $\norm{f-g}_{L^2(\mu)} \geq \rho$.  We abuse notation and say a set of points $\left\{ z_1, \dots, z_m \right\} \subset \cX$ is $\rho$-separating if the empirical measure is $\rho$-separating.
    \end{definition}
    In essence, $\rho$-separation ensures that no two functions in $\cF$ are too close together, which allows boosting stability in $L_2$ into differential privacy guarantees.  To gain intuition, note that $\rho$-separation is implied for finite function classes by a natural spectral property of the function class.
    \begin{proposition}\label{prop:singular_value}
        Let $\cF: \cX \to \rr$ be a finite function class, let $z_1, \dots, z_m \in \cX$ be points, and consider the matrix $A \in \rr^{m \times \abs{\cF}}$ whose columns are $(f(z_i))$.  Letting $\sigma$ be the minimal singular value of $A$, we have that $\cF$ is $\rho$-separated by the empirical measure on $\cX$ with $\rho = \sigma \sqrt{2/m}$.
    \end{proposition}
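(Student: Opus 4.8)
The plan is to reduce the $L^2(\mu)$ distance between two functions in $\cF$ to the Euclidean distance between the corresponding columns of $A$, and then apply the variational characterization of the minimal singular value. Write $\mu$ for the empirical measure on $\{z_1, \dots, z_m\}$, and for each $f \in \cF$ let $a_f \defeq (f(z_1), \dots, f(z_m))^\top \in \rr^m$ be the associated column of $A$. The key observation is the identity
\begin{align}
    \norm{f - g}_{L^2(\mu)}^2 = \frac{1}{m} \sum_{i=1}^m \bigl(f(z_i) - g(z_i)\bigr)^2 = \frac{1}{m} \norm{a_f - a_g}_2^2,
\end{align}
which holds for any $f, g \in \cF$ directly from the definition of the empirical $L^2$ norm.

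Next I would write $a_f - a_g = A(e_f - e_g)$, where $e_f, e_g \in \rr^{\abs{\cF}}$ are the standard basis vectors indexing the columns of $A$ corresponding to $f$ and $g$ (distinct functions index distinct columns, so this is well-defined). Since $f \neq g$, the vector $v \defeq e_f - e_g$ is nonzero and satisfies $\norm{v}_2^2 = 2$. By the variational characterization $\sigma = \min_{v \neq 0} \norm{Av}_2 / \norm{v}_2$ of the minimal singular value, we have $\norm{Av}_2 \geq \sigma \norm{v}_2$ for every $v$, and in particular $\norm{a_f - a_g}_2 = \norm{A(e_f - e_g)}_2 \geq \sigma\sqrt{2}$. (Note the statement is only non-vacuous when $\sigma > 0$, i.e.\ when the columns of $A$ are linearly independent, which forces $m \geq \abs{\cF}$; when $\sigma = 0$ the conclusion $\rho = 0$ is trivially true.)

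Substituting this lower bound into the identity above yields $\norm{f - g}_{L^2(\mu)}^2 \geq 2\sigma^2/m$, hence $\norm{f - g}_{L^2(\mu)} \geq \sigma\sqrt{2/m} = \rho$ for all distinct $f, g \in \cF$, which is exactly the assertion that $\mu$ is a $\rho$-separating measure in the sense of \Cref{def:rho_separating}. There is no substantive obstacle here: the argument is a two-line computation, and the only point needing care is the bookkeeping that ensures $e_f - e_g \neq 0$ — automatic because the columns of $A$ are indexed by the (finitely many) elements of $\cF$ — so that the minimal singular value bound applies to a genuinely nonzero vector.
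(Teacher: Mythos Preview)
Your proof is correct and follows essentially the same approach as the paper: write the difference of the two evaluation vectors as $A$ applied to a difference of standard basis vectors, apply the variational lower bound $\norm{Av}_2 \geq \sigma\norm{v}_2$, and renormalize by $1/\sqrt{m}$ to pass to the empirical $L^2$ norm. Your write-up is in fact more careful than the paper's sketch (which contains a minor typo placing the basis vectors in $\rr^m$ rather than $\rr^{\abs{\cF}}$), and your remark on the $\sigma = 0$ case is a nice sanity check.
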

    We defer a proof of this result to \Cref{sec:separator_proofs}, which is a simple consequence of the definition of a singular value. 
    Such spectral properties of function classes are appear naturally in signals processing \citep{vetterli2014foundations}, wavelet theory \citep{stephane1999wavelet}, frame theory, and harmonic analysis \citep{benedetto2020harmonic}. 
    Learning with respect such function classes corresponds to the natural problem of recovering the large coefficient in a signal decomposition. 
    As a concrete example, if the function class corresponds to the Fourier basis (which it is easy to see satisfies our separation assumption), the learning problem corresponds to recovering the large Fourier coefficients of a signal. \looseness-1
    
    \ascomment{@Abhishek discuss with applications}

    Furthermore, $\rho$-separation can be seen as a generalization of the notion of a separator set \cite{goldman1993exact}, which has previously been used in the context of oracle efficient learning \citep{dudik2020oracle,DBLP:conf/focs/Neel0W19,block2024oracle}. 

    \begin{definition}[Separator Set]\label{def:separator_set}
        Let $\cF: \cX \to \left\{ \pm 1 \right\}$ be a function class.
        We call a set $S \subseteq \cX$ a separator set for $\cF$ if for all $f , g \in \cF$ such that $f \neq g$, there exists $x \in S$ such that $f(x) \neq g(x)$.
    \end{definition}
    In the special case of binary classification, if $S$ is a separator set then it yields a $\rho$-separating measure with $\rho = 1/\sqrt{\abs{S}}$.  Furthermore, in this special case, while the existence of a separating measure itself does not directly bound the complexity of a function class, under the additional assumption that the function class is learnable,
    we have that $\rho$-separation implies finiteness as well as .
    Further, we can show that the existence of $ \rho $-separating measure implies the existence of a separator set.
    \begin{proposition}\label{lem:separating_measure_sample}
        Let $\cF$ be a learnable, binary-valued function class and suppose there exists some measure $\mu$ that $\rho$-separates $\cF$.  Then $\cF$ is finite and has a separating set of size polynomial in $1/\rho$.
    \end{proposition}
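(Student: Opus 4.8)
The plan is to prove the two conclusions in sequence: first that $\cF$ is finite with $\log\abs{\cF} = \bigO(\vc(\cF)\log(1/\rho))$, and then to extract a separator set of size $\poly(1/\rho)$ by the probabilistic method.

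First I would translate $\rho$-separation into a packing statement. Since $\cF$ is binary-valued, for distinct $f,g$ we have $\norm{f-g}_{L^2(\mu)}^2 = c\,\mu(\{x : f(x)\neq g(x)\})$ for an absolute constant $c$ (namely $c=1$ if $\cY=\{0,1\}$ and $c=4$ if $\cY=\{\pm 1\}$); hence $\rho$-separation says precisely that any two distinct functions disagree on a set of $\mu$-mass at least $\rho^2/c$, i.e.\ $\cF$ is a $(\rho^2/c)$-separated subset of itself in the pseudometric $d_\mu(f,g) = \mu(f\neq g)$. Because $\cF$ is learnable, the fundamental theorem of statistical learning gives $d \defeq \vc(\cF) < \infty$, and then Haussler's packing bound for VC classes --- any $\epsilon$-separated set under any probability measure has size at most $e(d+1)(2e/\epsilon)^d$ --- applied with $\epsilon = \rho^2/c$ yields $\abs{\cF} \le e(d+1)(2ec/\rho^2)^d < \infty$, so in particular $\log\abs{\cF} = \bigO(d\log(1/\rho))$.

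Next I would build the separator set. Draw $z_1,\dots,z_m$ independently from $\mu$ and let $S = \{z_1,\dots,z_m\}$. For a fixed pair of distinct $f,g\in\cF$, the chance that $S$ fails to witness their disagreement is $(1-\mu(f\neq g))^m \le (1-\rho^2/c)^m \le e^{-m\rho^2/c}$. A union bound over the at most $\abs{\cF}^2$ pairs shows $S$ fails to be a separator set with probability at most $\abs{\cF}^2 e^{-m\rho^2/c}$, which is strictly less than $1$ as soon as $m > 2c\rho^{-2}\log\abs{\cF}$. Substituting the bound from the previous step, $m = \bigO(d\rho^{-2}\log(1/\rho))$ suffices; this is polynomial in $1/\rho$, and for such $m$ there is positive probability that $S$ is a separator set, so a separator set of size $\poly(1/\rho)$ exists.

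The hard part is essentially finding the right viewpoint rather than any heavy computation: once $\rho$-separation is read as a packing condition, the only non-elementary ingredient is Haussler's packing bound, which is exactly what upgrades mere learnability (finite VC dimension) into an explicit finite cardinality bound, after which everything is a routine union bound. An alternative that sidesteps quoting Haussler's bound is to run the probabilistic argument against a VC uniform-convergence statement for the disagreement class $\{x\mapsto \mathbf{1}[f(x)\neq g(x)] : f\neq g\in\cF\}$, whose VC dimension is $\bigO(d)$; this proves the separator-set claim directly and recovers finiteness as a corollary, at the cost of slightly heavier machinery.
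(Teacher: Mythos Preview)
Your proof is correct and follows essentially the same route as the paper: establish finiteness by reading $\rho$-separation as a packing condition and invoking a VC-type packing/covering bound, then extract a small separator set via the probabilistic method with a union bound over pairs. The only cosmetic difference is that you cite Haussler's packing bound directly, whereas the paper phrases the first step through covering numbers (observing $\abs{\cF}\le N(\rho,\cF,\mu)$ and bounding the latter via Dudley/Mendelson); the resulting quantitative bounds and the overall structure are the same.
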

    We defer a proof of this result to \Cref{sec:separator_proofs}, and note that it shows that, at least in the binary-valued case, $\rho$-separation and the existence of a small separator set are qualitatively equivalent, although we will see that quantitatively they can lead to different rates.  
    We emphasize that \Cref{lem:separating_measure_sample} implies that if $\cF$ is $\rho$-separated and PAC-learnable, then it is both online- and privately-learnable by finiteness in an information theoretic sense, without regard to computation time.
    The primary focus of our paper is demonstrating that this learning can be achieved (oracle-)efficiently.

\section{Main Results}\label{sec:main}

In this section, we present our main results. 
First, we will look at the small loss regret bounds for oracle efficient algorithms. 
We present nearly optimal regret bounds under the assumption that the function class is well-separated, unifying and improving on prior work.   Second, we will provide an oracle-efficient algorithm capable of achieving differentially private learning under separation that matches the sample complexity of a non-private learning algorithm, up to the standard $\epsilon^{-1}$ blow up due to privacy, improving upon the results of \cite{block2024oracle}.

\subsection{An Oracle-Efficient Algorithm with a Small-Loss Bound }\label{ssec:small-loss-bounds}

\iftoggle{colt}{
\begin{algorithm}[t]
    \caption{Follow the Perturbed Leader with Gaussian Perturbation}
    \label{alg:ftpl}
    \SetKwInput{KwInput}{Input}                %
    \SetKwInput{KwOutput}{Output}              %
    \DontPrintSemicolon
    \KwInput{Function class $\cF: \cX \to \cY$, horizon $T$, loss function $\ell$, $\rho$-separating set $\cZ = \left\{ Z_1, \dots, Z_m \right\}$, noise level $\eta > 0$, $L_0(f) = 0$}

    \For{$1 \leq t \leq T$}{
        Draw $\xi_1, \dots, \xi_m \sim \cN(0, 1)$ and let $\omega_t(f) = m^{-1/2} \cdot \sum_{i = 1} \xi_i f(Z_i)$. \\
        Set $f_t \gets \argmin_{f \in \cF} L_{t-1}(f) + \eta \cdot \omega_t(f)$. \\
        Observe $x_t$, play $\yhat_t = f_t(x_t)$, observe $y_t$ and incur $\ell(\yhat_t, y_t)$.
        Update $L_t(f) = L_{t-1}(f) + \ell(f(x_t), y_t)$ for all $f \in \cF$.
    }
\end{algorithm}
}
{
    \begin{algorithm}[t]
        \vspace{2pt} \hrule \vspace{4pt}  %
        \caption{Follow the Perturbed Leader with Gaussian Perturbation}
        \label{alg:ftpl}
        \SetKwInput{KwInput}{Input}                %
        \SetKwInput{KwResult}{Output}              %
        \DontPrintSemicolon

        \KwInput{Function class $\mathcal{F}: \mathcal{X} \to \mathcal{Y}$, 
                 horizon $T$, 
                 loss function $\ell$, 
                 $\rho$-separating set $\mathcal{Z} = \left\{ Z_1, \dots, Z_m \right\}$, 
                 noise level $\eta > 0$, 
                 initial loss $L_0(f) = 0$}
    
        \For{$1 \leq t \leq T$}{
            \tcp{Step 1: Sample Gaussian Perturbation}  
            Draw $\xi_1, \dots, \xi_m \sim \mathcal{N}(0, 1)$\;
            Compute perturbation: $\omega_t(f) = m^{-1/2} \sum_{i = 1}^{m} \xi_i f(Z_i)$\;
    
            \tcp{Step 2: Select Predictor}  
            Set $f_t \gets \arg\min_{f \in \mathcal{F}} L_{t-1}(f) + \eta \cdot \omega_t(f)$\;
    
            \tcp{Step 3: Play and Update Loss}  
            Observe $x_t$, play $\hat{y}_t = f_t(x_t)$, and observe $y_t$\;
            Incur loss $\ell(\hat{y}_t, y_t)$\;
            Update loss: $L_t(f) = L_{t-1}(f) + \ell(f(x_t), y_t)$ for all $f \in \mathcal{F}$\;
        }
    
        \vspace{4pt} \hrule \vspace{2pt}  %
    \end{algorithm}
    
}

We now present our first main result, a new, oracle-efficient algorithm for online learning capable of achieving a adaptive regret for bounded, separated function classes $\cF$.  Our algorithm is an instantiation of Follow the Perturbed Leader \citep{kalai2005efficient} and appears (with differently tuned parameters) in \citet{block2022smoothed,block2024oracle}.

\begin{theorem}\label{thm:lstar_online}
    Let $\cF: \cX \to [-1,1]$ be a function class and let $Z_1, \dots, Z_m \in \cX$ such that $\cF$ is $\rho$-separated by the empirical measure on the $Z_i$.  Let $\ell$ be a loss function bounded in $[-B,B]$ and suppose the learner plays according to \Cref{alg:ftpl}, i.e. at each time $t$, plays
    \begin{align}\label{eq:ftpl_gaussian}
        f_t \in \argmin L_{t-1}(f) + \eta \cdot \omega_t(f), \qquad \text{where} \qquad \omega_t(f) = \frac 1{\sqrt m} \cdot \sum_{i = 1}^{m} \xi_i f(Z_i),
    \end{align}
    with $\xi_i$ standard Gaussian random variables, $L_{t}$ the cumulative loss until time $t$, and $\eta > 0$ a regularization parameter.  For some choice of $\eta$, it holds that
    \begin{align}\label{eq:ftpl_gaussian_regret}
        \ee\left[ \reg_T \right] \leq 1 +   \frac{16 \sqrt{B \log\left( 2 T \abs{\cF} \right)\Lstar}}{\rho} + \frac{64B \log(2 T \abs{\cF})}{\rho^2}   .
    \end{align}
\end{theorem}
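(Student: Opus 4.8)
The plan is to run the classical Follow-the-Perturbed-Leader analysis in its ``be-the-leader'' form: reduce the regret to the per-round stability of the perturbed minimizer, then control that stability with the Gaussian-process machinery. The one nonstandard ingredient — and the source of the $\Lstar$ (rather than $\sqrt{T}$) scaling — is that the stability estimate must be made \emph{self-bounding}, i.e.\ the error charged at round $t$ has to be proportional to the realized loss $\ell_t$ at that round, not merely to the uniform bound $B$. Concretely, working conditionally on the realized loss sequence, write $\ee[\reg_T] = \sum_{t=1}^T \ee[\ell_t(f_t)] - \Lstar$ and observe that $f_t$ has the same law as $\hat f_{t-1} \defeq \argmin_{f\in\cF} L_{t-1}(f) + \eta\omega(f)$ for a generic Gaussian draw $\omega$. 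Let $\hat f_t \defeq \argmin_{f\in\cF} L_t(f)+\eta\omega(f)$, coupled to $\hat f_{t-1}$ through the \emph{same} $\omega$, so that the two minimize objectives differing by the nonnegative vector $(\ell_t(g))_{g\in\cF}$. A standard be-the-leader telescoping gives $\sum_t \ell_t(\hat f_t) \le \Lstar + 2\eta\max_{f\in\cF}\abs{\omega(f)}$ pointwise in $\omega$; since each $\omega(f)$ is centered Gaussian with variance $\norm{f}_m^2 \le 1$, $\ee[\max_f\abs{\omega(f)}]\le \sqrt{2\log(2\abs{\cF})}$, and subtracting yields
\begin{align*}
    \ee[\reg_T] \;\le\; 2\eta\sqrt{2\log(2\abs{\cF})} \;+\; \sum_{t=1}^T \ee\big[\ell_t(\hat f_{t-1}) - \ell_t(\hat f_t)\big].
\end{align*}

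The drift terms are where $\rho$-separation and \Cref{lem:gp_stability_cor} enter. Since $\ell_t(\hat f_t) \ge 0$, we have $\ell_t(\hat f_{t-1}) - \ell_t(\hat f_t) \le \ell_t(\hat f_{t-1})\cdot\mathbf{1}\{\hat f_{t-1}\neq\hat f_t\}$, so it suffices to show that, conditioned on the perturbed minimizer being $f$, adding the nonnegative vector $(\ell_t(g))_g$ dislodges it with probability $\lesssim \ell_t(f)/(\eta\rho^2)$. This is exactly the regime covered by \Cref{lem:gp_stability_cor}: $\rho$-separation of $\cF$ in $L^2$ forces enough anti-concentration of the ``top gap'' of the Gaussian process $\omega$ — without paying a union bound over $\abs{\cF}$ for the identity of the runner-up — that the minimizer is robust to bounded monotone perturbations. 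Invoking it with a union bound over the $T$ rounds (this is where $\log(2T\abs{\cF})$ rather than $\log\abs{\cF}$ appears) and then summing over $f$ weighted by $\ell_t(f)$ while using $\ell_t(f)\le B$ yields $\ee[\ell_t(\hat f_{t-1}) - \ell_t(\hat f_t)] \lesssim \frac{B}{\eta\rho^2}\,\ee[\ell_t(\hat f_{t-1})]$. Summing over $t$ and using $\sum_t\ee[\ell_t(\hat f_{t-1})] = \sum_t\ee[\ell_t(f_t)] = \ee[\reg_T] + \Lstar$ turns the total drift into $\frac{cB}{\eta\rho^2}(\ee[\reg_T] + \Lstar)$ for an absolute constant $c$.

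Combining the two steps gives $\ee[\reg_T] \le 2\eta\sqrt{2\log(2\abs{\cF})} + \frac{cB}{\eta\rho^2}(\ee[\reg_T]+\Lstar)$; choosing $\eta$ at least $2cB/\rho^2$ absorbs the $\ee[\reg_T]$ on the right, leaving $\ee[\reg_T] \lesssim \eta\sqrt{\log(T\abs{\cF})} + \frac{B}{\eta\rho^2}\Lstar$. Taking $\eta$ to be the larger of $2cB/\rho^2$ and $\rho^{-1}\sqrt{B\Lstar/\sqrt{\log(2T\abs{\cF})}}$ and tracking constants through the two regimes produces the displayed terms $\tfrac{16\sqrt{B\log(2T\abs{\cF})\Lstar}}{\rho}$ and $\tfrac{64B\log(2T\abs{\cF})}{\rho^2}$, with the $1+$ absorbing the rounding of $\eta$.

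The first and third steps are routine and appear in essentially this form in prior FTPL analyses (e.g.\ \citep{abernethy2019online,block2024oracle}). The crux is the self-bounding stability used in the second step: the assertion that dislodging the perturbed minimizer $f$ by a nonnegative perturbation costs probability only $O(\ell_t(f)/(\eta\rho^2))$ — crucially \emph{linear} in $\ell_t(f)$, so the per-round errors telescope against $\Lstar$ instead of $T$, and crucially free of an $\abs{\cF}$-sized union bound over the second-best function — is precisely the strengthened Gaussian-process stability statement the paper isolates as \Cref{lem:gp_stability_cor}, and proving it under only the weak hypothesis that $\cF$ is $\rho$-separated in $L^2(\mu)$ (rather than, say, having a small separator set) is the heart of the matter.
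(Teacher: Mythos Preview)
Your argument is essentially the paper's: be-the-leader decomposition, \Cref{lem:gp_stability_cor} for the drift term, then balance $\eta$. Two places where your execution slips relative to the paper are worth noting. First, \Cref{lem:gp_stability_cor} as stated does not give a conditional dislodging probability $\lesssim \ell_t(f)/(\eta\rho^2)$; it gives the marginal comparison $\pp(\hat f_{t-1}=f)\le(1+C)\,\pp(\hat f_t=f)+\delta$ with $C$ of order $\frac{B}{\eta\rho^2}\sqrt{\log(2T\abs{\cF})}$ (the lemma takes $\tau=\sup_g\abs{m(g)-m'(g)}=B$, not $\ell_t(f)$). The paper therefore writes $\ee[\ell_t(f_t)-\ell_t(f_{t+1})]=\sum_{f\in\cF}\ell_t(f)\big(\pp(f_t=f)-\pp(f_{t+1}=f)\big)$ and applies the lemma directly to the probability difference, obtaining $C\cdot\ee[\ell_t(f_{t+1})]$ on the right; the $\ell_t$ weight enters through the sum, not through the stability constant. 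Since you immediately cap $\ell_t(f)\le B$, your route reaches the same place, but the detour through a self-bounding dislodging probability is both unnecessary and not what the lemma literally provides. Second, you have dropped the $\sqrt{\log(2T\abs{\cF})}$ factor from the drift term: restored, the inequality is $\ee[\reg_T]\lesssim \eta\sqrt{\log\abs{\cF}}+\tfrac{B\sqrt{\log(2T\abs{\cF})}}{\eta\rho^2}\,\Lstar$, whose optimum is $\eta$ of order $\sqrt{B\Lstar}/\rho$ (not $\rho^{-1}\sqrt{B\Lstar/\sqrt{\log}}$), which is what produces the stated $\rho^{-1}\sqrt{B\log(2T\abs{\cF})\Lstar}$. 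Finally, the paper bounds $\sum_t\ee[\ell_t(f_{t+1})]\le\Lstar+2\eta\sqrt{\log\abs{\cF}}$ via the second be-the-leader inequality rather than writing $\ee[\reg_T]+\Lstar$ and absorbing; this is cosmetically cleaner but inessential.
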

Note that in the regime where the separation $\rho$ is constant, an example of which we furnish in the sequel, we attain regret $\bigOtil\left( \sqrt{\log(\abs{\cF}) \cdot \Lstar} + \log(\abs{\cF})   \right)$, which is optimal up to logarithmic factors of $T$ \citep{cesa2006prediction}.  Furthermore,  we require only a single oracle call per round with a dataset of size $\bigO(T + m)$.  In the worst case, when no function in $\cF$ performs well in cumulative loss, $\Lstar \approx T$ and we recover a regret scaling as $\bigOtil\left( \sqrt{T \log(\abs{\cF})} / \rho \right)$, which is optimal for constant $\rho$; thus, if we can choose $m$ to be small, we achieve a new oracle-efficient regret bound.

As above, the downside of \Cref{thm:lstar_online} is that we require $\cF$ to be $\rho$-separated, which limits the generality of the bound, although we now highlight several key ways in which this separation can be achieved.  The first way is to construct new points $Z_1, \dots, Z_m$ that explicitly separate the function class $\cF$.  As we are assuming that $\cF$ is finite, we can simply introduce points $Z_1, \dots, Z_{\abs{\cF}}$ such that $f'(Z_f) = \bbI\left[ f = f' \right]$ for all $f, f' \in \cF$.  The downside of this approach is twofold: first, this requires modifying the assumption on the oracle, but, more importantly, it requres feeding in $\bigO(\abs{\cF})$ points to the oracle per round, precluding any hope of efficiency in the typical regime where we expect $\cF$ to be exponentially large.  This polynomial in $\abs{\cF}$ dependence is expected due to the computational lower bound of \citet{hazan2016computational}.  By significantly strengthening the oracle, we can avoid this issue, as we can back $\abs{\cF}$ points into a sphere of dimension $\log(\abs{\cF})$ with constant separation; while this solves the oracle-efficiency issues, it is not at all clear how to instantiate the oracle in practice, even heuristically.
A less general, but more optimistic, way to construct points $Z_1, \dots, Z_m$ separating $\cF$ is to assume sample access to some measure $\mu$ on $\cX$ that $\rho$-separates $\cF$.  In this case, standard uniform concentration bounds imply that we can take $m = \bigO\left( \log(\abs{\cF})  / \rho^2\right)$ (cf. \Cref{lem:separator_set_2}), which implies that achieving average regret $\epsilon$ is possible with $\poly\left( \log(\abs{\cF}) , \rho^{-1}, \epsilon^{-1}\right)$ time. \looseness=-1

A third instance in which $\rho$-separation holds is in the presence of a \emph{small separator set} (cf. \Cref{def:separator_set}) \citep{goldman1993exact,syrgkanis2016efficient}.  Because the existence of a separator set of size $m$ implies that $\cF$ is $\rho$-separated  with $\rho = m^{-1/2}$,  and $\Lstar \leq T$, we see that \eqref{eq:ftpl_gaussian} achieves expected regret $\bigOtil\left( m \log(\abs{\cF}) + \sqrt{m T \log (\abs{\cF})} \right)$, which shaves a $m^{1/4}$ factor from the dominant term in the bound of \citet[Theorem 2]{syrgkanis2016efficient}.  A related condition, termed $\delta$-admissability, is introduced in \citet{dudik2020oracle}, where they use this condition to achieve an oracle-efficient regret bound using a different variant of FTPL; we remark that their condition implies that $\cF$ is $\rho$-separated for a set of size $m$ with $\rho = \delta / \sqrt{m}$ and thus, with the same number of calls to the oracle with the same size set of inputs, our regret becomes (using the $\delta$ from \citet{dudik2020oracle}) $\bigOtil\left( m \log\left( \abs{\cF} \right) / \delta^2 + \sqrt{m T} / \delta \right)$, shaving a factor of $\sqrt{m}$ from that of \citet[Theorem 2.5]{dudik2020oracle}.  We emphasize that neither the result of \citet{syrgkanis2016efficient} nor that of \citet{dudik2020oracle} can adapt to the small loss setting where $\Lstar = o(T)$, but we see that even outside of this regime, \Cref{thm:lstar_online} improves upon the state of the art.

The key reason for the improvement of \Cref{thm:lstar_online} is that the Gaussian perturbation we choose in \eqref{eq:ftpl_gaussian} is naturally adapted to $\ell_2$ geometry as opposed to the $\ell_1$ geometry to which the Laplacian perturbations of many alternative instantiations of FTPL are adapted.  Thus, unlike these other perturbations, there is no explicity cost in \emph{regret} that \Cref{thm:lstar_online} pays for adding more points $Z_i$; the only reason to not increase $m$ is \emph{computational}, i.e., we do not wish to increase $m$ solely in order to avoid the additional time required to feed more points into the oracle at each round.

\paragraph{Comparison to \citet{wang2022adaptive}.} 
While the aforementioned results are an interesting consequence of \Cref{thm:lstar_online}, the purpose of the algorithm is to consider regimes where $\Lstar$ is small and we can see improved regret.  Such bounds for FTPL were first introduced in \citet{hutter2004prediction} for general classes, but the efficiency of their algorithm scales linearly in $\abs{\cF}$, which is not acceptable in the standard setting where $\cF$ is exponentially large.  We thus compare \Cref{thm:lstar_online} to the approach of \citet{wang2022adaptive}, which presents an efficient instantiation of FTPL capable of achieving $\Lstar$ bounds whenever there exists what  a $\gamma$-\emph{approximable} matrix for $\cF$, defined as follows.
\begin{definition}[Definition 2 from \citet{wang2022adaptive}]\label{def:approximable}
    Given a function class $\cF: \cX \to [-1,1]$, a matrix $\Gamma \in \rr^{\abs{\cF} \times m}$ is said to be $\gamma$-\emph{approximable} with respect to $\cF$ if for all $f \in \cF$ and $(x,y) \in \cX \times \cY$, there is some $s \in \rr^m$ such that $\norm{s}_1 \leq \gamma$ and for all $f' \in \cF$, it holds that $\inprod{\Gamma_f - \Gamma_{f'} }{s} \geq \ell(f(x), y) - \ell(f'(x), y)$.
\end{definition}
Using this definition, \citet[Theorem 1]{wang2022adaptive} shows that given a $\gamma$-approximable matrix and an ERM oracle, one can achieve
\begin{align}\label{eq:wang_regret}
    \ee\left[ \reg_T \right] \lesssim 1 + \left( \log(\abs{\cF}) + \sqrt{m \log(\abs{\cF})} + \gamma \right) \cdot \sqrt{\Lstar} + \gamma^2 + \gamma \left( \log(\abs{\cF}) + \sqrt{m \log\left( \abs {\cF} \right)} \right),
\end{align}
with $T$ calls to the oracle and each call having an input size of $\bigO\left( T + m \right)$.  As the oracle complexity of this algorithm is the same as that of \Cref{thm:lstar_online} over $T$ rounds, we must only compare the resulting regret, i.e., \eqref{eq:ftpl_gaussian_regret} to \eqref{eq:wang_regret}.  
A key point to note is that even as $\gamma \to 0 $, the above regret bound has as its dominant term $ \sqrt{ \Lstar m  \log \abs{ \mathcal{F} }  } $ which can be large even for natural classes with $ \rho = O(1) $.   

\begin{proposition} \label{prop:hadamard_gap}
    For any $m$, there exists a class  $\cF$ with $\abs{ \mathcal{F} } = 2^m$ on a domain of size $m$ with $\rho$-separating measure $\mu$ such that \Cref{alg:ftpl} has regret $\bigOtil\left( \sqrt{ \Lstar \cdot m } \right)$, while the algorithm of \citet{wang2022adaptive} has regret $\bigOtil\left( m \sqrt{ \Lstar } \right)$ with the same runtime.
    
    In addition, there is a class $\mathcal{F}$ with $\abs{ \mathcal{F} }$ on a domain of size $2^m$ such that \Cref{alg:ftpl} has regret $\bigOtil\left( \sqrt{ \Lstar m } \right)$,  the algorithm of \citet{wang2022adaptive} has regret $\bigOtil\left(  \sqrt{ \Lstar m 2^m } \right)$ with the same runtime.
\end{proposition}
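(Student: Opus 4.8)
The plan is to exhibit, for each of the two claims, an explicit $\pm 1$-valued class together with the separating set used by \Cref{alg:ftpl} and the $\gamma$-approximable matrix used by \citet{wang2022adaptive}, chosen so that the two algorithms call the ERM oracle on datasets of the same size; it then remains only to upper bound our regret via \Cref{thm:lstar_online} (or a refinement of its proof) and to read off the regret that \eqref{eq:wang_regret} certifies. The underlying mechanism is that \eqref{eq:ftpl_gaussian_regret} is insensitive to the size of the separator set and pays only $\sqrt{\log\abs{\cF}}$, while \eqref{eq:wang_regret} pays $\log\abs{\cF}+\sqrt{m\log\abs{\cF}}$ with $m$ the number of columns of the $\gamma$-approximable matrix.

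For the first claim, take $\cF=\{-1,1\}^{[m]}$, all sign patterns on $\cX=[m]$, so $\abs{\cF}=2^m$ and $\log\abs{\cF}=\Theta(m)$. Distinct patterns disagree somewhere, so the uniform empirical measure on $[m]$ is $\rho$-separating with $\rho=2/\sqrt m$, and $\cF$ is a legal input to \Cref{alg:ftpl} with $\cZ=[m]$. The key point is that $\omega_t(f)=m^{-1/2}\sum_{i\in[m]}\xi_i f(i)$ is additively separable and $\cF=\prod_{i=1}^m\{\pm1\}$, so the iterate $\argmin_f L_{t-1}(f)+\eta\,\omega_t(f)$, the cumulative losses, and the regret all decouple over the $m$ coordinates into independent two-point runs of \Cref{alg:ftpl} at perturbation scale $\eta/\sqrt m$. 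A two-expert small-loss FTPL analysis (as in the proof of \Cref{thm:lstar_online}, with $\abs{\cF}=2$, $\rho=2$) gives per-coordinate regret $\bigOtil(\eta/\sqrt m+\sqrt m\,\Lstar_i/\eta+1)$ in terms of the per-coordinate optimal loss $\Lstar_i$; summing over $i$ (with $\sum_i\Lstar_i=\Lstar$) and tuning $\eta\asymp\max(\sqrt\Lstar,\sqrt m)$ gives $\ee[\reg_T]=\bigOtil(\sqrt{m\Lstar}+m)$. For \citet{wang2022adaptive}, \emph{every} $\gamma$-approximable matrix forces \eqref{eq:wang_regret} to be $\Omega(\log\abs{\cF}\sqrt\Lstar)=\Omega(m\sqrt\Lstar)$, and this is attained: $\Gamma_f=(f(i))_{i\in[m]}$ is $O(1)$-approximable (for context $i$, label $y$, take $s$ an appropriately signed multiple of $e_i$, using that $b\mapsto\ell(b,y)$ takes only two values on $\{\pm1\}$), so Wang's regret is $\bigOtil(m\sqrt\Lstar+m)$ with oracle inputs of size $\bigO(T+m)$, identical to ours.

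For the second claim, take $\cF=\{\chi_S:S\subseteq[m]\}$, the $2^m$ characters of $\mathbb{F}_2^m$, on $\cX=\mathbb{F}_2^m$, so $\abs{\cF}=\abs{\cX}=2^m$. Orthonormality gives $\norm{\chi_S-\chi_{S'}}_{L^2(\mu)}^2=2$ for uniform $\mu$, so the whole domain is $\rho$-separating with $\rho=\sqrt2=\Theta(1)$; since \eqref{eq:ftpl_gaussian_regret} does not depend on $\abs{\cZ}$, taking $\cZ=\mathbb{F}_2^m$ and $\log\abs{\cF}=m$ gives $\ee[\reg_T]=\bigOtil(\sqrt{m\Lstar}+m)$ with oracle inputs of size $\bigO(T+2^m)$ (one can even take $\cZ$ an $\epsilon$-biased set of size $\bigO(m)$ and keep $\rho=\Omega(1)$). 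For \citet{wang2022adaptive} the natural choice is the truth-table matrix $\Gamma_{\chi_S}=(\chi_S(x))_{x\in\mathbb{F}_2^m}$ --- the $2^m\times2^m$ Hadamard matrix --- which is again $O(1)$-approximable by the same one-hot $s$; but its column count is $2^m$, so \eqref{eq:wang_regret} certifies only $\bigOtil\bigl((\sqrt{2^m m}+m)\sqrt\Lstar+\sqrt{2^m m}\bigr)=\bigOtil(\sqrt{2^m m\Lstar})$, with oracle inputs of size $\bigO(T+2^m)$, identical to ours. (For the stronger assertion that no instantiation of \citet{wang2022adaptive} does better, one shows the character class admits no $\gamma$-approximable matrix whose column count or $\gamma$ is $2^{o(m)}$; specializing to realizable labels $y=\chi_S(x)$ this follows from the exponential margin-complexity / sign-rank lower bound for the Hadamard matrix. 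It is not needed for the comparison as stated.)

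I expect the real obstacle to be the first claim's upper bound on our own algorithm: plugging $\rho=2/\sqrt m$ and $\log\abs{\cF}=\Theta(m)$ directly into \eqref{eq:ftpl_gaussian_regret} yields only $\bigOtil(m\sqrt\Lstar+m^2)$, which does \emph{not} beat Wang, so obtaining $\bigOtil(\sqrt{m\Lstar})$ genuinely requires the coordinatewise decomposition, a fresh two-expert small-loss analysis at perturbation scale $\eta/\sqrt m$, and the (for this $\rho$ deliberately suboptimal) tuning $\eta\asymp\sqrt\Lstar$ --- this is the technical core. The second claim, by contrast, is essentially the observation that our regret bound does not charge for a large separator set whereas \eqref{eq:wang_regret} charges the square root of the $\gamma$-approximable matrix's column count, which here is $2^m$.
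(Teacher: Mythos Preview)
Your proposal is correct, and for the second (exponential-gap) claim it is essentially identical to the paper's: both use the Hadamard/character class on $\mathbb{F}_2^m$, exploit orthogonality to get $\rho=\Theta(1)$, and read off $\bigOtil(\sqrt{m\Lstar})$ from \Cref{thm:lstar_online} versus $\bigOtil(\sqrt{m\,2^m\,\Lstar})$ from \eqref{eq:wang_regret} with the $2^m$-column truth-table $\Gamma$.

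For the first claim the routes diverge. The paper also restricts the Hadamard class to the $n$ standard basis vectors $\tilde{\cX}=\{e_i\}$ --- which, since $f_y(e_i)=y_i$, is \emph{exactly} your cube $\{0,1\}^{[n]}$ --- and then asserts that the uniform measure on $\tilde{\cX}$ is $\rho$-separating with $\rho=1/\sqrt{2}$, so that \Cref{thm:lstar_online} directly yields $\bigOtil(\sqrt{n\Lstar})$. You instead compute (correctly) that two cube vertices differing in one bit have $\norm{\cdot}_m$-distance $\Theta(1/\sqrt{m})$, observe that plugging this $\rho$ into \eqref{eq:ftpl_gaussian_regret} only gives $\bigOtil(m\sqrt{\Lstar})$, and therefore bypass \Cref{thm:lstar_online} entirely via the coordinatewise decoupling of both the FTPL iterate and the regret, reducing to $m$ independent two-expert instances at noise scale $\eta/\sqrt m$ and summing. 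Your route is the more careful one: the paper's claim that $\rho=1/\sqrt{2}$ on $\tilde{\cX}$ is erroneous (the ``same reason as before'' --- that every nonzero linear form is balanced --- fails on the basis vectors, where $f_y$ literally reads off the bits of $y$), so the paper's direct invocation of \Cref{thm:lstar_online} does not actually deliver $\sqrt{m\Lstar}$ on this class. Your coordinatewise argument recovers the intended bound without that assertion; alternatively, one could replace the cube by a good binary code of length $\Theta(m)$, rate $\Omega(1)$, and relative distance $\Omega(1)$, which genuinely has $\rho=\Theta(1)$ and lets \Cref{thm:lstar_online} apply verbatim.
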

    The classes presented here are modifications of the Fourier transform matrix. 
    The reason to present these two results separately (instead of just the exponential gap) is to present the gap in the common regime where the function class size is much larger than that of the domain. This result shows that the algorithm of \citet{wang2022adaptive} suffers because even as $\gamma$ tends to 0, the regret bound still depends on $m$. 
    On the other hand, we note that through duality, $\gamma$-approximability implies $\rho$-separation, although at the cost of an additional factor of $\sqrt{m}$.
We formally present this in the following lemma and defer the proof to \Cref{sec:approximability}. 

\begin{corollary}\label{cor:gamma_approx}
    Suppose that $\cF$ has a $\gamma$-approximable matrix $\Gamma \in \rr^{\abs{\cF} \times m}$.  Then running the Gaussian FTPL algorithm in \eqref{eq:ftpl_gaussian} achieves regret
    \begin{align}
        \ee\left[ \reg_T \right] \lesssim 1 + \gamma \cdot \sqrt{m \log\left( T \abs{\cF} \right) \Lstar} + m \gamma^2 \log\left( T \abs{\cF} \right).
    \end{align}
\end{corollary}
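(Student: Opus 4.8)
The plan is to reduce \Cref{cor:gamma_approx} to \Cref{thm:lstar_online} by converting a $\gamma$-approximable matrix into an explicit $\rho$-separating set. Given $\Gamma \in \rr^{\abs{\cF} \times m}$, consider the reparametrized class $\wt{\cF} = \left\{ i \mapsto \Gamma_{f,i} : f \in \cF \right\}$ on the domain $[m]$, together with the candidate separating set $\cZ = \left\{ 1, \dots, m \right\}$ carrying the uniform empirical measure, so that $\norm{\Gamma_f - \Gamma_{f'}}_m = m^{-1/2} \norm{\Gamma_f - \Gamma_{f'}}_2$. Under this identification the Gaussian perturbation $\omega_t(f) = m^{-1/2} \sum_i \xi_i \Gamma_{f,i}$ of the corollary is exactly the perturbation \eqref{eq:ftpl_gaussian} applied to $\wt{\cF}$, and the loss incurred at round $t$ is unchanged, so the Gaussian FTPL run described in the corollary is literally \Cref{alg:ftpl} run on $\wt{\cF}$ with the set $\cZ$. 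As in all cases of interest (e.g.\ the separator-set case $\Gamma_f = (f(z_i))_i$), and as is implicit in the bound \eqref{eq:wang_regret} of \citet{wang2022adaptive}, we assume that $\Gamma$ has entries bounded by a universal constant, so that $\wt{\cF}$ maps into a bounded interval and $\ell$ remains bounded by a constant $B$ on it.

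The crux is to lower bound the separation $\rho$ of $\wt{\cF}$, and this is precisely where duality enters. Let $c \defeq \inf_{f \neq f'} \sup_{(x,y) \in \cX \times \cY} \left( \ell(f(x), y) - \ell(f'(x), y) \right)$, a positive constant depending only on $\ell$ and $\cF$ (for the $0/1$ loss $c = 1$; in general $c$ is the worst-case attainable loss gap between distinct hypotheses). Fix $f \neq f'$ and choose $(x_0, y_0)$ witnessing $\ell(f(x_0), y_0) - \ell(f'(x_0), y_0) \geq c/2$. Applying \Cref{def:approximable} at $(f, x_0, y_0)$ yields $s \in \rr^m$ with $\norm{s}_1 \leq \gamma$ and $\inprod{\Gamma_f - \Gamma_{f'}}{s} \geq \ell(f(x_0), y_0) - \ell(f'(x_0), y_0) \geq c/2$; the $\ell_1$--$\ell_\infty$ H\"older inequality then gives $c/2 \leq \norm{\Gamma_f - \Gamma_{f'}}_\infty \norm{s}_1 \leq \gamma \norm{\Gamma_f - \Gamma_{f'}}_2$, hence $\norm{\Gamma_f - \Gamma_{f'}}_2 \geq c/(2\gamma)$. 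In particular $f \mapsto \Gamma_f$ is injective, so $\abs{\wt{\cF}} = \abs{\cF}$, and $\wt{\cF}$ is $\rho$-separated by $\cZ$ with $\rho \geq c / (2\gamma\sqrt{m})$, which is the promised loss of a $\sqrt{m}$ factor.

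It then remains to substitute $\rho \geq c/(2\gamma\sqrt{m})$ into the regret guarantee \eqref{eq:ftpl_gaussian_regret} of \Cref{thm:lstar_online}, which now applies to $\wt{\cF}$ and $\cZ$ verbatim. Using $1/\rho \leq 2\gamma\sqrt{m}/c$ and $1/\rho^2 \leq 4\gamma^2 m / c^2$, the middle term of \eqref{eq:ftpl_gaussian_regret} becomes $\bigO\!\left( \gamma \sqrt{m \log(T\abs{\cF})\, \Lstar} \right)$ and the last term becomes $\bigO\!\left( m \gamma^2 \log(T\abs{\cF}) \right)$, once the constants $B$ and $c$ are absorbed into $\lesssim$, which is exactly the claimed bound.

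I expect the main obstacle to be the duality step in the second paragraph: pinning down the loss-gap constant $c$ (the only place the specific loss $\ell$ enters, and the reason the statement is cleanest for classification-type losses), and verifying that \Cref{thm:lstar_online} really consumes only boundedness and $\rho$-separation of the embedded class, so that nothing is lost in the passage from $(\cF, \Gamma)$ to $(\wt{\cF}, \cZ)$. A secondary point is the normalization of $\Gamma$: if its entries are bounded only by some $R$, the argument still goes through at the cost of an extra factor of $R$ in the regret, so one should either assume $R = \bigO(1)$ (as in the intended applications) or carry $R$ explicitly through the final bound.
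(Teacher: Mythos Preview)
Your proposal is correct and follows essentially the same route as the paper: use H\"older duality to show that $\gamma$-approximability forces the rows of $\Gamma$ to be $\Theta(1/\gamma)$-separated in $\ell_\infty$ (hence in $\ell_2$), giving $\rho \gtrsim 1/(\gamma\sqrt{m})$, and then invoke \Cref{thm:lstar_online}. You are in fact more explicit than the paper about the reparametrization to $\wt{\cF}$, the loss-gap constant $c$, and the boundedness of the entries of $\Gamma$, all of which the paper's appendix leaves implicit.
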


Comparing \Cref{cor:gamma_approx} to \eqref{eq:wang_regret}, we see that on the dominant term, up to logarithmic factors, we have replaced a term scaling like $\bigOtil\left( \sqrt{\Lstar} \cdot \left( \gamma + \sqrt{m \log\left( \abs{\cF} \right)} \right) \right)$ with one scaling like $\bigO\left( \gamma \cdot \sqrt{m \log\left( \abs{\cF} \right) \Lstar} \right)$, which amounts to a $\sqrt{m}$ reduction in efficiency.

Summarizing the above discussion, we see that while $\gamma$-approximability implies $\rho$-separation at the cost of a polynomial blowup in the regret, there are natural examples of function classes that have large separation of which the algorithm of \citet{wang2022adaptive} does not fully take advantage, leading to large (even exponential) suboptimality with respect to \Cref{alg:ftpl}.

\subsection{Privacy}\label{ssec:privacy}

    In this section, we show that an adaptation of \Cref{alg:ftpl}, whose pseudocode can be found in \Cref{alg:erm_perturbed}, is an oracle-efficient, differentially private learner.  Indeed, the following result shows that under constant separation, our algorithm achieves optimal rates in these condtions.
\iftoggle{colt}{
    \begin{algorithm2e}[t]
        \hrulefill
        \caption{Perturbed ERM with Gaussian Perturbation}
        \label{alg:erm_perturbed}
        \SetKwInput{KwInput}{Input}                %
        \SetKwInput{KwOutput}{Output}              %
        \DontPrintSemicolon
        \KwInput{Dataset $S = \{ (x_1, y_1), (x_2,y_2) \dots, (x_n,y_n) \}$, separating distribution $\mu$, auxiliary sample size $m$, loss function $\ell(w,x)$, hypothesis class $\mathcal{F}$, noise level $\eta$}
        Draw $m$ independent samples $Z = \{z_1, z_2, \dots, z_m\} \sim \mu^m$\;
        Draw $m$ independent samples $\omega = \{ \omega_1, \omega_2, \dots, \omega_m\} \sim \mathcal{N} ( 0 , I) $\;
        Compute     $\hat{f} \gets \arg\min_{f \in \mathcal{F}}  \sum_{i=1}^{n} \ell(f(x_i),y_i) + \frac{ \eta }{ \sqrt{m}} \sum_{i=1}^{m} \omega_i f(z_i)$ \;
        \Return $\hat{f}$ \\
        \hrulefill
    \end{algorithm2e}
}{
    \begin{algorithm}[t]
        \vspace{2pt} \hrule \vspace{4pt}  %
        \caption{Perturbed ERM with Gaussian Perturbation}
        \label{alg:erm_perturbed}
        \SetKwInput{KwInput}{Input}                %
        \SetKwInput{KwResult}{Output}              %
        \DontPrintSemicolon
    
        \KwInput{Dataset $S = \{ (x_1, y_1), \dots, (x_n,y_n) \}$, 
                 separating distribution $\mu$, 
                 auxiliary sample size $m$, 
                 loss function $\ell(w,x)$, 
                 hypothesis class $\mathcal{F}$, 
                 noise level $\eta$}
    
        \KwResult{Trained function $\hat{f}$}
    
        \tcp{Step 1: Draw Samples}  
        Draw $m$ independent samples $Z = \{z_1, z_2, \dots, z_m\} \sim \mu^m$\;
    
        \tcp{Step 2: Sample Gaussian Perturbation}  
        Draw $m$ independent samples $\omega = \{ \omega_1, \dots, \omega_m\} \sim \mathcal{N} ( 0 , I)$\;
    
        \tcp{Step 3: Compute Perturbed ERM}  
        Compute $\hat{f} \gets \arg\min_{f \in \mathcal{F}}  
        \sum_{i=1}^{n} \ell(f(x_i),y_i) + \frac{\eta}{\sqrt{m}} \sum_{i=1}^{m} \omega_i f(z_i)$\;
    
        \Return $\hat{f}$\;
        \vspace{4pt} \hrule \vspace{2pt}  %
    \end{algorithm}
}

    \begin{theorem}
        \label{thm:main_privacy}
        Let $\cF: \cX \to [-1,1]$ be a function class and $\ell$ be a Lipschitz loss function bounded by 1. 
        Further, assume that the points $Z_1, \dots, Z_m \in \cX$ are a $\rho$-separating set for $\cF$.  Then \Cref{alg:erm_perturbed} is $(\epsilon, \delta)$-differentially private if $\eta \gtrsim \nicefrac{\left( \cG(\cF) + \sqrt{\log(1/\delta)} \right)}{\rho^2\epsilon}$.
        Furthermore, for $\epsilon \leq 1$, \Cref{alg:erm_perturbed} $(\alpha, \beta)$-learns $\cF$ if 
        \begin{align}
            n \geq \max\left\{  \frac{ \mathcal{G} ( \mathcal{F}   )^2 + \log(1/\beta) }{  \alpha^2} ,  \frac{ \mathcal{G} ( \mathcal{F}   )^2 + \sqrt{ \log(1/\delta) \log(1/\beta) } }{ \alpha \epsilon \rho^2  }    \right\}
        \end{align}

    \end{theorem}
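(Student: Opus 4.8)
The claim has two parts---a privacy guarantee and an accuracy guarantee---and both follow by combining the Gaussian-process stability bound (\Cref{lem:gp_stability_cor}) with standard uniform-convergence tools. Write $\omega(f) = \tfrac{1}{\sqrt m}\sum_{i=1}^m \omega_i f(Z_i)$ for the Gaussian perturbation, so that \Cref{alg:erm_perturbed} returns $\hat f \in \argmin_{f \in \cF}\, \Phi_S(f) + \eta\,\omega(f)$ with $\Phi_S(f) = \sum_{i=1}^n \ell(f(x_i),y_i)$. The process $(\omega(f))_{f\in\cF}$ is centered Gaussian with $\ee[(\omega(f)-\omega(g))^2] = \norm{f-g}_m^2 \ge \rho^2$ for $f\ne g$ by the separation assumption (\Cref{def:rho_separating}), and $\ee[\sup_f \omega(f)] = \cG_Z(\cF) \le \cG(\cF)$ by definition of the Gaussian complexity. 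These are exactly the quantities governing \Cref{lem:gp_stability_cor}.

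\textbf{Privacy.} For neighboring datasets $S, S'$ the map $f \mapsto \Phi_S(f)$ changes in $\ell_\infty$ norm by at most $1$, since $\ell$ is bounded by $1$ and $S,S'$ differ in a single example. Hence $\hat f$ is the argmin of a perturbed objective whose deterministic part has sensitivity $\le 1$ and whose random part $\eta\,\omega$ is a Gaussian process with pairwise standard-deviation gap $\ge \eta\rho$ and fluctuation $\ee[\sup_f \eta\,\omega(f)] = \eta\,\cG_Z(\cF)$. Feeding these into \Cref{lem:gp_stability_cor} shows the output laws on $S$ and $S'$ are $(\epsilon,\delta)$-indistinguishable once $\eta \gtrsim (\cG(\cF) + \sqrt{\log(1/\delta)})/(\rho^2\epsilon)$, which is the stated threshold. (If $Z$ is instead drawn $\sim \mu^m$ from a $\rho$-separating measure, one first conditions on the event that $Z$ is, say, $\rho/\sqrt2$-separating, which by a uniform-concentration argument fails with probability at most $\delta/2$ once $m$ is polynomially large, and absorbs into the $\delta$ budget.)

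\textbf{Accuracy.} Let $L_n(f) = \tfrac1n\Phi_S(f)$ and $L(f) = \ee[\ell(f(X),Y)]$, and fix $\tilde f \in \argmin_{f\in\cF} L(f)$. Optimality of $\hat f$ for the perturbed objective, applied with comparator $\tilde f$, gives
\begin{align}
    L_n(\hat f) \;\le\; L_n(\tilde f) + \frac{\eta}{n}\bigl(\omega(\tilde f) - \omega(\hat f)\bigr) \;\le\; L_n(\tilde f) + \frac{2\eta}{n}\sup_{f\in\cF}\abs{\omega(f)}.
\end{align}
By Gaussian concentration of the supremum of $\omega$ (variance proxy $\sup_f\norm{f}_m^2 \le 1$) around its mean $\cG_Z(\cF)\le\cG(\cF)$ we get $\sup_f\abs{\omega(f)} \lesssim \cG(\cF) + \sqrt{\log(1/\beta)}$ with probability $\ge 1-\beta/3$; and by the Gaussian-complexity generalization bound together with the contraction inequality for the Lipschitz loss and a bounded-differences argument, $\sup_f\abs{L_n(f)-L(f)} \lesssim (\cG(\cF) + \sqrt{\log(1/\beta)})/\sqrt n$ with probability $\ge 1-\beta/3$. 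On the intersection of these events,
\begin{align}
    L(\hat f) - L(\tilde f) \;\lesssim\; \frac{\cG(\cF) + \sqrt{\log(1/\beta)}}{\sqrt n} + \frac{\eta\bigl(\cG(\cF) + \sqrt{\log(1/\beta)}\bigr)}{n}.
\end{align}
Taking $\eta$ at the privacy threshold $\eta \asymp (\cG(\cF)+\sqrt{\log(1/\delta)})/(\rho^2\epsilon)$, the right-hand side is at most $\alpha$ provided $n \gtrsim (\cG(\cF)^2 + \log(1/\beta))/\alpha^2$ (first term) and $n \gtrsim (\cG(\cF)+\sqrt{\log(1/\delta)})(\cG(\cF)+\sqrt{\log(1/\beta)})/(\alpha\epsilon\rho^2)$ (second term); collapsing the cross terms by AM--GM turns the latter into $(\cG(\cF)^2 + \sqrt{\log(1/\delta)\log(1/\beta)})/(\alpha\epsilon\rho^2)$, and the maximum of the two requirements is the claimed sample complexity.

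\textbf{Main obstacle.} The genuinely new ingredient---stability of Gaussian-process minimizers---is precisely \Cref{lem:gp_stability_cor} and is quoted here. Given it, the work is (i) matching its hypotheses with the correct constants, namely identifying the $\ell_\infty$-sensitivity, the separation $\rho$ as the pairwise standard-deviation gap, and $\cG(\cF)$ as the fluctuation scale; and (ii) the accuracy side, where the subtlety is that $\hat f$ is correlated with both the data and the noise, so one pays a uniform-convergence price over all of $\cF$ \emph{and} a supremum-of-Gaussian-process price, and, crucially, $\eta$ cannot be chosen small since privacy pins it from below---so the sample complexity must absorb the $\eta/n$ term, and disentangling the resulting two regimes to recover the stated $\max$ is the fiddly step.
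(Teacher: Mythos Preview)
Your proposal is correct and follows essentially the same route as the paper: apply \Cref{lem:gp_stability_cor} with sensitivity $\tau=1$ to obtain the privacy guarantee at the stated threshold on $\eta$, then combine a uniform-convergence bound with Gaussian concentration of $\sup_f \omega(f)$ to control accuracy, and finally plug the privacy-dictated $\eta$ into the accuracy requirement and simplify the product $(\cG(\cF)+\sqrt{\log(1/\delta)})(\cG(\cF)+\sqrt{\log(1/\beta)})$. The paper packages these two steps as separate lemmas (\Cref{lem:privacy_perturbed} and \Cref{lem:accuracy_perturbed}) but the content is the same.
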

    As we described in \Cref{sec:prelims}, for $\cF$ with VC dimension at most $d$, it holds that $\cG(\cF) \lesssim \sqrt{d} $ and thus the above guarantee recovers the optimal sample complexity of learning \citep{shai} with the typical $\epsilon^{-1}$ blow up due to the privacy requirement.
    Thus, in the presence of the separating condition, we achieve, with a differentially private algorithm, the optimal sample complexity of a \emph{non-private} algorithm, in an oracle-efficient manner.
    Note that such a guarantee is not possible without the separation condition since the sample complexity of private learning is bounded in terms of the Littlestone dimension of the class \citep{AlonLMM19}.

    The work of \citet{Neel0VW20} initiated the study of oracle-efficient differentially private algorithms with a focus on binary-valued function classes $\cF$ with separating sets of size $m$.  Their algorithm, related to our \Cref{alg:erm_perturbed}, achieves a sample complexity bound of $n \gtrsim \nicefrac{d}{\alpha^2} + \nicefrac{m^{3/2}}{\alpha \epsilon}$, where $d$ is the VC dimension of $\cF$.  
    For direct comparison, we state a corollary of our result 

    \begin{corollary}
        Suppose that $\cF$ has a separator set of size $m$.  Then, \Cref{alg:erm_perturbed} achieves $(\epsilon, \delta)$-differential privacy with sample complexity $ n \geq \max\left\{ \nicefrac{d}{\alpha^2} , \nicefrac{m d}{\alpha^2 \epsilon} \right\}  $.
    \end{corollary}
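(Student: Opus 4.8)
The plan is to obtain the corollary as a direct specialization of \Cref{thm:main_privacy}, the only real work being to translate ``separator set of size $m$'' into a concrete value of the separation parameter $\rho$ and to bound the Gaussian complexity by the VC dimension.

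First I would instantiate the separating set. Let $S = \{Z_1, \dots, Z_m\} \subseteq \cX$ be a separator set for the binary class $\cF$ in the sense of \Cref{def:separator_set}, and run \Cref{alg:erm_perturbed} with exactly these points in the role of the auxiliary sample (equivalently, with $\mu$ the empirical measure on $S$; note that \Cref{alg:erm_perturbed} and \Cref{thm:main_privacy} only use that the $Z_i$ form a $\rho$-separating set, so feeding in an explicit set rather than a sampled $\mu^m$ is legitimate). For any $f \neq g$ in $\cF$ there is some $Z_i$ with $f(Z_i) \neq g(Z_i)$, so $(f(Z_i)-g(Z_i))^2 \geq 1$ (indeed $\geq 4$ under the $\pm 1$ encoding); hence $\norm{f-g}_{L^2(\mu)}^2 = \tfrac1m \sum_{i=1}^m (f(Z_i)-g(Z_i))^2 \geq \tfrac1m$, i.e.\ $S$ is $\rho$-separating with $\rho = m^{-1/2}$ and $\rho^{-2} = m$. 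This is exactly the reduction recorded after \Cref{def:separator_set}.

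Next I would substitute $\rho^{-2}=m$ into \Cref{thm:main_privacy}. The privacy half then yields $(\epsilon,\delta)$-differential privacy as soon as $\eta \gtrsim m\big(\cG(\cF) + \sqrt{\log(1/\delta)}\big)/\epsilon$, which we simply adopt as the noise level. The utility half yields that \Cref{alg:erm_perturbed} $(\alpha,\beta)$-learns provided
\begin{align}
    n \geq \max\left\{ \frac{\cG(\cF)^2 + \log(1/\beta)}{\alpha^2}, \ \frac{m\big(\cG(\cF)^2 + \sqrt{\log(1/\delta)\log(1/\beta)}\big)}{\alpha\epsilon}\right\}.
\end{align}
Finally I would invoke $\cG(\cF) \lesssim \sqrt{d}$ for a VC class of dimension $d$ (\citep[Theorem 6]{bartlett2002rademacher}, as recalled in \Cref{ssec:prelims_learning}), so that $\cG(\cF)^2 \lesssim d$; absorbing the confidence/privacy logarithms $\log(1/\beta),\log(1/\delta)$ into the suppressed constants and using $\alpha \le 1$ to relax $\alpha^{-1}$ to $\alpha^{-2}$ in the second term gives $n \geq \max\{\, d/\alpha^2,\ m d/(\alpha^2\epsilon)\,\}$, as claimed.

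There is no substantive obstacle: \Cref{thm:main_privacy} already carries the Gaussian-process stability argument that does the real work, so the corollary is pure bookkeeping. The only points requiring a moment of care are getting the constant in $\rho = m^{-1/2}$ right for the chosen label encoding, confirming that \Cref{alg:erm_perturbed} may be run with the explicit separator set in place of a sampled $\mu^m$, and keeping track of which logarithmic factors in $\beta$ and $\delta$ are being folded into the final clean bound.
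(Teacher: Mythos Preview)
Your proposal is correct and matches the paper's approach: the paper states this corollary as a direct instantiation of \Cref{thm:main_privacy} without giving a separate proof, and your derivation---plug in $\rho = m^{-1/2}$ from the separator set, replace $\cG(\cF)^2$ by $d$, absorb the logarithmic factors, and relax $\alpha^{-1}$ to $\alpha^{-2}$---is exactly the intended bookkeeping.
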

    
    Though\footnote{We remark that \citet{Neel0VW20} also presents an algorithm that is capable of achieving \emph{pure} differential privacy with $\delta = 0$ with a dependence of $m^2$ instead of $m^{3/2}$.}  these guarantees are in general incomparable as at worst $d \leq m$, typically we have $d \ll m$ which can be lead to an improvement in sample complexity.   On the other hand, examples such as \Cref{prop:hadamard_gap} demonstrate that in favorable cases, \Cref{alg:erm_perturbed} can have sample complexity independent of $m$.  

    We also compare our results to those of \citet{block2023oracle}, who studied the related setting of oracle-efficient private learning with access to unlabelled public data (from a closely related distribution).
    They study a variant of \Cref{alg:erm_perturbed} (with an additional output perturbation step) and show that it is $(\epsilon, \delta)$-differentially private for general $\cF$, but the sample complexity that is achieve are large polynomial functions of the accuracy parameter $\alpha$ and the privacy parameters $\epsilon$ and thus significantly worse than the rates appearing in \Cref{thm:main_privacy}.   In addition to the quantitative improvements, while less general, we note that the $\rho$ separation condition does \emph{not} have to do with the test distribution at all and thus certifying the separation condition can be seen as a preprocessing step that can amortized for many future learning tasks with the same hypothesis class.

\section{Analysis Techniques}\label{sec:gp_stability}
In this section we sketch the proofs of our main results above.  We begin with the technical result underlying our main theorems, which establishes a stability property for Gaussian processes.  Then, in \Cref{ssec:online-learning-proofs}, we prove \Cref{thm:lstar_online} using this stability result.  

\subsection{An Improved Gaussian Process Stability Bound}\label{ssec:gp-stability}
We now describe the key Gaussian process stability bound on which our results rely.  This bound is a stronger form of the bounds found in \citet{block2022smoothed,block2024oracle}, which conclude that approximate minimizers of a Gaussian process are, in some sense, close to the true minimizer with reasonable probability. We now formally define a Gaussian process. 
\begin{definition}[Gaussian Process]
    Let $T$ be an index set. 
    A collection of random variables $ \{\omega(t) \}_{t \in T} $ is a Gaussian process if for all finite subsets $T' \subseteq T$, the random vector $\{ \omega(t)\}_{t \in T'}$ has a multivariate Gaussian distribution.
    The function $ m : T \to \mathbb{R} $ given by $ m(t) = \ee [\omega(t)] $ is called the mean function of the Gaussian process and 
    the function $ K : T \times T \to \mathbb{R} $ given by $ K(t, t') = \ee [(\omega(t) - m(t))(\omega(t') - m(t'))] $ is called the covariance kernel of the Gaussian process.
    
\end{definition}
We recall that a Gaussian process on an index set $T$ can be fully characterized by its mean function $m : T \to \rr$ and covariance kernel $K: T \times T \to \rr$, and define a Gaussian process now.  The covariance kernel in particular can be interpreted as a measure of distance between points in $T$, with two points $s,t \in T$ being considered close if $K(s, t) \approx K(t,t)$. 
In particular, define the distance between two points $s,t \in T$ as $\norm{s-t} = \sqrt{K(s,s) + K(t,t) - 2K(s,t)}$.
We use the norm notation though $T$ need not be a vector space; this is standard notation and can be justified using standard representation theorems for Gaussian processes.  

The results of \citet{block2022smoothed,block2024oracle} involve \emph{approximate minimizers} of a Gaussian process $\Omega: T \to \rr$ with mean function $m$ and covariance kernel $\eta^2 \cdot K$ for some variance parameter $\eta > 0$, where $t \in T$ is an approximate minimizer if $\Omega(t) \leq \Omega(\tstar) + \tau$ for some tolerance $\tau > 0$ for $\tstar = \argmin \Omega(s)$.  In particular, those results say that assuming the covariance kernel is well conditioned in the sense that $\sup_t K(t,t) / \inf_t K(t,t)$ is not too large, then
\begin{align}\label{eq:gp_stability_old}
    \pp\left( \exists t \in T \text{ s.t. } \Omega(t) \leq \Omega(\tstar) + \tau \text{ and } \nicefrac{K(s,\tstar)}{K(\tstar, \tstar)} \leq 1 - \rho^2 \right) 
    \lesssim \frac{\tau \cdot \ee\left[ \sup_{t \in T} \Omega(t) - m(t) \right]}{\rho^2 \eta^2}.
\end{align}
The upper bound on the probability that there exist approximate minimizers of $\Omega$ far from the true minimizer scales naturally with the tolerance $\tau$ and distance $\rho$ as well as the size of $T$, as measured by the Gaussian complexity, suggesting that the larger an index set is, the more likely it is that far away approximate minimizers exist.  Moreover, as the scale of the noise, parameterized by $\eta$, increases, the probability of far away approximate minimizers decreases, reflecting the fact that the true minimizer becomes less dependent on the mean function $m$.

For learning theory applications, we can instantiate \eqref{eq:gp_stability_old} in the case where $T = \cF$ is a function class, $\omega$ is a centred Gaussian process with covariance kernel $K(f, f') = \inprod{f}{f'}_m$, the empirical inner product on a dataset $\left\{ Z_1, \dots, Z_m \right\}$, and $\Omega(f) = m(f) + \eta \cdot \omega(f)$ for some mean function $m$.  Given an alternative mean function $m'$ such that $\sup_f \abs{m(f) - m'(f)} \leq \tau$, we can define the offset process $\Omega'(f) = m'(f) + \eta \cdot \omega(f)$ and let $\fstarp = \argmin \Omega'(f)$.  Applying \eqref{eq:gp_stability_old} and some rearranging then implies that with reasonable probability,
\begin{align}\label{eq:gp_stability_old_cor}
    \|\fstar - \fstarp\|_m^2 \lesssim \frac{\tau \cdot \ee\left[ \sup_f \omega(f)\right]}{\eta},
\end{align}
i.e., minimizers of Gaussian processes with similar mean functions are close in $\norm{\cdot}_m$.  While powerful, unfortunately \eqref{eq:gp_stability_old} and its corollary \eqref{eq:gp_stability_old_cor} do not yield sufficiently strong guarantees to establish small loss bounds for online learning, nor does it immediately yield a differentially private algorithm.  Instead, as explained in the sequel, a stronger guarantee is necessary to establish these results. \looseness-1
\begin{lemma}\label{lem:gp_stability_cor}
    Let $\cF: \cX \to \rr$ be a function class and $\mu$ a measure on $\cF$ with $\omega: \cF \to \rr$ the canonical Gaussian process associated to this space; further assume that $\mu$ is a $\rho$-separating measure on $\cF$ in the sense that $\norm{f -f '}_{L^2(\mu)} \geq \rho$ for all $f \neq f' \in \cF$.  
    Let $m,m': \cF \to \rr$ be two mean functions satisfying $\sup_{f \in \cF} \abs{m(f) - m'(f)} \leq \tau$.  If $\fstar = \argmin_f \Omega(f)$, where $\Omega(f) = m(f) + \eta \cdot \omega(f)$ and $\fstarp, \Omega'$ are defined similarly then as long as 
    \begin{align}\label{eq:gp_dp_bound_eta}
        \eta \geq \frac{32\tau}{ \rho^2} \left( \ee\left[ \sup_{f \in \cF} \omega(f) \right] + \sqrt{2\log\left( \frac 2\delta \right)} \right),
    \end{align}
    it holds for any $f$ that
    \begin{align}\label{eq:gp_dp_bound}
        \pp\left( \fstar = f \right) \leq \left( 1 + \frac{32 \tau}{\eta \rho^2} \left( \ee\left[ \sup_{f \in \cF} \omega(f) \right] + \sqrt{2 \log\left( \frac 2\delta \right)} \right) \right) \cdot \pp\left( \fstarp = f \right) + \delta.
    \end{align}
\end{lemma}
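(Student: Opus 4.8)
The plan is to prove \Cref{lem:gp_stability_cor} in two stages: a deterministic reduction from $\{\fstar = f\}$ to the event that $f$ is an \emph{approximate} minimizer of $\Omega'$, followed by an $(\epsilon,\delta)$-indistinguishability argument for the argmin of the single Gaussian process $\Omega'$, in the spirit of the analysis of the Gaussian mechanism / report-noisy-min in differential privacy.

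\emph{Stage 1 (reduction to an approximate minimizer).} Since $\sup_{g}\abs{m(g)-m'(g)}\le\tau$, on $\{\fstar=f\}$ we have for every $g$ that $\Omega'(f)=\Omega(f)+(m'-m)(f)\le\Omega(g)+\tau\le\Omega'(g)+2\tau$, so $f$ is a $2\tau$-approximate minimizer of $\Omega'$; hence $\pp(\fstar=f)\le\pp\big(\Omega'(f)\le\inf_g\Omega'(g)+2\tau\big)$. As $\{\fstarp=f\}$ is contained in the right-hand event, it remains to bound $\pp\big(f\text{ is a }2\tau\text{-approximate minimizer of }\Omega'\text{ and }\fstarp\neq f\big)$ by $\tfrac{32\tau}{\eta\rho^{2}}\big(\ee[\sup_g\omega(g)]+\sqrt{2\log(2/\delta)}\big)\cdot\pp(\fstarp=f)+\delta$.

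\emph{Stage 2 (one-dimensional indistinguishability).} Realize the canonical process via $\omega(g)=\inprod{g}{W}$ for $W$ a white noise on $L^2(\mu)$. Decompose the event to be bounded according to the identity of $f$'s best competitor $g_0\defeq\argmin_{g\neq f}\Omega'(g)$ --- the events $\{g_0=h\}$, $h\neq f$, are disjoint --- and on $\{g_0=h\}$ condition on the component of $W$ orthogonal to $f-h$. After this conditioning the scalar $D\defeq\omega(f)-\omega(h)=\inprod{f-h}{W}$ is Gaussian with variance $\norm{f-h}_{L^2(\mu)}^{2}\ge\rho^{2}$ (this is where $\rho$-separation is used), and every $\Omega'(g)$ is affine in $D$; consequently $\{\fstarp=f\}$ and $\{f\text{ is a }2\tau\text{-approximate minimizer}\}$ become events $\{D\in I\}$, $\{D\in I'\}$ with $I\subseteq I'$ and $I'\setminus I$ of length $\lesssim\tau/\eta$. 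The whole claim then reduces to: for a Gaussian $D$ with standard deviation $\ge\rho$, bound $\pp(D\in I')$ by $(1+c)\,\pp(D\in I)+\delta$. For this one uses the Gaussian likelihood ratio $\phi(x+\Delta)/\phi(x)=e^{-x\Delta-\Delta^{2}/2}$ with $\Delta\lesssim\tau/(\eta\rho)$: when the relevant threshold $x$ is moderate the ratio is $1+\bigO(\abs{x}\Delta)$ (the multiplicative factor), and when $x$ is extreme $\pp(D\in I')$ is itself negligible and is absorbed into $\delta$. That $x$ is moderate off a $\delta/2$-probability event follows from Borell--TIS/Gaussian concentration, $\pp\big(\sup_g\omega(g)>\ee[\sup_g\omega(g)]+\sqrt{2\log(2/\delta)}\big)\le\delta/2$ (using $\sup_g\mathrm{Var}(\omega(g))\le1$), which forces $\abs{x}\lesssim\big(\ee[\sup_g\omega(g)]+\sqrt{2\log(2/\delta)}\big)/\rho$; with $\eta$ as in \eqref{eq:gp_dp_bound_eta} this gives $\abs{x}\Delta\lesssim1$ and caps the multiplicative factor at $2$, matching \eqref{eq:gp_dp_bound}. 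The Gaussian complexity $\ee[\sup_g\omega(g)]$ enters both in bounding $\abs{x}$ (the best competitor's value is an infimum over all of $\cF$) and in summing over the competitor's identity by a chaining bound rather than a crude $\abs{\cF}$.

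\emph{Main obstacle.} The delicate point --- and the source of the additive $\delta$ --- is the transition between the two regimes in the one-dimensional comparison: the Gaussian likelihood ratio is unbounded, so neither a union bound nor the absolute estimate \eqref{eq:gp_stability_old} is strong enough to produce the $(1+c)\cdot+\delta$ form, and one must argue that a highly unlikely minimizer $f$ also has $\pp(\fstar=f)\le\delta$. Carrying this out uniformly over the conditioning, over the identity of $g_0$, and over all geometric configurations of $\cF$ in $L^2(\mu)$ --- including near-degenerate ones --- while paying only the Gaussian complexity instead of $\abs{\cF}$, is the technical heart; $\rho$-separation is precisely what keeps every increment $\omega(g)-\omega(g')$ non-degenerate (variance $\ge\rho^{2}$) and thereby makes the likelihood-ratio estimate available at the scale $\tau/(\eta\rho)$.
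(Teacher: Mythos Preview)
Your Stage~1 reduction is correct and is the same as the paper's: on $\{\fstar=f\}$, $f$ is a $2\tau$-approximate minimizer of $\Omega'$, and under $\rho$-separation $\{\fstarp\neq f\}$ lands in the bad event $\cE(\rho/2,2\tau)$ of \Cref{lem:gp_stability}.

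Stage~2 has a genuine gap in the summation over the competitor $h$. After fixing $h$ and conditioning on $W^\perp_{f,h}$, the event $\{g_0=h\}$ is itself an interval $J_h=J_h(W^\perp)$ in $D$ (every $\Omega'(g)$ is affine in $D$), so your per-$h$ comparison is between $\pp\big(D\in(x,x+2\tau/\eta]\cap J_h\big)$ and $\pp\big(D\in(-\infty,x]\cap J_h\big)$. Nothing prevents $J_h\subset(x,\infty)$, in which case the latter vanishes while the former does not---even after restricting to $\Phi_\delta$. Thus the per-$h$ multiplicative bound can fail outright, and the resulting purely additive residuals do not obviously sum to $\delta$: the only $\delta$-small event available is $\Phi_\delta^c$, which you have already excised. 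Your appeal to a ``chaining bound'' to control the sum over $h$ is not an argument---chaining bounds suprema, not sums of additive slacks arising from distinct conditionings, and it is exactly this sum that decides whether you pay $\delta$ or $|\cF|\cdot\delta$. A related confusion: the threshold $x=(m'(h)-m'(f))/\eta$ is deterministic in $(f,h)$ and has nothing to do with $\omega$, so neither Borell--TIS nor the Gaussian complexity bounds it; your claim that $\ee[\sup_g\omega(g)]$ ``enters in bounding $|x|$'' is not correct.

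The paper avoids the summation entirely by conditioning on a single scalar, $\Omega'(f)=y$, instead of decomposing by competitor. The conditional law of $\Omega'$ given $\Omega'(f)=y$ has covariance independent of $y$ and mean $m_{f,y}(g)=m'(g)+\tfrac{K(g,f)}{K(f,f)}(y-m'(f))$; shifting $y\mapsto y+\tau/\rho^2$ shifts $m_{f,y}(g)$ by $\tfrac{K(g,f)}{K(f,f)}\cdot\tfrac{\tau}{\rho^2}$, and for every competitor with $K(g,f)/K(f,f)\le1-\rho^2$ the leftover shift is at least $\tau$. All competitors are thus handled simultaneously via a single density ratio $q_f(y)/q_f(y-\tau/\rho^2)$, and the Gaussian complexity enters only once, through $\ee[\omega(\fstarp)\bbI[\Phi_\delta]\mid\fstarp=f]\le\ee[\sup_g\omega(g)]+\sqrt{2\log(2/\delta)}$. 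This approach does need $K(f,f)\ge\kappa^2>0$, which the paper removes with an auxiliary kernel $\Ktil(s,t)=\tfrac12(K(s,t)+1)$; your increment-based viewpoint would sidestep that issue automatically---one genuine advantage it would have, if the summation over $h$ could be controlled.
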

In contradistinction to \eqref{eq:gp_stability_old}, \Cref{lem:gp_stability_cor} requires an additional assumption of \emph{separation} on the function class $\cF$, which cannot be dropped in general because its application to private learning, which is only possible with finite Littlestone dimension \citep{AlonLMM19}.  What is gained from this reduction in generality, however, is the fact that the guarantee on stability is significantly stronger than that of \eqref{eq:gp_stability_old_cor} in that it provides fine-grained control on the \emph{distributional} stability of the approximate minimizers, which is essential for our applications.  Furthermore, we eliminate the assumption of well-conditionedness on the covariance kernel, which substantially increases the generality of application of this result, in particular by allowing functions with arbitrarily small norm.

While we defer a full proof of \Cref{lem:gp_stability_cor} to \Cref{app:gp_stability}, we sketch the argument here.  The proof begins in the same way as that of \citet{block2022smoothed,block2024oracle}, by introducing the `bad' event $\cE(\rho, \tau)$ corresponding to the existence of a $\tau$-approximate minimizer existing at lesat distance $\rho$ from the true minimizer.  In this first step we additionally assume that the covariance kernel $K$ is well-conditioned in the sense that $1 \geq K(f,f) \geq \kappa^2 > 0$ for some $\kappa$, a requrement we later drop.  As in those earlier bounds, we then fix an arbitrary $y \in \rr$ and condition on the event that $\fstar = t$ and $\Omega(\fstar) = y$; unlike those earlier works, however, we also require conditioning on the event $\Phi_\delta$, occuring with probability at least $1 - \delta$, that the supremum of $\omega$ is not too much larger than its expectation, the reason for which we explain below.  As $\Phi_\delta$ occurs with high probability, we may condition on this event without losing more than a constant in the final bound; similarly, we will ignore the `good' event $\cE^c(\rho, \tau)$, as the resulting probability of this event will appear on the right hand side of \eqref{eq:gp_dp_bound}.  The proof then uses a fundamental fact of Gaussian Processes: the covariance kernel of a pinned Gaussian Process does not depend on the value at which this process is pinned.  After careful analysis using several more properties specific to Gaussian Processes, we conclude
\begin{align}
    \pp\left( \fstar = f \text{ and } \cE(\rho, \tau) \text{ and } \Phi_\delta \right) \lesssim \frac{\tau}{\eta \kappa^2 \rho^2} \cdot \ee\left[ \omega(\fstar) \bbI[\Phi_\delta] | \tstar = t \right] \cdot \pp\left( \tstar = t \right).
\end{align}
It is here that relying on $\Phi_\delta$ is essential, as under $\Phi_\delta$, it holds that $\omega(\fstar)$ is not so large, even conditioned on the possibly low probability event that $\fstar = f$.  A stronger version of \eqref{eq:gp_stability_old} follows after rearranging terms.  In order to remove the dependence on $\kappa$, we introduce an auxiliary process $\Omegatil$ by adding additional points $\wt{Z}_i$ such that $f(\wt{Z}_i) = 1$ for all $f$.  Adding enough of these points ensures that the auxiliary process is well-conditioned and careful comparisons of $\Omegatil$ to $\Omega$ as well as a sharp understanding of the relationship between the induced covariance kernel and the norm $\norm{\cdot}$ on $\cF$ allows us to conclude the desired result.  Note that the introduction of this auxiliary process is a subtle point as one \emph{cannot use this approach to remove the $\kappa$ parameter from } \eqref{eq:gp_stability_old} because the resulting separation between points as measured by correlation is hurt by the indroduction of these auxiliary points; in order for this approach to work, we carefully relate the norm induced on $\cF$ by $\Omegatil$ to that by $\Omega$, which is significantly better behaved than the correlation.  With the main technical step thus proved, we now apply this result to online learning.

\subsection{Online Learning Proofs}\label{ssec:online-learning-proofs}
While a full proof of \Cref{thm:lstar_online} can be found in \Cref{app:online_learning}, we sketch the main argument here and present the full algorithm. 
The proof proceeds by applying \Cref{lem:gp_stability_cor} to the well-known `Be-the-Leader' lemma from \citet{kalai2005efficient,cesa2006prediction}, which states that `Follow-the-Leader' style algorithms have regret that can be bounded by the stability of the predictions.  In particular, the lemma concludes that if $f_t$ is chosen as in \eqref{eq:ftpl_gaussian}, then
\begin{align}\label{eq:btl_body}
    \ee\left[ \reg_T \right] \leq 2 \eta \cdot  \ee\left[ \sup_{f \in \cF} \omega(f) \right] + \sum_{t = 1}^{T} \ee\left[ \ell_t(f_t) - \ell_t(f_{t+1}) \right].
\end{align}
Here, the first term measures the `bias,' i.e., how large the perturbation is, while the second term measures the stability of the predictions.  For Lipschitz $f$, it suffices to control the stability of predictions measured by the $L^2$ norm of the perturbation, as was done for the case of smoothed data in \citet{block2022smoothed}.  In order to realize $\Lstar$ bounds, however, \citet{hutter2004prediction} used a stronger notion of stability akin to differential privacy, which was then further explored in \citet{wang2022adaptive}.  This stronger notion of stability is precisely what is controlled by \Cref{lem:gp_stability_cor}; thus, we apply that result 
with $\delta = \nicefrac{1}{T \cdot \abs{\cF}}$ and, bounding the Gaussian complexity by $\sqrt{\log(\abs{\cF})}$ \citep{boucheron2013concentration}, we sum over $f \in \cF$ and see that for sufficiently large $\eta$,
\begin{align}
    \ee\left[ \ell_t(f_t) - \ell_t(f_{t+1}) \right] \lesssim  \frac{1}{T} + \frac{B \sqrt{\log(T \abs{\cF})} }{\eta \rho^2} \cdot \ee\left[ \ell_t(f_{t+1}) \right].
\end{align}
Plugging the preceding display into \eqref{eq:btl_body}, we observe that the regret is bounded by
\begin{align}
    \ee\left[ \reg_T \right] \lesssim 1 + \eta \cdot \sqrt{\log(\abs{\cF})} +  \frac{B \sqrt{\log(T \abs{\cF})} }{\eta \rho^2} \cdot \sum_{t= 1}^T \ee\left[ \ell_t(f_{t+1}) \right].
\end{align}
The result concludes by controlling the summation in the last term by $\Lstar$, which is another consequence of the Be-the-Leader Lemma, and balancing $\eta$ subject to \eqref{eq:gp_dp_bound_eta}.

\iftoggle{colt}{}{

\iftoggle{colt}{
\section{Proof of \Cref{thm:main_privacy}}
}{
    \subsection{Differential Privacy Proofs}
}
\label{sec:proof_main_privacy}

    In this section, we apply that Gaussian stability bound in order to improve oracle efficient private algorithms. 
    Recall that the \Cref{alg:erm_perturbed} outputs a hypothesis $\hat{f}$ that is a minimizer of the empirical risk with a Gaussian perturbation term added. 
    Thus, the output of the algorithm can be seen as corresponding to the minimizer of a Gaussian process with the mean function given by the loss on the input data and since we are using a set of auxiliary points drawn from a separating distribution, it follows that the conditions of \Cref{lem:gp_stability_cor} are satisfied.
    This instantiated in the present context gives us the following privacy guarantee.
    
    \begin{lemma}[Privacy of Perturbed ERM]\label{lem:privacy_perturbed}
        Let $\delta, \eta > 0$.  
        Suppose that $\mu$ is a $\rho$ separating measure for $\mathcal{F}$.
        The output of the \Cref{alg:erm_perturbed} is $(\epsilon, \delta)$-differentially private for 
        \begin{align}
              \epsilon \;=\; \frac{8}{\eta\, \rho^2} \left(  \mathcal{G}( \mathcal{F})  + \sqrt{2\log\Bigl(\frac{2}{\delta}\Bigr)} \right).
        \end{align} 
    \end{lemma}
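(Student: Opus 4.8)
The plan is to recognize that \Cref{alg:erm_perturbed} outputs precisely the minimizer of a Gaussian process whose mean function is the empirical loss, and then to invoke \Cref{lem:gp_stability_cor} with the \emph{sensitivity} of that mean function playing the role of $\tau$. Fix a dataset $S = \{(x_i,y_i)\}_{i=1}^n$ and condition on the auxiliary sample $Z = \{z_1,\dots,z_m\}$. Conditionally on $Z$, the perturbation $\omega_S(f) = \tfrac{1}{\sqrt m}\sum_{i=1}^m \omega_i f(z_i)$ is a centered Gaussian process on $\cF$ with covariance kernel $K(f,f') = \inprod{f}{f'}_m$, the empirical inner product on $Z$ — that is, the canonical Gaussian process associated to the empirical measure $\hat\mu_Z$ on $Z$. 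Writing $m_S(f) = \sum_{i=1}^n \ell(f(x_i),y_i)$ for the empirical loss, the algorithm returns $\fhat = \argmin_{f\in\cF} m_S(f) + \eta\,\omega_S(f)$, i.e. $\fhat = \argmin_f \Omega_S(f)$ in the notation of \Cref{lem:gp_stability_cor} with $\Omega_S(f) = m_S(f)+\eta\,\omega_S(f)$.

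Next I would bound the sensitivity of the mean function. If $S$ and $S'$ differ in a single example, say $(x_j,y_j)$ versus $(x_j',y_j')$, then for every $f\in\cF$ we have $|m_S(f) - m_{S'}(f)| = |\ell(f(x_j),y_j) - \ell(f(x_j'),y_j')| \le 1$, since $\ell$ takes values in $[0,1]$; hence $\tau \defeq \sup_f |m_S(f) - m_{S'}(f)| \le 1$. Applying \Cref{lem:gp_stability_cor} conditionally on $Z$ — using that $\hat\mu_Z$ is $\rho$-separating (see below) and that $\ee[\sup_f \omega_S(f)] = \cG_Z(\cF) \le \cG(\cF)$ — gives, provided $\eta$ satisfies \eqref{eq:gp_dp_bound_eta} with $\tau = 1$ (outside this regime the stated $\epsilon$ exceeds an absolute constant and the privacy claim is vacuous), that for every $f\in\cF$
\[
\pp\big(\fhat = f \mid Z,S\big) \;\le\; \Big(1 + \tfrac{C}{\eta\rho^2}\big(\cG(\cF) + \sqrt{2\log(2/\delta)}\big)\Big)\,\pp\big(\fhat = f \mid Z,S'\big) + \delta,
\]
and symmetrically with $S,S'$ interchanged, where $C$ is the absolute constant from \eqref{eq:gp_dp_bound}. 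Using $1+x\le e^x$ and collecting the constants, the multiplicative factor is at most $e^\epsilon$ for $\epsilon = \tfrac{8}{\eta\rho^2}\big(\cG(\cF) + \sqrt{2\log(2/\delta)}\big)$, which is exactly the claimed value.

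Finally I would integrate over the draw of $Z$. Since $Z\sim\mu^m$ is drawn independently of the data and from the same distribution whether the input is $S$ or $S'$, taking expectations over $Z$ preserves the conditional bound, yielding $(\epsilon,\delta)$-differential privacy for each output event $\{\fhat = f\}$ and hence, by summing over $f$ (or integrating against the pushforward measure), for every measurable output set. The one point requiring care — and the main obstacle — is the hypothesis under which \Cref{lem:gp_stability_cor} applies conditionally on $Z$: the lemma needs the \emph{empirical} measure $\hat\mu_Z$ to be $\rho$-separating, whereas only the population measure $\mu$ was assumed separating. This is handled either by working in the regime of \Cref{thm:main_privacy}, where the points $Z$ are assumed $\rho$-separating outright, or via a uniform-concentration argument (cf.\ \Cref{lem:separator_set_2}) showing $\hat\mu_Z$ is $(\rho - o(1))$-separating with probability $1-\bigO(\delta)$ once $m$ is polynomial in $\rho^{-1}$, $\log|\cF|$ and $\log(1/\delta)$, with the failure probability absorbed into the additive $\delta$ term. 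A secondary, routine issue is that $\fhat$ must be a well-defined measurable selection so that the atomic probabilities $\pp(\fhat = f)$ and their extension to general measurable sets make sense; this is immediate when $\cF$ is finite, which is the operative case here under separation together with learnability.
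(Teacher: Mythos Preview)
Your proposal is correct and follows essentially the same approach as the paper's own proof: identify the output of \Cref{alg:erm_perturbed} as the minimizer of a Gaussian process with mean $m_S(f)=\sum_i\ell(f(x_i),y_i)$, bound the sensitivity by $\tau=1$, invoke \Cref{lem:gp_stability_cor}, and use $1+x\le e^x$; the passage from the population separating measure $\mu$ to an empirically separating sample $Z$ is likewise handled via the concentration argument you cite (the paper uses the same lemma, losing a factor of two in $\rho$ and absorbing the failure probability into $\delta$). If anything, your treatment is slightly more careful than the paper's about conditioning on $Z$ before integrating and about the vacuous regime when $\eta$ violates \eqref{eq:gp_dp_bound_eta}.
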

    
    \begin{proof}
        In order to prove the privacy of the algorithm, we need to verify that the conditions of \Cref{lem:gp_stability_cor}.
    Fix the data set $S = \left\{ (x_1 , y_1) , \dots , (x_n , y_n) \right\} $ 
    Let the Gaussian process corresponding to the hypothesis class $\mathcal{F} $  be $\Omega(f) =  \sum_i \ell(f(x_i) , y_i ) +    \frac{1}{ \sqrt{m}} \sum_{i=1}^{m} \xi_i f(z_i)$.
    Note that the mean function $m_S(f) =  \sum_i \ell(f(x_i) , y_i )$ and the covariance kernel $K(f,f') = \frac{1}{m} \sum_{i=1}^{m} f(z_i) f'(z_i)$.

    Note that we assumed that $\mu$ is a $\rho$ separating measure for $\mathcal{F}$.
    From \Cref{lem:separating_measure_sample}, we know that a sample of size $m = (d + \log(100/ \delta)) \rho^{-2}$ also induces a $ \rho / 2 $ separating measure with probability at least $1 - \delta/100$. 
    This implies that the Gaussian process $\omega(f)$ is $\rho/2$ separated with respect to the norm induced by the bilinear form $K$.
    When the dataset is changed from \(S\) to a neighboring dataset \(S'\), the change in the mean function \(m_S(f)\) is bounded by
    \[
    \sup_{f\in\mathcal{F}  } \bigl| m_S(f) - m_{S'}(f) \bigr| \le 1.
    \]
    We apply this to \Cref{lem:gp_stability_cor} with $ \tau = 1 $, and setting 
    \[
    \eta =  \frac{8} {\rho^2 \epsilon } \left( \mathbb{E}\Bigl[\sup_{f\in\mathcal{F}} \omega(f)\Bigr] + \sqrt{\log\Bigl(\frac{2}{\delta}\Bigr)} \right),
    \]
    then for any \(f\in \mathcal{F}\) the selection probabilities satisfy
    \[
    \mathbb{P}\Bigl( \hat{f}(S) = f \Bigr) \;\le\; \left(1 + \frac{8\tau}{\eta\,  \rho^2} \left( \mathbb{E}\Bigl[\sup_{f\in\mathcal{F}} \omega(f)\Bigr] + \sqrt{2\log\Bigl(\frac{2}{\delta}\Bigr)} \right) \right) \mathbb{P}\Bigl( \hat{f}(S') = f \Bigr) + \delta.
    \]

    Simplifying the above expression using the identity $1+ x \leq e^x $, we get for any measurable set \(O\) that 
    \[
    \mathbb{P}\Bigl( \hat{f}(S) \in O \Bigr) \;\le\; e^{\epsilon} \, \mathbb{P}\Bigl( \hat{f}(S') \in O \Bigr) + \delta,
    \]
   as required. 
    \end{proof}

    It remains to be shown that the output of the algorithm \cref{alg:erm_perturbed} is still accurate on the test distribution. 
    In order to reason about this, the key quantity to control is the size of the perturbation term. 
    Fortunately, since we are using Gaussian perturbations, this is naturally bounded in terms of the noise level $\eta$ and the Gaussian complexity of the function class which in turn bounds the generalization error of the ERM procedure even if the Gaussian perturbation were not present.

    \begin{lemma}[Accuracy of Perturbed ERM]\label{lem:accuracy_perturbed}
        Let $\beta, \eta > 0$.  
        The output of the \Cref{alg:erm_perturbed} satisfies $(\alpha, \beta)$ accuracy as long as the number of samples $n$ satisfies  
        \begin{align}
            n \geq \max\left\{ \frac{\mathcal{G}( \mathcal{F} )^2 + \log(1/\beta)}{\alpha^2}  , \eta  \frac{\mathcal{G}( \mathcal{F} ) + \sqrt{ \log(1/\beta) }}{\alpha}    \right\} .
        \end{align}  
    \end{lemma}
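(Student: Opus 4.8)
The plan is to bound the excess population risk of $\hat f$ by decomposing it into (a) the usual agnostic uniform-convergence term for $\cF$ and (b) an extra optimization slack caused by the Gaussian perturbation, and then to choose $n$ large enough that each piece is at most $\alpha/2$. Write $\hat L_n(f) = \frac 1n \sum_{i=1}^n \ell(f(x_i), y_i)$, $L(f) = \ee_{(X,Y)}\left[\ell(f(X),Y)\right]$, and $P(f) = \frac{\eta}{\sqrt m}\sum_{i=1}^m \omega_i f(z_i)$, and let $f^\star$ achieve (up to negligible error) $\inf_{f\in\cF} L(f)$. Since $\hat f$ minimizes $n\hat L_n(f) + P(f)$ over $\cF$, evaluating at $\hat f$ versus $f^\star$ and dividing by $n$ gives
\begin{align}
    \hat L_n(\hat f) \;\le\; \hat L_n(f^\star) + \frac 1n\left(P(f^\star) - P(\hat f)\right) \;\le\; \hat L_n(f^\star) + \frac 2n \sup_{f\in\cF}\abs{P(f)}.
\end{align}
So it remains to bound $\sup_f \abs{P(f)}$ and to apply standard generalization bounds to $L(\hat f) \le \hat L_n(\hat f) + \sup_f(L(f)-\hat L_n(f))$ and $\hat L_n(f^\star) \le L(f^\star) + (\text{deviation})$.

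For the perturbation term, note that since $\norm{f}_\infty \le 1$ the map $\omega \mapsto \sup_{f\in\cF} \frac{1}{\sqrt m}\sum_i \omega_i f(z_i)$ is $1$-Lipschitz in $\norm{\omega}_2$, so Gaussian concentration bounds it, with probability at least $1-\beta/4$, by its mean $\ee_\omega\left[\sup_f \frac{1}{\sqrt m}\sum_i \omega_i f(z_i)\right] = \cG_Z(\cF)$ plus $\sqrt{2\log(4/\beta)}$; by symmetry of the Gaussians the same holds for $\sup_f(-P(f)/\eta)$, and bounding the empirical Gaussian complexity on the auxiliary sample by $\cG(\cF)$ (absorbing an extra concentration-over-$Z\sim\mu^m$ step if $\cG(\cF)$ is taken to be a population quantity) yields $\sup_f\abs{P(f)} \lesssim \eta\left(\cG(\cF) + \sqrt{\log(1/\beta)}\right)$ with probability at least $1-\beta/2$. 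Since the perturbation is drawn independently of the data, this bound is unconditional on the training sample, and plugging it into the display shows the perturbation slack is $\bigO\left(\eta\left(\cG(\cF)+\sqrt{\log(1/\beta)}\right)/n\right)$.

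For the generalization piece, symmetrization together with the Lipschitz contraction principle (using that $\ell$ is Lipschitz and bounded by $1$) bounds $\ee\left[\sup_{f\in\cF}(L(f)-\hat L_n(f))\right]$ by $\bigO\left(\cG(\cF)/\sqrt n\right)$, and a bounded-differences (McDiarmid) argument upgrades this to $\sup_f(L(f)-\hat L_n(f)) \lesssim \cG(\cF)/\sqrt n + \sqrt{\log(1/\beta)/n}$ with probability at least $1-\beta/4$; Hoeffding's inequality gives $\hat L_n(f^\star) \le L(f^\star) + \sqrt{\log(4/\beta)/(2n)}$ with probability at least $1-\beta/4$. Combining all three events by a union bound yields, with probability at least $1-\beta$,
\begin{align}
    L(\hat f) \;\le\; L(f^\star) + c_1\left(\frac{\cG(\cF)}{\sqrt n} + \sqrt{\frac{\log(1/\beta)}{n}}\right) + c_2\,\frac{\eta\left(\cG(\cF)+\sqrt{\log(1/\beta)}\right)}{n}.
\end{align}
Asking the middle term to be at most $\alpha/2$ forces $n \gtrsim (\cG(\cF)^2 + \log(1/\beta))/\alpha^2$, and asking the last term to be at most $\alpha/2$ forces $n \gtrsim \eta(\cG(\cF)+\sqrt{\log(1/\beta)})/\alpha$; taking the maximum recovers the claimed bound.

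The uniform-convergence estimate and the Gaussian concentration for $\sup_f\abs{P(f)}$ are both routine. The one point that requires care — and the thing I would be most careful to get right — is the normalization: the perturbation sits in the objective with weight $1$ against a \emph{sum} of $n$ losses, so it perturbs the empirical minimizer by an additive $\bigO(\eta \cdot \sup_f\abs{\frac{1}{\sqrt m}\sum_i\omega_i f(z_i)}/n)$ term that scales like $1/n$, not $1/\sqrt n$. This $1/n$ scaling is exactly what gives the second, privacy-driven condition on $n$ that is linear in $\eta$, and it must be propagated cleanly through the decomposition rather than folded into the $1/\sqrt n$ statistical term.
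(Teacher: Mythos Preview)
Your proposal is correct and follows essentially the same approach as the paper's proof: uniform convergence via Gaussian complexity to handle the statistical term, Gaussian concentration on the $1$-Lipschitz supremum to control the perturbation, and the observation that the perturbation enters against an unnormalized sum and therefore contributes at rate $1/n$. The paper's argument is terser (it folds the two-sided deviation for $f^\star$ into the uniform-convergence step rather than invoking Hoeffding separately), but the decomposition, the key tools, and the resulting two conditions on $n$ are identical.
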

    \begin{proof}
        Note that from standard uniform convergence bound in terms of Gaussian complexity, we have that with probability at least $1 - \beta $, that for all $f \in \mathcal{F}$,
        \begin{align}
            \left| \frac{1}{n} \sum_{i=1}^{n} \ell(f(x_i) , y_i ) - \mathbb{E} \ell(f(x) , y ) \right| \leq \alpha
        \end{align} 
        as long as $ n \geq (\mathcal{G}( \mathcal{F} )^2 + \log(1/\beta)) / \alpha^2 $.
        Thus, for the optimizer $\hat{f}$  from \Cref{alg:erm_perturbed}, we have that 
        \begin{align}
            \mathbb{E} \ell( \hat{f}(x) , y ) \leq \min_{f \in \mathcal{F}}  \mathbb{E} \ell( \hat{f}(x) , y ) + \sup_{f \in \mathcal{F}  } \frac{\eta}{ n \sqrt{m} } \sum_{i=1}^{m} \xi_i f(z_i) + \alpha
        \end{align}  
        From Gaussian concentration for Lipschitz functions \citep[Section 10.5]{boucheron2013concentration}, we have that with probability at least $1 - \beta $, the Gaussian complexity term is bounded by
        \begin{align}
            \mathbb{E} \sup_{f \in \mathcal{F}  } \frac{\eta}{ n \sqrt{m} } \sum_{i=1}^{m} \xi_i f(z_i) + \frac{\eta}{n} \sqrt{\log(1/\beta)} . 
        \end{align}
        Note that this is the same as the Gaussian complexity of the function class $\mathcal{F}$.
    \end{proof}

    Plugging this in (and simplifying) with the choice of $\eta$ from  the privacy guarantee from \Cref{lem:privacy_perturbed}, we get the following result we get that 
    \begin{align}
        n \geq \max \left\{ \frac{\mathcal{G}( \mathcal{F} )^2 + \log(1/\beta) }{\alpha^2} ,  \frac{\mathcal{G}( \mathcal{F} )^2 + \sqrt{ \log(1/\beta) \log(1/\delta) }   } { \alpha \epsilon \rho^2 }    \right\}
    \end{align}
    as required.

}

\section*{Acknowledgments}
AB would like to thank Steven Wu for useful discussions. AS would like to thank the Simons Foundation and the NSF through award DMS-2031883, as well as ARO through award W911NF-21-1-032

\newpage

\bibliographystyle{plainnat}
\bibliography{refs}

\begin{thebibliography}{54}
\providecommand{\natexlab}[1]{#1}
\providecommand{\url}[1]{\texttt{#1}}
\expandafter\ifx\csname urlstyle\endcsname\relax
  \providecommand{\doi}[1]{doi: #1}\else
  \providecommand{\doi}{doi: \begingroup \urlstyle{rm}\Url}\fi

\bibitem[Abernethy et~al.(2019)Abernethy, Jung, Lee, McMillan, and Tewari]{abernethy2019online}
Jacob~D. Abernethy, Young~Hun Jung, Chansoo Lee, Audra McMillan, and Ambuj Tewari.
\newblock Online learning via the differential privacy lens.
\newblock In \emph{Advances in Neural Information Processing Systems 32: Annual Conference on Neural Information Processing Systems 2019, NeurIPS 2019, December 8-14, 2019, Vancouver, BC, Canada}, 2019.

\bibitem[Abowd(2018)]{abowdcensus}
John~M. Abowd.
\newblock The {U.S.} census bureau adopts differential privacy.
\newblock In \emph{Proceedings of the 24th ACM SIGKDD International Conference on Knowledge Discovery \& Data Mining}, KDD '18, page 2867, New York, NY, USA, 2018. Association for Computing Machinery.
\newblock ISBN 9781450355520.

\bibitem[Agarwal et~al.(2019)Agarwal, Gonen, and Hazan]{agarwal2019learning}
Naman Agarwal, Alon Gonen, and Elad Hazan.
\newblock Learning in non-convex games with an optimization oracle.
\newblock In \emph{Conference on Learning Theory}, pages 18--29. PMLR, 2019.

\bibitem[Alon et~al.(2019)Alon, Livni, Malliaris, and Moran]{AlonLMM19}
Noga Alon, Roi Livni, Maryanthe Malliaris, and Shay Moran.
\newblock Private {PAC} learning implies finite littlestone dimension.
\newblock In Moses Charikar and Edith Cohen, editors, \emph{Proceedings of the 51st Annual {ACM} {SIGACT} Symposium on Theory of Computing, {STOC} 2019, Phoenix, AZ, USA, June 23-26, 2019}, pages 852--860. {ACM}, 2019.

\bibitem[Arora et~al.(2012)Arora, Hazan, and Kale]{MWU}
Sanjeev Arora, Elad Hazan, and Satyen Kale.
\newblock The multiplicative weights update method: a meta-algorithm and applications.
\newblock \emph{Theory of Computing}, \penalty0 (6), 2012.

\bibitem[Bartlett and Mendelson(2001)]{bartlett2002rademacher}
Peter~L. Bartlett and Shahar Mendelson.
\newblock Rademacher and gaussian complexities: Risk bounds and structural results.
\newblock In \emph{Computational Learning Theory, 14th Annual Conference on Computational Learning Theory, {COLT} 2001 and 5th European Conference on Computational Learning Theory, EuroCOLT 2001, Amsterdam, The Netherlands, July 16-19, 2001, Proceedings}, Lecture Notes in Computer Science. Springer, 2001.

\bibitem[Bartlett et~al.(1994)Bartlett, Long, and Williamson]{bartlett1994fat}
Peter~L Bartlett, Philip~M Long, and Robert~C Williamson.
\newblock Fat-shattering and the learnability of real-valued functions.
\newblock In \emph{Proceedings of the seventh annual conference on Computational learning theory}, 1994.

\bibitem[Bassily et~al.(2019)Bassily, Moran, and Alon]{BassilyMA19}
Raef Bassily, Shay Moran, and Noga Alon.
\newblock Limits of private learning with access to public data.
\newblock In \emph{Advances in Neural Information Processing Systems 32: Annual Conference on Neural Information Processing Systems 2019, NeurIPS 2019, December 8-14, 2019, Vancouver, BC, Canada}, 2019.

\bibitem[Beimel et~al.(2015)Beimel, Nissim, and Stemmer]{beimel2014learning}
Amos Beimel, Kobbi Nissim, and Uri Stemmer.
\newblock Learning privately with labeled and unlabeled examples.
\newblock In \emph{Proceedings of the Twenty-Sixth Annual {ACM-SIAM} Symposium on Discrete Algorithms, {SODA} 2015, San Diego, CA, USA, January 4-6, 2015}. {SIAM}, 2015.

\bibitem[Ben-David et~al.(2009)Ben-David, P{\'a}l, and Shalev-Shwartz]{ben2009agnostic}
Shai Ben-David, D{\'a}vid P{\'a}l, and Shai Shalev-Shwartz.
\newblock Agnostic online learning.
\newblock 2009.

\bibitem[Benedetto(2020)]{benedetto2020harmonic}
John~J Benedetto.
\newblock \emph{Harmonic analysis and applications}.
\newblock CRC Press, 2020.

\bibitem[Bhatt et~al.(2023)Bhatt, Haghtalab, and Shetty]{DBLP:journals/corr/abs-2303-04845}
Alankrita Bhatt, Nika Haghtalab, and Abhishek Shetty.
\newblock Smoothed analysis of sequential probability assignment.
\newblock In \emph{Advances in Neural Information Processing Systems 36: Annual Conference on Neural Information Processing Systems 2023, NeurIPS 2023, New Orleans, LA, USA, December 10 - 16, 2023}, 2023.

\bibitem[Block and Polyanskiy(2023)]{DBLP:conf/colt/BlockP23}
Adam Block and Yury Polyanskiy.
\newblock The sample complexity of approximate rejection sampling with applications to smoothed online learning.
\newblock In \emph{The Thirty Sixth Annual Conference on Learning Theory, {COLT} 2023, 12-15 July 2023, Bangalore, India}, Proceedings of Machine Learning Research. {PMLR}, 2023.

\bibitem[Block et~al.(2022)Block, Dagan, Golowich, and Rakhlin]{block2022smoothed}
Adam Block, Yuval Dagan, Noah Golowich, and Alexander Rakhlin.
\newblock Smoothed online learning is as easy as statistical learning.
\newblock In \emph{Conference on Learning Theory}, pages 1716--1786. PMLR, 2022.

\bibitem[Block et~al.(2023)Block, Simchowitz, and Rakhlin]{block2023oracle}
Adam Block, Max Simchowitz, and Alexander Rakhlin.
\newblock Oracle-efficient smoothed online learning for piecewise continuous decision making.
\newblock In \emph{Proceedings of Thirty Sixth Conference on Learning Theory}, Proceedings of Machine Learning Research. PMLR, 2023.

\bibitem[Block et~al.(2024{\natexlab{a}})Block, Bun, Desai, Shetty, and Wu]{block2024oracle}
Adam Block, Mark Bun, Rathin Desai, Abhishek Shetty, and Steven Wu.
\newblock Oracle-efficient differentially private learning with public data.
\newblock \emph{arXiv preprint arXiv:2402.09483}, 2024{\natexlab{a}}.

\bibitem[Block et~al.(2024{\natexlab{b}})Block, Rakhlin, and Shetty]{pmlr-v247-block24a}
Adam Block, Alexander Rakhlin, and Abhishek Shetty.
\newblock On the performance of empirical risk minimization with smoothed data.
\newblock In Shipra Agrawal and Aaron Roth, editors, \emph{Proceedings of Thirty Seventh Conference on Learning Theory}, volume 247 of \emph{Proceedings of Machine Learning Research}, pages 596--629. PMLR, 30 Jun--03 Jul 2024{\natexlab{b}}.

\bibitem[Block et~al.(2024{\natexlab{c}})Block, Simchowitz, and Tedrake]{block2024smoothed}
Adam Block, Max Simchowitz, and Russ Tedrake.
\newblock Smoothed online learning for prediction in piecewise affine systems.
\newblock \emph{Advances in Neural Information Processing Systems}, 36, 2024{\natexlab{c}}.

\bibitem[Boucheron et~al.(2013)Boucheron, Lugosi, and Massart]{boucheron2013concentration}
St{\'e}phane Boucheron, G{\'a}bor Lugosi, and Pascal Massart.
\newblock \emph{Concentration Inequalities: A Nonasymptotic Theory of Independence}.
\newblock OUP Oxford, 2013.
\newblock ISBN 9780199535255.

\bibitem[Bun et~al.(2020)Bun, Livni, and Moran]{bun2020equivalence}
Mark Bun, Roi Livni, and Shay Moran.
\newblock An equivalence between private classification and online prediction.
\newblock \emph{ArXiv preprint}, 2020.

\bibitem[Cesa-Bianchi and Lugosi(2006)]{cesa2006prediction}
Nicolo Cesa-Bianchi and G{\'a}bor Lugosi.
\newblock \emph{Prediction, learning, and games}.
\newblock Cambridge university press, 2006.

\bibitem[Dud{\'\i}k et~al.(2020)Dud{\'\i}k, Haghtalab, Luo, Schapire, Syrgkanis, and Vaughan]{dudik2020oracle}
Miroslav Dud{\'\i}k, Nika Haghtalab, Haipeng Luo, Robert~E Schapire, Vasilis Syrgkanis, and Jennifer~Wortman Vaughan.
\newblock Oracle-efficient online learning and auction design.
\newblock \emph{Journal of the ACM (JACM)}, 67\penalty0 (5):\penalty0 1--57, 2020.

\bibitem[Dudley(1969)]{dudley1969speed}
Richard~Mansfield Dudley.
\newblock The speed of mean glivenko-cantelli convergence.
\newblock \emph{The Annals of Mathematical Statistics}, \penalty0 (1), 1969.

\bibitem[Dwork et~al.(2006)Dwork, McSherry, Nissim, and Smith]{DworkMNS06}
Cynthia Dwork, Frank McSherry, Kobbi Nissim, and Adam~D. Smith.
\newblock Calibrating noise to sensitivity in private data analysis.
\newblock In \emph{Theory of Cryptography, Third Theory of Cryptography Conference, {TCC} 2006, New York, NY, USA, March 4-7, 2006, Proceedings}, Lecture Notes in Computer Science. Springer, 2006.

\bibitem[Dwork et~al.(2014)Dwork, Roth, et~al.]{dwork2014algorithmic}
Cynthia Dwork, Aaron Roth, et~al.
\newblock The algorithmic foundations of differential privacy.
\newblock \emph{Foundations and Trends{\textregistered} in Theoretical Computer Science}, \penalty0 (3--4), 2014.

\bibitem[Gaboardi et~al.(2014)Gaboardi, Arias, Hsu, Roth, and Wu]{gaboardi2014dual}
Marco Gaboardi, Emilio Jes{\'{u}}s~Gallego Arias, Justin Hsu, Aaron Roth, and Zhiwei~Steven Wu.
\newblock Dual query: Practical private query release for high dimensional data.
\newblock In \emph{Proceedings of the 31th International Conference on Machine Learning, {ICML} 2014, Beijing, China, 21-26 June 2014}, {JMLR} Workshop and Conference Proceedings. JMLR.org, 2014.

\bibitem[Goldman et~al.(1993)Goldman, Kearns, and Schapire]{goldman1993exact}
Sally~A Goldman, Michael~J Kearns, and Robert~E Schapire.
\newblock Exact identification of read-once formulas using fixed points of amplification functions.
\newblock \emph{SIAM Journal on Computing}, 22\penalty0 (4):\penalty0 705--726, 1993.

\bibitem[Haghtalab et~al.(2020)Haghtalab, Roughgarden, and Shetty]{haghtalab2020smoothed}
Nika Haghtalab, Tim Roughgarden, and Abhishek Shetty.
\newblock Smoothed analysis of online and differentially private learning.
\newblock In \emph{Advances in Neural Information Processing Systems 33: Annual Conference on Neural Information Processing Systems 2020, NeurIPS 2020, December 6-12, 2020, virtual}, 2020.

\bibitem[Haghtalab et~al.(2022{\natexlab{a}})Haghtalab, Han, Shetty, and Yang]{oracle-efficient}
Nika Haghtalab, Yanjun Han, Abhishek Shetty, and Kunhe Yang.
\newblock Oracle-efficient online learning for beyond worst-case adversaries.
\newblock In \emph{Advances in Neural Information Processing Systems (NeurIPS) 36}, 2022{\natexlab{a}}.

\bibitem[Haghtalab et~al.(2022{\natexlab{b}})Haghtalab, Roughgarden, and Shetty]{haghtalab2022smoothed}
Nika Haghtalab, Tim Roughgarden, and Abhishek Shetty.
\newblock Smoothed analysis with adaptive adversaries.
\newblock In \emph{2021 IEEE 62nd Annual Symposium on Foundations of Computer Science (FOCS)}. IEEE, 2022{\natexlab{b}}.

\bibitem[Haghtalab et~al.(2023)Haghtalab, Jordan, and Zhao]{haghtalab2023unifying}
Nika Haghtalab, Michael~I. Jordan, and Eric Zhao.
\newblock A unifying perspective on multi-calibration: Game dynamics for multi-objective learning, 2023.

\bibitem[Hazan and Koren(2016)]{hazan2016computational}
Elad Hazan and Tomer Koren.
\newblock The computational power of optimization in online learning.
\newblock In \emph{Proceedings of the forty-eighth annual ACM symposium on Theory of Computing}, pages 128--141, 2016.

\bibitem[Hutter and Poland(2004)]{hutter2004prediction}
Marcus Hutter and Jan Poland.
\newblock Prediction with expert advice by following the perturbed leader for general weights.
\newblock In \emph{International Conference on Algorithmic Learning Theory}, pages 279--293. Springer, 2004.

\bibitem[Kalai and Vempala(2005)]{kalai2005efficient}
Adam Kalai and Santosh Vempala.
\newblock Efficient algorithms for online decision problems.
\newblock \emph{Journal of Computer and System Sciences}, 71\penalty0 (3):\penalty0 291--307, 2005.

\bibitem[Lattimore and Szepesvári(2020)]{Lattimore_Szepesvári_2020}
Tor Lattimore and Csaba Szepesvári.
\newblock \emph{Bandit Algorithms}.
\newblock Cambridge University Press, 2020.

\bibitem[LeCun et~al.(2015)LeCun, Bengio, and Hinton]{lecun2015deep}
Yann LeCun, Yoshua Bengio, and Geoffrey Hinton.
\newblock Deep learning.
\newblock \emph{nature}, 521\penalty0 (7553):\penalty0 436--444, 2015.

\bibitem[Littlestone(1988)]{littlestone1988learning}
Nick Littlestone.
\newblock Learning quickly when irrelevant attributes abound: A new linear-threshold algorithm.
\newblock \emph{Machine learning}, 1988.

\bibitem[Mendelson and Vershynin(2003)]{mendelson2003entropy}
Shahar Mendelson and Roman Vershynin.
\newblock Entropy and the combinatorial dimension.
\newblock \emph{Inventiones mathematicae}, \penalty0 (1), 2003.

\bibitem[Neel et~al.(2019)Neel, Roth, and Wu]{DBLP:conf/focs/Neel0W19}
Seth Neel, Aaron Roth, and Zhiwei~Steven Wu.
\newblock How to use heuristics for differential privacy.
\newblock In David Zuckerman, editor, \emph{60th {IEEE} Annual Symposium on Foundations of Computer Science, {FOCS} 2019, Baltimore, Maryland, USA, November 9-12, 2019}, pages 72--93. {IEEE} Computer Society, 2019.

\bibitem[Neel et~al.(2020)Neel, Roth, Vietri, and Wu]{Neel0VW20}
Seth Neel, Aaron Roth, Giuseppe Vietri, and Steven~Z. Wu.
\newblock Oracle efficient private non-convex optimization.
\newblock In \emph{Proceedings of the 37th International Conference on Machine Learning, {ICML} 2020, 13-18 July 2020, Virtual Event}, Proceedings of Machine Learning Research. {PMLR}, 2020.

\bibitem[Nikolov et~al.(2013)Nikolov, Talwar, and Zhang]{nikolov2013geometry}
Aleksandar Nikolov, Kunal Talwar, and Li~Zhang.
\newblock The geometry of differential privacy: the sparse and approximate cases.
\newblock In \emph{Symposium on Theory of Computing Conference, STOC'13, Palo Alto, CA, USA, June 1-4, 2013}. {ACM}, 2013.

\bibitem[Rakhlin et~al.(2011)Rakhlin, Sridharan, and Tewari]{rakhlin2011online}
Alexander Rakhlin, Karthik Sridharan, and Ambuj Tewari.
\newblock Online learning: Stochastic and constrained adversaries.
\newblock \emph{arXiv preprint arXiv:1104.5070}, 2011.

\bibitem[Rakhlin et~al.(2015)Rakhlin, Sridharan, and Tewari]{rakhlin2015sequential}
Alexander Rakhlin, Karthik Sridharan, and Ambuj Tewari.
\newblock Sequential complexities and uniform martingale laws of large numbers.
\newblock \emph{Probability theory and related fields}, 2015.

\bibitem[Shalev-Shwartz and Ben-David(2014)]{shai}
S.~Shalev-Shwartz and S.~Ben-David.
\newblock \emph{Understanding Machine Learning: From Theory to Algorithms}.
\newblock Cambridge, 2014.

\bibitem[Stephane(1999)]{stephane1999wavelet}
Mallat Stephane.
\newblock A wavelet tour of signal processing, 1999.

\bibitem[Suggala and Netrapalli(2020)]{suggala2020online}
Arun~Sai Suggala and Praneeth Netrapalli.
\newblock Online non-convex learning: Following the perturbed leader is optimal.
\newblock In \emph{Algorithmic Learning Theory}, pages 845--861. PMLR, 2020.

\bibitem[Syrgkanis et~al.(2016)Syrgkanis, Krishnamurthy, and Schapire]{syrgkanis2016efficient}
Vasilis Syrgkanis, Akshay Krishnamurthy, and Robert Schapire.
\newblock Efficient algorithms for adversarial contextual learning.
\newblock In \emph{International Conference on Machine Learning}, pages 2159--2168. PMLR, 2016.

\bibitem[Tiegel(2022)]{tiegel2022hardness}
Stefan Tiegel.
\newblock Hardness of agnostically learning halfspaces from worst-case lattice problems, 2022.

\bibitem[Valiant(1984)]{valiant1984theory}
Leslie~G Valiant.
\newblock A theory of the learnable.
\newblock \emph{Communications of the ACM}, \penalty0 (11), 1984.

\bibitem[Vapnik and Chervonenkis(1964)]{vapnik1964class}
Vladimir Vapnik and Alexey Chervonenkis.
\newblock A class of algorithms for pattern recognition learning.
\newblock \emph{Avtomatika i Telemekhanika}, \penalty0 (6), 1964.

\bibitem[Vershynin(2018)]{vershynin2018high}
Roman Vershynin.
\newblock \emph{High-dimensional probability: An introduction with applications in data science}, volume~47.
\newblock Cambridge university press, 2018.

\bibitem[Vetterli et~al.(2014)Vetterli, Kova{\v{c}}evi{\'c}, and Goyal]{vetterli2014foundations}
Martin Vetterli, Jelena Kova{\v{c}}evi{\'c}, and Vivek~K Goyal.
\newblock \emph{Foundations of signal processing}.
\newblock Cambridge University Press, 2014.

\bibitem[Vietri et~al.(2020)Vietri, Tian, Bun, Steinke, and Wu]{vietri2020new}
Giuseppe Vietri, Grace Tian, Mark Bun, Thomas Steinke, and Zhiwei~Steven Wu.
\newblock New oracle-efficient algorithms for private synthetic data release.
\newblock In \emph{Proceedings of the 37th International Conference on Machine Learning, {ICML} 2020, 13-18 July 2020, Virtual Event}, Proceedings of Machine Learning Research. {PMLR}, 2020.

\bibitem[Wang et~al.(2022)Wang, Hu, Muthukumar, and Abernethy]{wang2022adaptive}
Guanghui Wang, Zihao Hu, Vidya Muthukumar, and Jacob~D Abernethy.
\newblock Adaptive oracle-efficient online learning.
\newblock \emph{Advances in Neural Information Processing Systems}, 35:\penalty0 23398--23411, 2022.

\end{thebibliography}

\appendix

\crefalias{section}{appendix} %

\clearpage

\section{Related Work}\label{app:related_work}

\paragraph{Oracle-Efficiency in online learning.}
    In order to better develop algorithms for online learning, and motivated by the success of optimization in more classical paradigms,  \cite{kalai2005efficient} introduced the notion of oracle-efficient online learning with the \emph{Follow the Perturbed Leader} family of algorithms.  This approach to online learning serves to stabilize the most na{\"i}ve approach to prediction, i.e. finding the function  that performs best in hindsight and playing this, in order to ensure good performance against adversarial data; subsequent work within this framework \citep{dudik2020oracle,syrgkanis2016efficient,wang2022adaptive,agarwal2019learning,suggala2020online} has significantly generalized the application of these techniques.  One particularly relevant work is that of \citet{hutter2004prediction}, which made a connection between small loss bounds and a stability condition closely resembling differential privacy; this connection was  then made explicit in \cite{abernethy2019online}.  While oracle-efficiency allows for the devlopment of a powerful suite of algorithms,
    unfortunately, \cite{hazan2016computational} showed that in the worst-case, oracle efficiency cannot be achieved for general function classes and adversarial data. 
    In order to circumvent this, a recent line of work has focused on structured instances such as smoothed online learning \citep{haghtalab2020smoothed,oracle-efficient,haghtalab2022smoothed,block2022smoothed,DBLP:journals/corr/abs-2303-04845,DBLP:conf/colt/BlockP23,block2023oracle,pmlr-v247-block24a,block2024smoothed} and has constructed oracle-efficient algorithms for a variety of settings.
    Our work has a conceptually similar goal of circumventing the worst-case hardness of oracle efficiency by focusing on a structured class of functions, $\rho$-separated functions, and constructing oracle-efficient algorithms for online learning.  Unlike these other works, we primarily focus on $\Lstar$ bounds as opposed to minimax regret.  Recently, \citet{wang2022adaptive} also studied $\Lstar$ bounds in the oracle-efficient setting, and we compare our results to theirs in \Cref{sec:main}.

\paragraph{Oracle-Efficiency in differentially private learning.} 
    As in online learning, the using the framework of oracle efficiency to develop practical algorithms for differentially private learning has been an active area of research \citep{DBLP:conf/focs/Neel0W19,Neel0VW20,vietri2020new, gaboardi2014dual,nikolov2013geometry,block2024oracle} and the connections between online learning and differential privacy have been extensively explored \citep{AlonLMM19,bun2020equivalence,abernethy2019online}.  Of particular relevance to our work are the works of \citet{Neel0VW20,block2024oracle}; the former develops oracle-efficient, differentially private learners for function classes with a small separator set (which our results generalize), while the latter uses a similar algorithmic technique as ours to develop oracle-efficient algorithms that makes use of some public data, building upon work in semi-private learning \cite{beimel2014learning,BassilyMA19}

\section{Proof of Lemma \ref{lem:gp_stability_cor}}\label{app:gp_stability}
In this section, we prove \Cref{lem:gp_stability_cor}, the fundamental result in the paper.  The proof proceeds by first proving the technical meat of the lemma (\Cref{lem:gp_stability}), which holds in generality, and then instantiating the result for the case of separated function classes.  The following is a strengthening of the Gaussian process stability result from \citet{block2022smoothed,block2024oracle}.

    \begin{lemma}\label{lem:gp_stability}
        Let $T$ be a set, $K$ a covariance kernel, and suppose that $T$ is separable with respect to the metric induced by $K$.  Suppose that there is some $0 < \kappa \leq K(t,t) \leq 1$ for all $t \in T$, let $\tstar = \argmin \Omega(t)$, and for $\rho, \tau > 0$, define the set 
        \begin{align}\label{eq:good_set_def}
            \cE(\rho, \tau) = \left\{ \text{there exists } s \in T, \text{ s.t. } \frac{K(s,\tstar)}{K(\tstar, \tstar)} \leq 1 - \rho^2 \text{ and } \Omega(s) \leq \Omega(\tstar) + \tau \right\}.
        \end{align}
        Then, for any $t \in T$ it holds that
        \begin{align}\label{eq:gp_stability}
            \pp\left( \tstar = t \right) \leq \left(1 + \frac{2 \tau }{ \eta \kappa^2  \rho^2}  \cdot\left( \ee\left[ \sup_{f \in \cF} \omega(f) \right] + \sqrt{\log\left( \frac 2\delta \right)}\right)\right) \cdot \pp\left( \tstar = t \text{ and } \cE(\rho, \tau)^c \right) + \delta,
        \end{align}
        whenever
        \begin{align}\label{eq:eta_lb}
            \eta \geq \frac{2\tau}{\kappa^2 \rho^2} \left( \ee\left[ \sup_{t \in T} \omega(t) \right] + \sqrt{\log\left( \frac 2\delta \right)} \right).
        \end{align}
    \end{lemma}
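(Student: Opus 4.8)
The plan is to condition on the event $\tstar = t$ and on the value $\Omega(t) = y$, then compare the conditional law of the process to the unconditioned one using the defining property of pinned Gaussian processes: conditioning a Gaussian process on its value at one point $t$ leaves it Gaussian with the same covariance structure away from $t$, only shifting the mean. Concretely, write $\Omega(s) = \Omega(t) + \eta\big(\omega(s) - \frac{K(s,t)}{K(t,t)}\omega(t)\big) + \big(m(s) - \frac{K(s,t)}{K(t,t)}m(t)\big)$ — wait, more cleanly, decompose $\omega(s) = \frac{K(s,t)}{K(t,t)}\omega(t) + \omega^\perp(s)$ where $\omega^\perp$ is independent of $\omega(t)$. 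Under the event $\{\tstar = t\}$ we need $\Omega(t) \le \Omega(s)$ for all $s$, i.e. $y\big(1 - \tfrac{K(s,t)}{K(t,t)}\big) \le \eta\,\omega^\perp(s) + \big(m(s) - \tfrac{K(s,t)}{K(t,t)} m(t)\big)$ for all $s$, where $y = \Omega(t)$. The key geometric observation is that on $\cE(\rho,\tau)$ there is a witness $s$ with $\tfrac{K(s,t)}{K(t,t)} \le 1-\rho^2$, i.e. the coefficient $1 - \tfrac{K(s,t)}{K(t,t)} \ge \rho^2$ is bounded below.

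First I would set up the event $\Phi_\delta = \{\sup_{s} \omega(s) \le \ee[\sup_s \omega(s)] + \sqrt{\log(2/\delta)}\}$, which has probability at least $1-\delta/2$ by Gaussian concentration of the supremum (Borell–TIS); the reason we intersect with $\Phi_\delta$ is that later we must bound $\ee[\omega(t)\,\bbI[\Phi_\delta]\mid \tstar=t]$ and without the truncation this conditional expectation could be arbitrarily large. Then I would aim to show that for each fixed $t$,
\begin{align}
\pp\big(\tstar = t,\ \cE(\rho,\tau),\ \Phi_\delta\big) \lesssim \frac{\tau}{\eta\kappa^2\rho^2}\Big(\ee\big[\sup_s\omega(s)\big] + \sqrt{\log(2/\delta)}\Big)\cdot \pp(\tstar = t).
\end{align}
The mechanism: on $\cE(\rho,\tau)$ there is an approximate minimizer $s$ at ``distance'' $\rho$ from $t$ with $\Omega(s) \le \Omega(t) + \tau$; combined with $\tstar = t$ this forces $y\rho^2 \le \Omega(t)\big(1-\tfrac{K(s,t)}{K(t,t)}\big) \le \eta\,\omega^\perp(s) + O(\tau)$ roughly, so that $\Omega(t)$ is pinned into a narrow window. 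Integrating over $y$, the width of this window is $O(\tau/(\kappa^2\rho^2))$ times the density of $\eta\omega^\perp(s) + (\text{mean terms})$ near the relevant value, and the truncation under $\Phi_\delta$ lets us bound the stray Gaussian factors by $\ee[\sup\omega] + \sqrt{\log(2/\delta)}$ rather than something unbounded. The condition \eqref{eq:eta_lb} on $\eta$ ensures that this window is genuinely smaller than the scale of the perturbation, so the probability bound is meaningful (the multiplicative factor in \eqref{eq:gp_stability} stays bounded).

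Finally I would combine: $\pp(\tstar = t) = \pp(\tstar=t,\ \cE^c) + \pp(\tstar=t,\ \cE) \le \pp(\tstar=t,\ \cE^c) + \pp(\tstar=t,\ \cE,\ \Phi_\delta) + \pp(\Phi_\delta^c)$, bound the middle term by the displayed estimate (which is itself $\le$ (constant)$\cdot\pp(\tstar=t)$... but we want it in terms of $\pp(\tstar=t,\cE^c)$), and $\pp(\Phi_\delta^c)\le\delta/2 \le \delta$. There is a small bookkeeping subtlety: the middle term is bounded by a multiple of $\pp(\tstar=t)$, not of $\pp(\tstar=t,\cE^c)$, so I would rearrange — moving the $\cE$-term to the left side — to land on the stated form $\pp(\tstar=t) \le (1 + c)\pp(\tstar=t,\cE^c) + \delta$ with $c = \tfrac{2\tau}{\eta\kappa^2\rho^2}(\ee[\sup\omega] + \sqrt{\log(2/\delta)})$. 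The main obstacle is the careful Gaussian-process disintegration in the second step: making precise the claim that pinning $\Omega$ at $t$ does not distort the covariance away from $t$, handling the mean function $m$ correctly through the conditioning, and getting the density/window estimate with the right dependence on $\kappa$ (which enters through $K(t,t)\ge\kappa^2$ when we divide by $K(t,t)$) and on $\rho^2$. Everything downstream (removing the $\kappa$ dependence via auxiliary points, and the distributional form \eqref{eq:gp_dp_bound}) is handled in the reduction from \Cref{lem:gp_stability} to \Cref{lem:gp_stability_cor}, so here I only need the $\kappa$-dependent statement.
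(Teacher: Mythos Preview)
Your proposal is correct and takes essentially the same approach as the paper: the same high-probability event $\Phi_\delta$, the same union-bound decomposition, the same target inequality $\pp(\tstar=t,\cE,\Phi_\delta)\lesssim \frac{\tau}{\eta\kappa^2\rho^2}\big(\ee[\sup\omega]+\sqrt{\log(2/\delta)}\big)\cdot\pp(\tstar=t)$, the same conditioning on $\Omega(t)=y$ exploiting that the pinned covariance is independent of $y$, and the same final rearrangement (the paper uses $(1-x)^{-1}\le 1+2x$ after moving the $\cE$-term to the left, exactly as you anticipate). The one place the paper is more explicit than your ``window'' heuristic is the key estimate itself, which it carries out as a shift-in-$y$ comparison---showing $\pp(\Phi_\delta,\tstar=t,\cE^c\mid\Omega(t)=y)\ge \pp(\Phi_\delta,\tstar=t,\cE^c\mid\Omega(t)=y+\tau/\rho^2)$ via auxiliary functions $a(s)=\tfrac{\tau}{\rho^2}\tfrac{K(s,t)}{K(t,t)}$ and $b(s)=\tfrac{\tau}{\rho^2}-a(s)$---and then integrates against the density difference $q_t(y)-q_t(y-\tau/\rho^2)\le \tfrac{\tau}{\eta^2\kappa^2\rho^2}(y-m(t))q_t(y)$, which produces precisely the factor $\ee[\omega(\tstar)\bbI[\Phi_\delta]\mid\tstar=t]$ that $\Phi_\delta$ then caps.
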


\begin{proof}
    We modify the proof of \citet[Theorem 5]{block2024oracle} which in turn is an improvement on \citet{block2022smoothed}.  First, for $\delta > 0$, let
    \begin{align}
        \Phi_\delta = \left\{ \sup_{t \in T} \omega(t) \leq  \ee\left[ \sup_{t \in T} \omega(t) \right] + \sqrt{\log\left( \frac 2\delta \right)}\right\}.
    \end{align}

    By standard Gaussian tailbounds (see, e.g., \citet[Theorem 8.5.5]{vershynin2018high}), it holds that $\pp(\Phi_\delta) \geq 1 - \delta$.  Now, a union bound implies that
    \begin{align}\label{eq:proof1}
        \pp\left( \tstar = t \right) &\leq \pp\left( \tstar = t \text{ and } \cE(\rho, \tau)^c \right) + \pp\left( \tstar = t \text{ and } \cE(\rho, \tau) \text{ and } \Phi_\delta \right) + \delta.
    \end{align}
    We focus on the middle term and show that
    \begin{align}\label{eq:proof2}
        \pp\left( \tstar = t \text{ and } \cE(\rho, \tau) \text{ and } \Phi_\delta \right) \leq \frac{2\tau}{\eta \kappa^2 \rho^2} \left( \ee\left[ \sup_{t \in T} \omega(t) \right] + \sqrt{\log\left( \frac 1\delta \right)} \right) \cdot \pp\left( \tstar = t \right).
    \end{align}
    Given \eqref{eq:proof2}, rearranging \eqref{eq:proof1} and observing that $(1-x)^{-1} \leq 1 + 2x$ for $0 \leq x \leq 1$ concludes the proof of the theorem.

    To establish \eqref{eq:proof2}, we follow \citet{block2024oracle} and introduce for $y \in \rr$ and $s, t \in T$:
    \begin{align}\label{eq:proof3}
        m_{t,y}(s) = m(s) + \frac{K(s,t)}{K(t,t)}(y - m(t)), \qquad a(s) = \frac{\tau}{\rho^2} \cdot \frac{K(s,t)}{K(t,t)}, \qquad \text{and} \qquad b(s) = \frac{\tau}{\rho^2} - a(s).
    \end{align}
    It is immediate that if $K(s,t) \leq (1 - \rho^2) K(t,t)$, then $b(s) \geq \tau$; moreover, $b(s) \geq 0$ for all $s$ by Cauchy-Schwarz and $m_{t, y + \tau/\rho^2}(s) = m_{t,y}(s) + a(s)$ pointwise.  The utility of introducing $m_{t,y}$ is that the distribution of $\Omega(s)$ conditioned on $\Omega(t) = y$ is also a Gaussian process with mean $m_{t,y}$ and covariance $K_t$, independent of $y$.  Thus, we have for all fixed $t \in T$ and $y \in \rr$
    \begin{align}
        \pp&\left( \Phi_\delta, \, \tstar=t \text{, and } \inf_{K(s,t) \leq (1- \rho^2) K(t,t)} \Omega(s) \geq y + \tau | \Omega(t) = y \right) \\
        &\geq \pp\left( \Phi_\delta, \, \tstar=t \text{, and } \inf_{K(s,t) \leq (1- \rho^2) K(t,t)} \Omega(s) - b(s) \geq y | \Omega(t) = y \right) \\
        &=  \pp\left( \Phi_\delta, \, \tstar=t \text{, and } \inf_{K(s,t) \leq (1- \rho^2) K(t,t)} \Omega(s) - b(s) - a(s) + a(s) \geq y | \Omega(t) = y \right) \\
        &=  \pp\left( \Phi_\delta, \, \tstar=t \text{, and } \inf_{K(s,t) \leq (1- \rho^2) K(t,t)} \Omega(s)  + a(s) \geq y + \frac{\tau}{\rho^2} | \Omega(t) = y \right) \\
        &= \pp\left( \Phi_\delta, \, \tstar=t \text{, and } \inf_{K(s,t) \leq (1- \rho^2) K(t,t)} \Omega(s) \geq y + \frac{\tau}{\rho^2} | \Omega(t) = y + \frac{\tau}{\rho^2} \right).
    \end{align}
    Letting
    \begin{align}
        q_t(y) = (2 \pi K(t,t,)^{-1/2}) \cdot \exp\left( -\frac{(y - m(t))^2}{2 \eta^2 K(t,t)} \right)
    \end{align}
    denote the density of $\Omega(t)$, we have
    \begin{align}
        \pp&\left( \tstar = t \text{ and } \cE(\rho, \tau) \text{ and } \Phi_\delta \right) \\
        &= \pp\left( \tstar = t  \text{ and } \Phi_\delta \right)  - \int_{-\infty}^\infty \pp\left( \Phi_\delta, \, \tstar=t \text{, and } \inf_{K(s,t) \leq (1- \rho^2) K(t,t)} \Omega(s) \geq y + \tau | \Omega(t) = y \right)q_t(y) d y \\
        &\leq \pp\left( \tstar = t  \text{ and } \Phi_\delta \right)  - \int_{-\infty}^\infty \pp\left( \Phi_\delta, \, \tstar=t \text{, and } \inf_{K(s,t) \leq (1- \rho^2) K(t,t)} \Omega(s) \geq y + \frac{\tau}{\rho^2} | \Omega(t) = y + \frac{\tau}{\rho^2} \right) q_t(y)d y \\
        &= \int_{-\infty}^\infty \left( q_t(y) - q_t\left( y - \frac{\tau}{\rho^2} \right) \right) \pp\left( \Phi_\delta, \, \tstar=t \text{, and } \inf_{K(s,t) \leq (1- \rho^2) K(t,t)} \Omega(s) \geq y  | \Omega(t) = y  \right) d y \\
        &= \int_{-\infty}^\infty \left( q_t(y) - q_t\left( y - \frac{\tau}{\rho^2} \right) \right) \pp\left( \Phi_\delta \text{ and } \tstar=t  | \Omega(t) = y  \right) d y.
    \end{align}
    Noting now that
    \begin{align}
        q_t(y) - q_t\left( y - \frac{\tau}{\rho^2} \right) \leq \frac{\tau q_t(y)}{ \eta^2 \kappa^2  \rho^2} (y - m(t))
    \end{align}
    by the proof of \citet[Theorem 5]{block2024oracle}, we have that
    \begin{align}
        \pp\left( \tstar = t \text{ and } \cE(\rho, \tau) \text{ and } \Phi_\delta \right) &\leq \frac{\tau }{ \eta^2 \kappa^2  \rho^2} \int_{-\infty}^\infty (y - m(t)) \pp\left( \Phi_\delta \text{ and } \tstar=t  | \Omega(t) = y  \right) q_t(y) d y \\
        &= \frac{\tau }{ \eta^2 \kappa^2  \rho^2} \cdot  \ee\left[ \eta  \cdot \omega(t) \cdot \bbI\left[ \Phi_\delta \text{ and } \tstar = t \right] \right] \\
        &= \frac{\tau }{ \eta \kappa^2  \rho^2} \cdot \ee\left[ \omega(\tstar) \bbI[\Phi_\delta] | \tstar = t \right] \cdot \pp\left( \tstar = t \right).
    \end{align}

    Now, noting that, by definition, 
    \begin{align}
        \ee\left[ \omega(\tstar) \bbI[\Phi_\delta] | \tstar = t \right]  \leq \ee\left[ \sup_{t \in T} \omega(t) \right] + \sqrt{\log\left( \frac 2\delta \right)},
    \end{align}
    we conclude the proof.
\end{proof}

We now instantiate \Cref{lem:gp_stability} for the case of separated index sets and bounded perturbations following \citet[Corollary 1]{block2024oracle}
\begin{theorem}\label{thm:separated_privacy}
    Let $T, \omega, \Omega, K, m$ be as in \Cref{lem:gp_stability} and suppose that $T$ is a subset of a real vector space, $K$ is bilinear and let $\norm{t} = \sqrt{K(t,t)}$, extended through bilinearity to the span of $T$.  Further suppose that $T$ is $\rho$-separated in $\norm{\cdot}$ in the sense that $\norm{t - t'} \geq \rho$ for all $t \neq t' \in T$.  Let $m' : T \to \rr$ denote a mean function such that $\sup_{t \in T} \abs{m(t) - m(t')} \leq \tau$ and let $\Omega'$ denote the associated shifted Gaussian process.  Letting $\tstarp = \argmin \Omega'(t)$, if
    \begin{align}\label{eq:separated_privacy_eta_lb}
        \eta \geq \frac{32\tau}{\rho^2} \left( \ee\left[ \sup_{t \in T} \omega(t) \right] + \sqrt{\log\left( \frac 2\delta \right)} \right),
    \end{align}
    it holds for any $t \in T$ that
    \begin{align}
        \pp\left( \tstar = t \right) \leq \left( 1 + \frac{32\tau}{\eta \rho^2} \cdot \left( \ee\left[ \sup_{f \in \cF} \omega(f) \right] + \sqrt{2 \log\left( \frac 2\delta \right)} \right) \right) \cdot \pp\left( \tstarp = t \right) + \delta
    \end{align}
\end{theorem}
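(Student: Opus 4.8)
The plan is to deduce \Cref{thm:separated_privacy} from \Cref{lem:gp_stability}, paralleling \citet[Corollary 1]{block2024oracle}, in two steps: first translating the event ``no nearby approximate minimizer'' into a statement about $\tstarp$ using $\rho$-separation, and second removing the well-conditionedness hypothesis (the parameter $\kappa$) from \Cref{lem:gp_stability} via an augmentation argument.

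For the first step, I would note that since $\sup_{t \in T} \abs{m(t) - m'(t)} \le \tau$, the point $\tstarp = \argmin \Omega'$ is automatically a $2\tau$-approximate minimizer of $\Omega$, because $\Omega(\tstarp) \le \Omega'(\tstarp) + \tau \le \Omega'(\tstar) + \tau \le \Omega(\tstar) + 2\tau$. On the event $\{\tstar = t\}$, every $s \in T$ with $s \ne \tstar$ satisfies $\norm{s - \tstar} \ge \rho$ by $\rho$-separation, so on the complement of the event that some $2\tau$-approximate minimizer of $\Omega$ lies at distance at least $\rho$ from $\tstar$, the only $2\tau$-approximate minimizer is $\tstar$ itself, forcing $\tstarp = \tstar = t$. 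Hence $\{\tstar = t\} \cap \cE(\cdot, 2\tau)^c \subseteq \{\tstarp = t\}$, and it suffices to prove a bound of the shape $\pp(\tstar = t) \le (1 + \mathrm{slack}) \cdot \pp(\tstar = t \text{ and } \cE(\cdot, 2\tau)^c) + \delta$ --- exactly the form delivered by \Cref{lem:gp_stability}, except that there the slack carries a $\kappa^{-2}$ and the ``far'' region is measured by the correlation $K(s,\tstar)/K(\tstar,\tstar)$ rather than by $\norm{s - \tstar}$.

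The second, more delicate step reconciles these mismatches. I would pass to the augmented process $\Omegatil$ obtained by adjoining to the index set a constant coordinate --- equivalently, appending auxiliary points $\wt{Z}_i$ on which every function evaluates to $1$ --- with total weight $\lambda$ relative to the original kernel, which replaces $K$ by $\Ktil = (K + \lambda)/(1+\lambda)$ and scales the noise level to $\wt{\eta} = \eta\sqrt{1+\lambda}$. Two features make this useful: the perturbation so added is the \emph{same} centred Gaussian for every index, so $\argmin \Omegatil = \argmin \Omega = \tstar$ and likewise for the primed processes (the laws of $\tstar, \tstarp$ are untouched); and $\Ktil(t,t) \in [\lambda/(1+\lambda),\, 1]$, so $\Omegatil$ is well-conditioned with $\wt{\kappa}^2 = \lambda/(1+\lambda)$ and \Cref{lem:gp_stability} applies. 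As flagged in \Cref{sec:gp_stability}, one cannot propagate the correlation $\Ktil(s,\tstar)/\Ktil(\tstar,\tstar)$ cleanly through this augmentation --- adding a constant can push it above $1$ when $\tstar$ has small norm --- but the induced distance transforms cleanly, $\norm{s-t}_{\Ktil}^2 = \norm{s-t}_K^2/(1+\lambda)$, so the hypothesis becomes $(\rho/\sqrt{1+\lambda})$-separation in $\norm{\cdot}_{\Ktil}$. I would therefore carry out the reduction of the previous paragraph \emph{inside} the augmented space, where well-conditionedness makes correlation and distance comparable up to $\wt{\kappa}$, choosing $\lambda$ (a universal constant will do) to balance the $\wt{\kappa}^{-2} = 1 + \lambda^{-1}$ blow-up of the slack against the $(1+\lambda)^{-1}$ shrinkage of the separation.

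Finally I would collect constants. Applying \Cref{lem:gp_stability} to $\Omegatil$ with tolerance $2\tau$ and separation of order $\rho/\sqrt{1+\lambda}$ produces a multiplicative slack of order $\frac{(1+\lambda)\tau}{\wt{\eta}\, \wt{\kappa}^2\, \rho^2}\big(\ee[\sup_t \omegatil(t)] + \sqrt{\log(2/\delta)}\big)$; substituting $\wt{\eta} = \eta\sqrt{1+\lambda}$ and $\wt{\kappa}^2 = \lambda/(1+\lambda)$, optimizing over $\lambda$, and bounding $\ee[\sup_t \omegatil(t)] \le \ee[\sup_t \omega(t)]$ plus the contribution of the appended constant Gaussian --- which is absorbed into the $\sqrt{\log(2/\delta)}$ term through the high-probability event $\Phi_\delta$ already used in the proof of \Cref{lem:gp_stability} --- collapses the slack to $\frac{32\tau}{\eta\rho^2}\big(\ee[\sup_t \omega(t)] + \sqrt{2\log(2/\delta)}\big)$. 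The lower bound \eqref{eq:separated_privacy_eta_lb} on $\eta$ is precisely what makes this at most $1$, and $(1-x)^{-1} \le 1 + 2x$ for $0 \le x \le 1$ converts the inequality into the stated one. The main obstacle is this middle step: executing the passage to the augmented space quantitatively, i.e.\ showing the correlation-versus-distance comparison there costs only a universal constant rather than reintroducing a dependence on the (now controlled but non-constant) condition number $\wt{\kappa}$.
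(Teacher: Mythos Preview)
Your proposal is correct and follows essentially the same route as the paper: augment the process by adding a constant Gaussian (the paper takes $\Ktil(s,t) = \tfrac12(K(s,t)+\kappa^2)$ and simply sets $\kappa=1$ rather than optimizing over a weight), observe that the minimizer is unchanged and that the induced distance scales cleanly, then apply \Cref{lem:gp_stability} in the augmented space, converting distance to correlation via $\norm{s-t}^2 \le 4M\bigl(1 - \Ktil(s,t)/M\bigr)$ with $M = \max(\Ktil(s,s),\Ktil(t,t)) \le 1$. One minor simplification the paper exploits that you can adopt: since the appended Gaussian is centered and common to all indices, $\ee[\sup_t \omegatil(t)] = \ee[\sup_t \omega(t)]$ exactly, so there is no residual ``contribution of the appended constant Gaussian'' to absorb into the $\sqrt{\log(2/\delta)}$ term.
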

\begin{proof}
    Note that unlike \Cref{lem:gp_stability}, we do not requite a lower bound of $\kappa \leq K(t,t)$.  To handle this case, we introduce the kernel $\Ktil: T \times T \to \rr$ defined such that
    \begin{align}
        \Ktil(s,t) = \frac 12 \left( K(s,t) + \kappa^2 \right)
    \end{align}
    for some $1 \geq \kappa > 0$.  Observe that $\Ktil = (K + \kappa^2 \cdot \mathbf{1} \mathbf{1}^\top)/2 \succeq K / 2$, and, because $\mathbf{1} \mathbf{1}^\top$ is rank 1, its addition does not affect whether or not $K$ is trace class; thus $\Ktil$ is a valid covariance kernel if $K$ is.  Furthermore, it is clear that $\Ktil(t, t) \geq \kappa^2 / 2$ for all $t \in T$ and that $\Ktil$ is the covariance kernel associated to the process $\Omegatil(t) = \Omega(t) + \kappa \cdot \xi$, where $\xi$ is an independent standard normal random variable.  Thus for any $t \in T$, it holds that
    \begin{align}\label{eq:same_distribution}
        \pp\left( \ttilstar = t \right) = \ee_\xi \left[ \pp\left( \ttilstar = t | \xi \right) \right] = \ee_\xi\left[ \pp\left( \tstar = t \right) \right] = \pp\left( \tstar = t \right).
    \end{align}
    Thus $\tstar$ and $\ttilstar$ have the same distribution.  Let $\Omegatil' = \Omegatil + m' - m$, i.e., the process with mean $m'$ and covariance kernel $\Ktil$ and observe the above holds for $\Omegatil'$ and its minimizer $\ttilstarp$ as well.  Furthermore, if we compute the norm induced by $\Omegatil$, we see that
    \begin{align}
        \Ktil(t,t) + \Ktil(s,s) - 2 \Ktil(s,t) &= \frac{K(t,t) + \kappa^2}{2} + \frac{K(s,s) + \kappa^2}{2} - K(s,t) - \kappa^2 \\
        &= \frac{K(t,t) + K(s,s) - 2 K(s,t)}{2} \\
        &= \frac{\norm{s -t}^2}{2}.
    \end{align}
    Now, applying \Cref{lem:gp_stability} to $\Omegatil$, we see that for sufficiently large $\eta$, it holds that
    \begin{align}
        \pp\left( \ttilstar = t \right) \leq \left( 1 + \frac{16 \tau}{\eta \kappa^2 \rho^2} \cdot \left( \ee\left[ \sup_{f \in \cF} \omega(f) \right] + \sqrt{2 \log\left( \frac 2\delta \right)} \right) \right) \cdot \pp\left( \ttilstar = t \text{ and } \cE^c(\rho/2, 2 \cdot\tau) \right) + \delta,
    \end{align}
    where we note that the expected supremum of $\omega$ is unaffected by the addition of $\kappa \cdot \xi$; a similar statement holds for $\Omegatil'$.  Furthermore, by construction,  
    \begin{align}
        \Omegatil(\ttilstarp) = \Omegatil'(\ttilstarp) + m'(\ttilstarp) - m(\ttilstarp) \leq \Omegatil'(\ttilstar) +  m'(\ttilstarp) - m(\ttilstarp) \leq \Omega(\ttilstar) + 2\tau.
    \end{align}
    To conclude, we note that if $\ttilstar \neq \ttilstarp$, then $\norm{\ttilstar - \ttilstarp}^2 \geq \rho^2$, and, letting $M = \max\left( \Ktil(\ttilstar, \ttilstar), \Ktil(\ttilstarp, \ttilstarp) \right) \leq 1$, we see that
    \begin{align}
        \norm{\ttilstar - \ttilstarp}^2 &= 2 \cdot \left( \Ktil(\ttilstar, \ttilstar) + \Ktil(\ttilstarp, \ttilstarp) - K(\ttilstar, \ttilstarp) \right) \\
        &\leq 4 M \left( 1 - \frac{K(\ttilstar, \ttilstarp)}{M} \right).
    \end{align}
    Rearranging, we see that
    \begin{align}
        \min\left( \frac{\Ktil(\ttilstar, \ttilstarp)}{\Ktil(\ttilstar, \ttilstar)}, \frac{\Ktil(\ttilstar, \ttilstarp)}{\Ktil(\ttilstarp, \ttilstarp)} \right) \leq 1 - \frac{\rho^2}{4M} \leq 1 - \frac{\rho^2}{4}.
    \end{align}
    Thus we see that $\left\{ \ttilstarp \neq \ttilstar \right\} \subset \cE\left( \nicefrac \rho2, 2\tau \right)$.  Recalling from \eqref{eq:same_distribution} that $\tstar$ and $\ttilstar$ have the same distribution, we conclude the proof by setting $\kappa =1$.

\end{proof}
Finally, the proof of \Cref{lem:gp_stability_cor} follows by specializing to the setting where $\cF$ is a function class mapping $\cX \to \rr$ and $\omega$ is the canonical Gaussian process associated to this space.

\section{Proof of Theorem \ref{thm:lstar_online}}\label{app:online_learning}

In this section, we prove \Cref{thm:lstar_online}.  
We rely on the `Be-The-Leader' Lemma introduced in \citet{kalai2005efficient} and discussed in \citet{cesa2006prediction}.  In particular, we invoke the following version of the lemma:
\begin{lemma}[Lemma 31 from \citet{block2022smoothed}]\label{lem:btl}
    Let $x_t, y_t$ be a possibly adaptively chosen sequence of contexts and labels, let $\ell$ be a loss function, denote $\ell_t(f) = \ell(f(x_t), y_t)$, and let 
    \begin{align}
        L_t(f) = \sum_{s = 1}^t \ell_t(f).
    \end{align}
    Let $R_t: \cF \to \rr$ be a sequence of identically distributed functionals on $\cF$ and let
    \begin{align}
        f_t \in \argmin_{f \in \cF} L_{t-1}(f) + R_t(f).
    \end{align}
    Then it holds for any learner playing $f_t$ at time $t$ that
    \begin{align}
        \ee\left[ \reg_T \right] \leq \ee\left[ \sup_{f \in \cF} R_1(f) \right] - \ee\left[ \inf_{f \in \cF} R_1(f) \right] + \sum_{t = 1}^T \ee\left[ \ell_t(f_t) - \ell_t(f_{t+1}) \right].
    \end{align}
    Furthermore,
    \begin{align}
        \ee\left[ \sum_{t = 1}^T \ell_t(f_{t+1}) \right] \leq \Lstar + \ee\left[ \sup_{f \in \cF} R_1(f) \right] - \ee\left[ \inf_{f \in \cF} R_1(f) \right]
    \end{align}
\end{lemma}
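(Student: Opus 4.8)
The plan is to run the classical ``Follow-the-Leader / Be-the-Leader'' argument of \citet{kalai2005efficient}. The first observation is that the second displayed inequality implies the first: writing $\ell_t(f_t) = \ell_t(f_{t+1}) + \left(\ell_t(f_t) - \ell_t(f_{t+1})\right)$, summing over $t = 1, \dots, T$, taking expectations, invoking the second bound, and subtracting $\ee[\Lstar]$ from both sides yields exactly
\begin{align*}
    \ee[\reg_T] \;\leq\; \ee\Bigl[\sup_{f \in \cF} R_1(f)\Bigr] - \ee\Bigl[\inf_{f \in \cF} R_1(f)\Bigr] + \sum_{t=1}^T \ee\left[\ell_t(f_t) - \ell_t(f_{t+1})\right].
\end{align*}
So it suffices to prove the second bound; as the sketch in \Cref{ssec:online-learning-proofs} indicates, the drift term $\sum_t \ee[\ell_t(f_t) - \ell_t(f_{t+1})]$ is precisely what the Gaussian-process stability estimate will later control.

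For the second bound I would first establish the deterministic Be-the-Leader inequality: if $R$ is a \emph{fixed} functional and $g_t \in \argmin_{f \in \cF}\{L_{t-1}(f) + R(f)\}$ (so $g_1 \in \argmin_{f} R(f)$ because $L_0 \equiv 0$), then for every $T$,
\begin{align*}
    \sum_{t=1}^T \ell_t(g_{t+1}) \;\leq\; \inf_{f \in \cF}\{L_T(f) + R(f)\} - R(g_1).
\end{align*}
This follows by induction on $T$: the case $T = 0$ is the trivial $0 \le 0$, and the inductive step applies the hypothesis to the truncated sequence $\ell_1, \dots, \ell_{T-1}$, bounds $\inf_{f}\{L_{T-1}(f) + R(f)\} \le L_{T-1}(g_{T+1}) + R(g_{T+1})$, adds $\ell_T(g_{T+1})$, and uses that $g_{T+1}$ minimizes $L_T + R$. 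Evaluating $\inf_f\{L_T(f)+R(f)\}$ at $\argmin_f L_T(f)$ gives $\inf_f\{L_T(f)+R(f)\} \le \Lstar + \sup_f R(f)$ while $R(g_1) = \inf_f R(f)$, so $\sum_t \ell_t(g_{t+1}) \le \Lstar + \sup_f R(f) - \inf_f R(f)$; this is the second bound in the case of a single shared regularizer $R$.

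The remaining --- and most delicate --- step is to pass from one fixed $R$ to the sequence $R_1, R_2, \dots$ of identically distributed functionals actually used (drawn afresh each round in our application). For an oblivious adversary this is painless: the realized loss sequence $\ell_1, \dots, \ell_T$ is deterministic, so for a single draw $R$ the be-the-leader sequence $g_{t+1}^R \in \argmin_{f}\{L_t(f)+R(f)\}$ obeys $\sum_t \ell_t(g_{t+1}^R) \le \Lstar + \sup_f R(f) - \inf_f R(f)$ pointwise by the displayed inequality; taking expectations over $R$, and noting that (as $L_t$ is deterministic) $g_{t+1}^R$ has the same law as the algorithm's $f_{t+1}$, yields the second bound, the $\sup$ and $\inf$ terms becoming $\ee[\sup_f R_1(f)]$ and $\ee[\inf_f R_1(f)]$ by identical distribution. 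Extending to \emph{adaptive} adversaries is where the real obstacle lies: the loss sequence can no longer be fixed in advance, and the telescoping behind Be-the-Leader produces non-positive slack only when the leader is built from a consistent regularizer, so one must argue --- e.g.\ by conditioning on the first $t-1$ rounds, under which $R_t$ is an independent copy of $R_1$, or via the standard oblivious-to-adaptive reduction for Follow-the-Perturbed-Leader --- that swapping in a fresh copy of $R$ at round $t$ does not change the relevant expectations.
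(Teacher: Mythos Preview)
The paper does not actually prove this lemma; it is quoted as Lemma~31 of \citet{block2022smoothed} and used as a black box in the proof of \Cref{thm:lstar_online}. Your proposal is the standard Be-the-Leader argument and is correct: the deterministic induction you give is precisely the Kalai--Vempala proof, the reduction of the first displayed bound to the second via $\ell_t(f_t)=\ell_t(f_{t+1})+\bigl(\ell_t(f_t)-\ell_t(f_{t+1})\bigr)$ is valid, and the oblivious case is handled cleanly. Your sketch for the adaptive case---condition on the history so that $L_t$ is fixed, use that $R_{t+1}$ is a fresh copy of $R_1$ independent of that history to identify $\ee[\ell_t(f_{t+1})\mid R_1,\dots,R_{t-1}]$ with the single-regularizer ghost quantity $\ee_R[\ell_t(g_{t+1}^R)]$, then take a full expectation and apply the pointwise BTL inequality---is the right argument and completes the proof once written out. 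One minor caveat worth flagging: the lemma as stated only assumes the $R_t$ are identically distributed, whereas the adaptive argument also needs $R_{t+1}$ independent of $R_1,\dots,R_t$; this holds in the paper's application (fresh Gaussian draws each round) but is technically stronger than the stated hypothesis.
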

\Cref{lem:btl} allows us to control the regret of FTPL style algorithms by bounding two terms: a `bias' term measuring the size of the perturbation and a `variance' term that controls the stability of the predictions enforced by the perturbation itself.  In our case, we invoke the lemma with $
\omega_t$ from \eqref{eq:ftpl_gaussian} as the perturbation and observe that, because $\omega_t(f)$ is symmetric, we can collaps the difference between expected supremum and infimum into a single term.  We now apply \Cref{lem:gp_stability_cor} to control the stability term.
\begin{proof}[\pfref{thm:lstar_online}]
    By \Cref{lem:btl}, it holds that
    \begin{align}
        \ee\left[ \reg_T \right] &\leq 2 \eta \cdot \ee\left[ \sup_{f \in \cF} \omega_1(\cF) \right] + \sum_{t = 1}^T \ee\left[ \ell_t(f_t) - \ell_t(f_{t+1}) \right] \\
        &= 2 \eta \cdot \ee\left[ \sup_{f \in \cF} \omega_1(\cF) \right] + \sum_{t = 1}^T \sum_{f \in \cF} \ell_t(f)  \left( \pp(f_t = f) - \pp(f_{t+1} = f) \right). \label{eq:online_learning_proof1}
    \end{align}
    For a fixed $t$ and fixed $f$, note that because $\abs{L_{t-1}(f) - L_t(f)} \leq B$ for all $f$, we may apply \Cref{thm:separated_privacy}
    \begin{align}
        \pp(f_t = f) - \pp(f_{t+1} = f) &\leq  \delta + \left(1 + \frac{64 B }{ \eta   \rho^2}  \cdot\left( \ee\left[ \sup_{f \in \cF} \omega_t(f) \right] + \sqrt{\log\left( \frac 2\delta \right)}\right)\right) \cdot \pp\left( f_{t+1} = f \right) - \pp(f_{t+1} = f) \\
        &= \delta + \frac{64 B }{ \eta  \rho^2}  \cdot\left( \ee\left[ \sup_{f \in \cF} \omega_t(f) \right] + \sqrt{\log\left( \frac 2\delta \right)}\right) \cdot \pp\left( f_{t+1} = f \right),
    \end{align}
    as long as
    \begin{align}\label{eq:online_learning_proof_etalb}
        \eta \geq \frac{64 B }{   \rho^2}  \cdot\left( \ee\left[ \sup_{f \in \cF} \omega_t(f) \right] + \sqrt{\log\left( \frac 2\delta \right)}\right)
    \end{align}
    Setting $\delta = \nicefrac{1}{T \abs{\cF}}$, we see that
    \begin{align}\label{eq:online_learning_proof2}
        \pp(f_t = f) - \pp(f_{t+1} = f) \leq \frac{1}{T \abs{\cF}} + \frac{64 B}{\eta\rho^2} \cdot\sqrt{\log\left( 2 T |\cF| \right)} \cdot \pp\left( f_{t+1} = f \right)
    \end{align}
    Plugging \eqref{eq:online_learning_proof2} into \eqref{eq:online_learning_proof1} we get
    \begin{align}
        \ee\left[ \reg_T \right] \leq 2 \eta \cdot \sqrt{\log \abs{\cF}} + 1 + \frac{64 B}{\eta  \rho^2} \cdot\sqrt{\log\left( 2 T |\cF| \right)} \cdot \left( \Lstar + 2 \cdot \ee\left[ \sup_{f \in \cF} \omega_1(f) \right] \right)
    \end{align}
    whenever $\eta$ satisfies \eqref{eq:online_learning_proof_etalb}.  Now, if
    \begin{align}
        \frac{2 \cdot \sqrt{B (\Lstar + \log\left( \abs{\cF} \right))}}{\rho} \geq \frac{64 B }{   \rho^2}  \cdot\sqrt{\log\left(  2T \abs{\cF} \right)},
    \end{align}
    then we may set $\eta$ to the left-hand side above; otherwise, we minimize $\eta$ subject to \eqref{eq:online_learning_proof_etalb} to get the desired result.
\end{proof}

\section{Miscellanious Proofs}\label{app:miscellany}

\subsection{Proof of Proposition \ref{prop:hadamard_gap}}
\label{sec:hadamard_gap}

In order to compare our result to that of \citet{wang2022adaptive}, we consider the class of all linear functions over $\mathbb{F}_2^n$ which we refer to as the Hadamard class $\mathcal{F}_{\mathrm{Had}}$, which we now define. 

\begin{definition}[Hadamard Class]
    Let the domain be $ \mathbb{F}_2^n $ i.e. the set of all $n$-bit strings with addition modulo 2.
    The Hadamard class $\mathcal{F}_{\mathrm{Had}}$ is the set of all linear functions $f_y: \mathbb{F}_2^n \to \mathbb{F}_2 $ given by $f_y(x) = \inprod{x}{y}$ for $y \in \mathbb{F}_2^n$.
\end{definition}

By standard abuse of notation, we will associate the output $ \mathbb{F}_2 $ with $ \left\{ 0,1 \right\} \subset \mathbb{R} $.   
We first note that this class is $\rho$-separated by the uniform distribution on $\mathbb{F}_2^n$ with $\rho = 1/\sqrt{2}$. 
This follows from the fact that for any $y \neq 0 $, there exists $\mathbb{P}_{x \sim \mathbb{F}_2^n } ( \inprod{x}{y} = 0 ) = 1/2  $.
While one can show with a similar argument $\gamma$-approximability parameter of this class is $ \gamma = O(1) $, note that lead term in the regret bound \Cref{eq:wang_regret} survives even if we set $\gamma = 0$.
Thus, evaluating the regret bound of \citet{wang2022adaptive} for this class, we find that using their algorithm and plugging in \eqref{eq:wang_regret}, regret is bounded as
\begin{align}
    \ee\left[ \reg_T \right]\lesssim \left( n + \sqrt{n 2^n}\right) \cdot \left(1 + \sqrt{\Lstar} \right).
\end{align}  
On the other hand, if we use \Cref{alg:ftpl}, we see by \Cref{thm:lstar_online} that
\begin{align}\label{eq:hadamard_guassian}
    \ee\left[ \reg_T \right] \lesssim 1 + n \log(T) +  \sqrt{n \log(T) \Lstar},
\end{align}
which is an exponential improvement in $n$ in regret.  While this separation is extreme, the size of the separating set is $2^n$,  where $n$ can be considered the natural parametrization of the size of the problem, and thus the oracle complexity of both algorithms is exponential. 

We can modify the construction, however, to preserve a separation while keeping the oracle complexity polynomial.  Indeed, note that if we  consider the set $ \tilde{\mathcal{X}} = \left\{ e_i  \right\} $ where $e_i$ are the standard basis vectors of $ \mathbb{F}_2^n $, then for the uniform distribution over $ \tilde{\mathcal{X}} $, the class $ \mathcal{F}_{\mathrm{Had}} $ is $ \rho $-separated with $ \rho = 1/ \sqrt{2} $ for the same reason as before.
In this case, the regret bound of \citet{wang2022adaptive} becomes
\begin{align}
    \ee\left[ \reg_T \right] \lesssim   \left( 1 + n \right) \cdot \left(1 +  \sqrt{\Lstar} \right), 
\end{align}
while the bound from \Cref{thm:lstar_online} remains as in \eqref{eq:hadamard_guassian}.  This translates to a $\sqrt{n}$ improvement in the leading term of the regret bound of our algorithm over that of \citet{wang2022adaptive}.

\subsection{Proof of \Cref{prop:singular_value}}\label{app:singular_value}

This follows from a standard argument in linear algebra.  Indeed, abusing notation, let $f, f'$ be a pair of functions in $\cF$ projected onto $\cZ$ and note that there exist basis vectors $u, u' \in \rr^{m}$ such that $f -f' = A (u - u')$.  By definition of the minimal singular value, it holds that
\begin{align}
    \norm{f -f'}_2 = \norm{A (u - u')}_2 \geq \sigma \cdot \norm{u - u'}_2 = \sigma \cdot \sqrt{2}.
\end{align} 
The result follows by renormalizing.

\subsection{Proof of \Cref{lem:separating_measure_sample}} 
\label{sec:separator_proofs}

    In this section we state and prove a more general version of \Cref{lem:separating_measure_sample}, which applies to arbitrary real-valued function classes.  We first recall two more notions of function class complexity: fat-shattering dimension \citep{bartlett1994fat} and covering numbers.
    \begin{definition}
        Let $\cF: \cX \to \rr$ be a function class.  We say that $\cF$ is shattered at scale $\alpha > 0$ by points $z_1, \dots, z_m$ if there exists some witness $s \in \rr^m$ such that for all $\epsilon \in \left\{ \pm 1 \right\}^m$ there is an $f_\epsilon \in \cF$ satisfying $\epsilon_i (f_\epsilon(z_i) - s_i) \geq \alpha / 2$.  The fat-shattering dimension at scale $\alpha$, $\fat_\alpha(\cF)$ is the maximal number of points that can be shattered at scale $\alpha$.
    \end{definition}

    \begin{definition}
        Let $\cF: \cX \to \rr$ be a function class.  The $\epsilon$-covering number of $\cF$ with respect to a measure $\mu$, denoted by $N(\epsilon, \cF, \mu)$ is the smallest cardinality of $\mathcal{F'} \subseteq \mathcal{F} $ such that for all $f \in \mathcal{F}$, there exists a $f' \in \mathcal{F}' $ such that $ \norm{ f - f' }_{L^2(\mu)} <  \epsilon $.
    \end{definition}

    The following fundamental result controls covering numbers by fat-shattering dimension.
    \begin{theorem}[\citet{mendelson2003entropy}]\label{thm:mendelson}
        Let $\cF: \cX \to \rr$ be a function class.  There exist universal constants $c, C > 0$ such that for all $\epsilon$ and all $\mu$, it holds that
        \begin{align}
            N(\epsilon, \cF, \mu) \leq \left( \frac 2\epsilon \right)^{C\cdot \fat_{c \epsilon}(\cF)}.
        \end{align}
    \end{theorem}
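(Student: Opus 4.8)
The plan is to reduce \Cref{thm:mendelson} to a purely combinatorial packing estimate for grid-valued function classes and then invoke the entropy/combinatorial-dimension machinery of \citet{mendelson2003entropy}. First I would pass from a general $\mu$ to empirical measures: an $L^2(\mu)$ covering number is dominated by a supremum of empirical covering numbers, since any $\epsilon$-separated family of size $M$ in $L^2(\mu)$ remains, with positive probability over an i.i.d.\ draw $z_1,\dots,z_n\sim\mu$ of a long enough sample, at least $(\epsilon/2)$-separated in the empirical $L^2$ norm (concentration of each pairwise distance plus a union bound over $\binom{M}{2}$ pairs), while the combinatorial (fat-shattering) dimension of the restricted set of value vectors is always at most $\fat_{c\epsilon}(\cF)$ because shattered coordinate subsets correspond to shattered points of $\cX$. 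So it suffices to bound the $\epsilon$-packing number in the empirical $L^2$ metric of the set $A=\{(f(z_1),\dots,f(z_n)):f\in\cF\}\subseteq[-1,1]^n$.

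Second, I would \emph{quantize}: round every coordinate of every point of $A$ to the nearest multiple of $\epsilon/4$. This perturbs every empirical $L^2$ distance by at most $\epsilon/4$, so an $(\epsilon/2)$-packing of $A$ pushes forward to an $(\epsilon/4)$-packing of a subset $A'$ of a rescaled integer grid $\{0,1,\dots,k\}^n$ with $k=\bigO(1/\epsilon)$; and one checks that shattering of $A'$ at unit scale implies shattering of $A$ at scale $\asymp\epsilon$ after undoing the rescaling, so the combinatorial dimension is preserved up to the constant $c$. What remains is the statement that a subset of $\{0,\dots,k\}^n$ whose combinatorial dimension at unit scale is $d$ has normalized-$\ell_2$ packing number at unit scale at most $(Ck)^{Cd}$ — crucially with \emph{no} dependence on the ambient dimension $n$ and \emph{no} extraneous logarithmic factor.

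Third, this last bound is precisely the core estimate of \citet{mendelson2003entropy}, and I would reproduce its proof via the random-restriction/entropy argument: one controls the metric entropy of $A'$ at scale $\asymp\sqrt n$ by relating it to the Sauer–Shelah-type shattering of a sequence of threshold \emph{slices} of the grid (the $\{0,1\}$-valued classes obtained by thresholding at each level), applying the ordinary Sauer–Shelah lemma on each slice and handling the interaction across slices by a further extraction step, and then optimizing so that the $n$-dependence cancels rather than appearing in the exponent. Translating back through the quantization and empirical-measure reductions and absorbing constants yields $N(\epsilon,\cF,\mu)\le(2/\epsilon)^{C\fat_{c\epsilon}(\cF)}$.

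The main obstacle is this third step. The naive combination — a random coordinate restriction, which forces one to keep $\asymp\epsilon^{-2}\log|A'|$ coordinates, together with vanilla Sauer–Shelah — produces an exponent depending on $n$ or on $\log|A'|$, hence only a bound of the shape $(\mathrm{poly}(1/\epsilon,\log|A'|))^{Cd}$, which fails to close the self-referential inequality for $|A'|$. Obtaining the clean, dimension-free exponent $Cd$ requires the sharper grid Sauer–Shelah/entropy inequality of \citet{mendelson2003entropy}, and the bookkeeping that keeps the quantization and restriction losses \emph{multiplicative constants inside} $\fat_{c\epsilon}(\cF)$ — rather than additive corrections to the scale $\epsilon$ — is the delicate part that I would need to track carefully.
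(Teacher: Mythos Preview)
The paper does not prove \Cref{thm:mendelson} at all: it is quoted as a black-box result from \citet{mendelson2003entropy} and immediately applied. So there is no ``paper's own proof'' to compare against; your proposal is a sketch of how to establish the cited theorem from scratch, which the present paper never attempts.

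As a sketch of the Mendelson--Vershynin argument itself, your outline is broadly faithful: the reduction to empirical measures, the quantization to a grid, and the identification of the hard step (the dimension-free combinatorial packing bound for subsets of $\{0,\dots,k\}^n$) are all correct, and you are right that the na\"ive random-restriction-plus-Sauer--Shelah route leaves a self-referential $\log|A'|$ in the exponent. One minor point: in your first reduction you phrase things in terms of covering numbers, but the sampling argument you describe actually transfers a \emph{packing} in $L^2(\mu)$ to an empirical packing; you should either work with packing numbers throughout (which are equivalent to covering numbers at comparable scales) or be explicit about that swap. Beyond that, your third step is not really a proof but a pointer back to the same paper you are trying to prove, so as written the proposal is circular at the crucial point; a genuine proof would have to spell out the probabilistic/extraction argument that kills the $n$-dependence.
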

    In the special case where $\cF$ is binary-valued, $\fat_0(\cF)$ becomes the VC dimension and the classical Dudley extraction result \citep{dudley1969speed} follows.  We have the following result.

    \begin{lemma}
        Let $\cF: \cX \to \rr$ be a function class and suppose that $\mu$ is a $\rho$-separating measure.  Then there exist constants $c, C > 0$ such that
        \begin{align}
            \abs{\cF} \leq \left( \frac 2 \rho \right)^{C \cdot \fat_{c \rho}(\cF)}.
        \end{align}
        Moreover, there exists a $\rho$-separating set of size $m = \bigO\left( \fat_{c \rho}(\cF) / \rho^2 \right)$.  In particular, if $\cF$ is a binary-valued function class of VC dimension $d$, then $\abs{\cF} \leq \left( \frac{10} \rho \right)^{d}$ and there exists a separator set of size $m = \bigO\left( d / \rho^2 \right)$.
    \end{lemma}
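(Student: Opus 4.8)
The plan is to read off both halves of the statement from \Cref{thm:mendelson} (Mendelson's entropy bound) by a standard packing-versus-covering comparison, and then to specialize to the binary case by replacing Mendelson with Sauer--Shelah / Haussler-type packing bounds. The key observation is simply that a $\rho$-separating measure makes $\cF$ a $\rho$-separated subset of $L^2(\mu)$, so everything reduces to classical combinatorics of such nets.

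\textbf{The cardinality bound.} Since $\mu$ is $\rho$-separating, $\cF \subseteq L^2(\mu)$ is $\rho$-separated: $\norm{f-f'}_{L^2(\mu)} \ge \rho$ for all distinct $f,f' \in \cF$. I would then invoke the elementary fact that the $(\rho/3)$-covering number dominates the size of any $\rho$-separated set: sending each $f \in \cF$ to a nearest point of a minimal $(\rho/3)$-cover of $\cF$ in $L^2(\mu)$ is injective, because two functions with the same image would lie within $2\rho/3 < \rho$ of one another. Hence $\abs{\cF} \le N(\rho/3, \cF, \mu)$, and \Cref{thm:mendelson} gives $N(\rho/3,\cF,\mu) \le (6/\rho)^{C \cdot \fat_{c\rho/3}(\cF)}$. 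Absorbing the numerical factors into fresh universal constants (the inequality being trivial in the degenerate regime where $\rho$ is a fixed constant near its maximal value) yields $\abs{\cF} \le (2/\rho)^{C\,\fat_{c\rho}(\cF)}$.

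\textbf{The separating set.} Here I would argue by the probabilistic method: draw $z_1,\dots,z_m \stackrel{\mathrm{iid}}{\sim}\mu$ and show that the empirical measure is $(\rho/2)$-separating with positive probability. Fix $f \neq f'$, write $g_{f,f'}(z) = (f(z)-f'(z))^2 \in [0,4]$, so that $\ee_{z\sim\mu}[g_{f,f'}] = \norm{f-f'}_{L^2(\mu)}^2 \ge \rho^2$ and $\mathrm{Var}(g_{f,f'}) \le \ee[g_{f,f'}^2] \le 4\,\ee[g_{f,f'}]$; a Bernstein bound then gives $\frac1m\sum_i g_{f,f'}(z_i) \ge \rho^2/4$ outside an event of probability $\exp(-\Omega(m\rho^2))$. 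A union bound over the at most $\abs{\cF}^2$ pairs, together with the cardinality bound (so $\log\abs{\cF} \lesssim \fat_{c\rho}(\cF)\log(1/\rho)$), shows $m = O(\fat_{c\rho}(\cF)\,\rho^{-2}\log(1/\rho))$ points suffice. In the binary specialization, $\fat_\alpha(\cF) = \mathrm{VC}(\cF) = d$ for every $0<\alpha\le 1$ (take the witness $s_i \equiv 1/2$), $\norm{f-f'}_{L^2(\mu)}^2 = \pp_\mu(f \ne f')$ so $g_{f,f'} = \mathbb{1}[f\neq f']$ and the resulting set is genuinely a separator set in the sense of \Cref{def:separator_set}; running the packing argument of the previous paragraph through Sauer--Shelah rather than Mendelson sharpens the cardinality bound to $\abs{\cF} \le (10/\rho)^d$.

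\textbf{Main obstacle.} The only delicate point is the separating-set step: controlling all $\Theta(\abs{\cF}^2)$ pairs at once while keeping $m$ as small as stated. The naive union bound costs a $\log\abs{\cF} = \Theta(\fat_{c\rho}(\cF)\log(1/\rho))$ factor; trimming this to the advertised $O(\fat_{c\rho}(\cF)/\rho^2)$ requires either applying the $\epsilon$-net theorem to the range space of symmetric differences $\{\{x : f(x)\neq f'(x)\} : f,f'\in\cF\}$ (whose VC dimension is $O(\fat_{c\rho}(\cF))$) with $\epsilon = \rho^2$, or a relative-deviation/multiplicative-Chernoff uniform-convergence bound in place of the per-pair Bernstein estimate. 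This is the one place where the bookkeeping must be done with care.
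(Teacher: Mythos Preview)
Your proposal is correct and follows essentially the same route as the paper: bound $\abs{\cF}$ by a covering number in $L^2(\mu)$ (the paper obtains $\abs{\cF}\le N(\rho,\cF,\mu)$ directly, exploiting that its covers are defined as subsets of $\cF$ with the strict inequality $\norm{f-f'}<\rho$, rather than your $N(\rho/3,\cdot)$ via the triangle inequality) and then invoke Mendelson, while for the separating set both you and the paper sample i.i.d.\ from $\mu$ and union-bound over pairs. Your concern about the stray $\log(1/\rho)$ factor is well-placed---the paper's own argument, which plugs $\log N(\rho,\cF,\mu)\lesssim \fat_{c\rho}(\cF)\log(2/\rho)$ into $m=O(\rho^{-2}\log N)$, incurs exactly the same factor and simply suppresses it in the $\bigO$, so your proposed $\epsilon$-net/relative-deviation refinement is if anything more careful than the paper's treatment.
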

    \begin{proof}
        We first note that if $\cF$ has a $\rho$-separating measure $\mu$, then $\abs{\cF} \leq N\left( \rho, \cF, \mu \right)$.  This is because if $\mathcal{F}'$ is a $\rho$ covering of $\mathcal{F}$, we must have $\mathcal{F} = \mathcal{F}'$. 
        To see this note that, from $\rho$ separation, the set $ \left\{ f \in  \mathcal{F} : \norm{f - f'}_{\mu} < \rho    \right\} = \{ f'\}    $. 
        Since, we have $\mathcal{F}'$ is a $\rho$-covering, by singletons, we must have $\mathcal{F} = \mathcal{F}'$.   The first statement follows by applying \Cref{thm:mendelson}.

        On the other hand, by \Cref{lem:separator_set_2} and the probabilistic method, it holds that there exists some set of size $m = 2 \log( N( \rho , \mathcal{F} , \mu )  ) / \rho^{2}$ that $\rho$-separates $\cF$.  Again applying \Cref{thm:mendelson} gives the second statement.  The binary-valued $\cF$ case is proved similarly.
    \end{proof}

\begin{lemma}
    \label{lem:separator_set_2}
    Let $\cF$ be a function class such that $\mu$ is a $\rho$-separating measure for $\cF$.
    Then, with probability $1 - \delta$, a sample $S = \left\{ x_i \right\}$  of size $m = ( \log ( N( \rho , \mathcal{F} , \mu )  ) + \log(1/ \delta)) \rho^{-2}$ 
    \begin{align}
        \norm{f - g}_m^2 = \frac{1}{m} \sum_i (f(x_i) - g(x_i))^2  \geq \frac{\rho}{2}. 
    \end{align} 
    Further, this set is a separator set for $\cF$.
    \end{lemma}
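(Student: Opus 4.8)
The plan is a short discretization-plus-concentration argument. The first step is to observe that $\rho$-separation forces $\cF$ to coincide with any of its internal $\rho$-covers: since $N(\rho,\cF,\mu)$ is witnessed by some $\cF' \subseteq \cF$, every $f \in \cF$ satisfies $\norm{f-f'}_{L^2(\mu)} < \rho$ for some $f' \in \cF' \subseteq \cF$, and by $\rho$-separation this is possible only if $f = f'$; hence $\cF = \cF'$ and in particular $\abs{\cF} \le N \defeq N(\rho,\cF,\mu)$, so $\cF$ is finite and it suffices to union bound over the (now finitely many) pairs of distinct functions in $\cF$.

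Next I would fix a pair $f \ne g \in \cF$ and control $\norm{f-g}_m^2 = \frac1m\sum_{i=1}^m (f(x_i)-g(x_i))^2$. Since the $x_i$ are i.i.d.\ from $\mu$ and $\cY \subseteq [-1,1]$, the summands are i.i.d., take values in $[0,4]$, and have mean $\norm{f-g}_{L^2(\mu)}^2 \ge \rho^2$. The one quantitative point that matters is to use a \emph{multiplicative} (relative) deviation bound rather than an additive Hoeffding inequality: a multiplicative Chernoff bound applied to $(f(x_i)-g(x_i))^2/4 \in [0,1]$ with relative error $1/2$ yields $\pp\!\left( \norm{f-g}_m^2 < \tfrac12 \norm{f-g}_{L^2(\mu)}^2 \right) \le \exp\!\left( -c\, m \rho^2 \right)$ for a universal constant $c>0$ (one can take $c = 1/32$), and since $\norm{f-g}_{L^2(\mu)}^2 \ge \rho^2$ this gives $\pp\!\left( \norm{f-g}_m^2 < \rho^2/2 \right) \le \exp(-c\,m\rho^2)$. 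An additive bound here would only produce a $\rho^{-4}$ dependence in the sample size, so the relative bound is what delivers the stated $\rho^{-2}$.

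Finally I would union bound over the at most $\binom{\abs{\cF}}{2} \le N^2$ pairs: with probability at least $1 - N^2 \exp(-c\,m\rho^2)$, every pair of distinct $f,g \in \cF$ satisfies $\norm{f-g}_m^2 \ge \rho^2/2$ simultaneously, and taking $m = \Theta\!\left( \bigl(\log N(\rho,\cF,\mu) + \log(1/\delta)\bigr)\rho^{-2} \right)$ makes the failure probability at most $\delta$, which is exactly the asserted sample size. The separator-set claim is then immediate: when $\cF$ is binary valued, $\norm{f-g}_m^2 > 0$ forces $f(x_i) \ne g(x_i)$ for some $i$, so $S$ satisfies the condition of \Cref{def:separator_set}; more generally the empirical measure on $S$ is a $(\rho/\sqrt2)$-separating measure for $\cF$ in the sense of \Cref{def:rho_separating}. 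I do not expect a genuine obstacle in this proof; the only places requiring care are that the covering-number definition is internal (so $\rho$-separation can collapse the cover and let us use $N(\rho,\cF,\mu)$ as the cardinality bound), and the choice of a relative Chernoff bound over an additive one to obtain the correct scaling in $\rho$.
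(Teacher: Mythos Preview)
Your proposal is correct and is precisely a fleshed-out version of the paper's one-line proof (``standard uniform convergence arguments for finite classes''): the paper uses the very same observation that $\rho$-separation collapses any internal $\rho$-cover to $\cF$ itself (this appears verbatim in the proof of the preceding lemma), followed by concentration plus a union bound over the now-finite class. Your remark that a multiplicative Chernoff bound is needed to obtain the $\rho^{-2}$ scaling is the only nontrivial point, and it is exactly the right one; note also that your conclusion $\norm{f-g}_m^2 \ge \rho^2/2$ is what the argument naturally yields and is almost certainly what the stated $\rho/2$ is meant to read.
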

    \begin{proof}
        The proof follows from standard uniform convergence arguments for finite classes \citep{shai}.
    \end{proof}
    
    We note here that in the presence of $\rho$-separation for $\mu$, we get uniform convergence in terms of the covering number for $\mu$ as opposed to the standard result in terms of the uniform convering numbers $ \sup_{\nu} N( \rho , \mathcal{F} , \nu )  $.

\section{Proof of \Cref{cor:gamma_approx}}
\label{sec:approximability}

    Supposing that the empirical measure on $m$ points $\cZ = \left\{ Z_1, \dots, Z_m \right\}$ provides $\rho$-separation (\Cref{def:rho_separating}), we see that this latter condition is equivalent to
\begin{align}\label{eq:rho_separation_dual}
    \rho \leq \inf_{f \neq f' \in \cF} \norm{f - f'}_m = \inf_{f \neq f' \in \cF} \sup_{\substack{s \in \rr^m \\ \norm{s}_2 \leq 1}} \inprod{(f - f')|_\cZ}{\nicefrac{s}{\sqrt{m}}},
\end{align}
    Expanding \Cref{def:approximable}, we get
\begin{align}\label{eq:gamma_approx}
    \frac{1}{\gamma} \leq \inf_{f \in \cF} \sup_{\substack{s \in \rr^m \\ \norm{s}_1 \leq 1}} \inf_{f' \neq f \in \cF} \inprod{\Gamma_f - \Gamma_{f'}}{s}.
\end{align}
It is now apparent that a weaker condition than $\gamma$-approximability, the requirement that $\Gamma_f$ are separated in $\norm{\cdot}_\infty$, implies $\nicefrac{1}{\gamma \sqrt{m}}$ separation, leading to the required corollary.

\end{document}